\newtheorem{theorem}{Theorem}
\newtheorem{assumption}{Assumption}
\newtheorem{lemma}{Lemma}
\newcommand{\p}{\mathcal{P}}
\newcommand{\s}{\mathcal{S}}
\newcommand{\m}{\mathcal{M}}
\newcommand{\act}{\mathcal{A}}
\newcommand{\pol}{\pi}
\newcommand{\dis}{d_\pol}
\newcommand{\vp}{v_\pol}
\newcommand{\rp}{r_\pol}
\newcommand{\pp}{\p_\pol}
\newcommand{\E}{\mathbb{E}}
\newcommand{\Edis}{\E_{\dis}}
\newcommand{\Ep}{\E_\pi}
\newcommand{\fmat}{\Phi}
\newcommand{\fvec}{{\bf \phi}}
\newcommand{\R}{\mathbb{R}}
\newcommand{\prob}{\mathbb{P}}
\newcommand{\param}{\beta}
\newcommand{\proj}{\Pi}
\newcommand{\T}{\mathcal{T}}
\newcommand{\Tp}{\T^{\param}}
\newcommand{\w}{{\bf w}}
\newcommand{\e}{{\bf e}}
\newcommand{\wpi}{\w_{\pol}}
\newcommand{\keyA}{\mathbf{A}}
\newcommand{\keyb}{\mathbf{b}}
\newcommand{\keyE}{\mathbf{E}}
\newcommand{\pbeta}{\pp^{\param}}
\newcommand{\norm}[1]{\left\lVert#1\right\rVert}
\newcommand\numberthis{\addtocounter{equation}{1}\tag{\theequation}}
\icmltitlerunning{Preferential Temporal Difference Learning}
\begin{document}

\twocolumn[
\icmltitle{Preferential Temporal Difference Learning}



\icmlsetsymbol{equal}{*}

\begin{icmlauthorlist}
\icmlauthor{Nishanth Anand}{to,ed}
\icmlauthor{Doina Precup}{to,ed,goo}
\end{icmlauthorlist}

\icmlaffiliation{to}{Mila (Quebec Artificial Intelligence Institute), Montreal, Canada}
\icmlaffiliation{goo}{Deepmind, Montreal, Canada}
\icmlaffiliation{ed}{School of Computer Science, McGill University, Montreal, Canada}

\icmlcorrespondingauthor{Nishanth Anand}{nishanth.anand@mail.mcgill.ca}

\icmlkeywords{Reinforcement Learning, RL, Temporal Difference Learning, PTD, Preferential TD, Function approximation, Partial observability}

\vskip 0.3in
]



\printAffiliationsAndNotice{}  

\begin{abstract}
Temporal-Difference (TD) learning is a general and very useful tool for estimating the value function of a given policy, which in turn is required to find good policies. Generally speaking, TD learning updates states whenever they are visited. When the agent lands in a state, its value can be used to compute the TD-error, which is then propagated to other states. However, it may be interesting, when computing updates, to take into account other information than whether a state is visited or not. For example, some states might be more important than others (such as states which are frequently seen in a successful trajectory). Or, some states might have unreliable value estimates (for example, due to partial observability or lack of data), making their values less desirable as targets. We propose an approach to re-weighting states used in TD updates, both when they are the input and when they provide the target for the update. We prove that our approach converges with linear function approximation and illustrate its desirable empirical behaviour compared to other TD-style methods.
\end{abstract}

\section{Introduction}
\label{section:introduction}

The value function is a crucial quantity in reinforcement learning (RL), summarizing the expected long-term return from a state or a state-action pair. The agent uses this knowledge to make informed action decisions. Temporal Difference (TD) learning methods~\citep{sutton1988learning} enable updating the value function before the end of an agent's trajectory by contrasting its return predictions over consecutive time steps, i.e., computing the {\em temporal difference error} (TD-error). State-of-the-art RL algorithms, e.g.~\citet{mnih2015human, schulman2017proximal} use this idea coupled with function approximation. 

TD-learning can be viewed as a way to approximate dynamic programming algorithms in Markovian environments~\citep{barnard1993temporal}. But, if the  Markovian assumption does not hold (as is the case when function approximation is used to estimate the value function), its use can create problems~\citep{gordon1996chattering,sutton2018reinforcement}. To see this, consider the situation depicted in Figure \ref{fig:linear_mdp_a}, where an agent starts in a fully observable state and chooses one of two available actions. Each action leads to a different long-term outcome, but the agent navigates through aliased states that are distinct but have the same representation before observing the outcome. This setting poses two challenges:

\textbf{1. Temporal credit assignment}: The starting state and the outcome state are temporally distant. Therefore, an efficient mechanism is required to propagate the credit (or blame) between them. 

\textbf{2. Partial observability}: With function approximation, updating one state affects the value prediction at other states. If the generalization is poor, TD-updates at partially observable states can introduce errors, which propagate to estimates at fully observable states.

TD($\lambda$) is a well known class of \textit{span independent algorithm}~\citep{van2015learning} for temporal credit assignment introduced by~\citet{sutton1988learning} and further developed in many subsequent works, e.g.~\citet{singh1996reinforcement, seijen2014true}, which uses a \textit{recency} heuristic: any TD-error is attributed to preceding states with an exponentially decaying weight. However, recency can lead to inefficient propagation of credit~\citep{aberdeen2004filtered, harutyunyan2019hindsight}. Generalized TD($\lambda$) with a state-dependent $\lambda$ can tackle this problem~\citep{sutton1995td, sutton1999between, sutton2016emphatic, yu2018generalized}. This approach mainly modifies the target used in the update, not the extent to which any state is updated. Besides, the update sequence may not converge. Emphatic TD~\citep{sutton2016emphatic} uses an independent interest function, not connected to the eligibility parameter, in order to modulate how much states on a trajectory are updated.

In this paper, we introduce and analyze a new algorithm: \textit{Preferential Temporal Difference (Preferential TD or PTD) learning}, which uses a single state-dependent \textit{preference function} to emphasize the importance of a state both as an input for the update, as well as a participant in the target. If a state has a low preference, it will not be updated much, and its value will also not be used much as a bootstrapping target. This mechanism allows, for example, \textit{skipping} states that are partially observable and propagating the information instead between fully observable states. Older POMDP literature~\citep{loch1998using, theocharous2004approximate} has demonstrated the utility of similar ideas empirically. We provide a convergence proof for the expected updates of Preferential TD in the linear function approximation setting and illustrate its behaviour in partially observable environments.

\begin{figure}[H]
    \centering
    \begin{subfigure}{0.9\columnwidth}
        \centering
        \includegraphics[width=0.8\columnwidth, height=3cm]{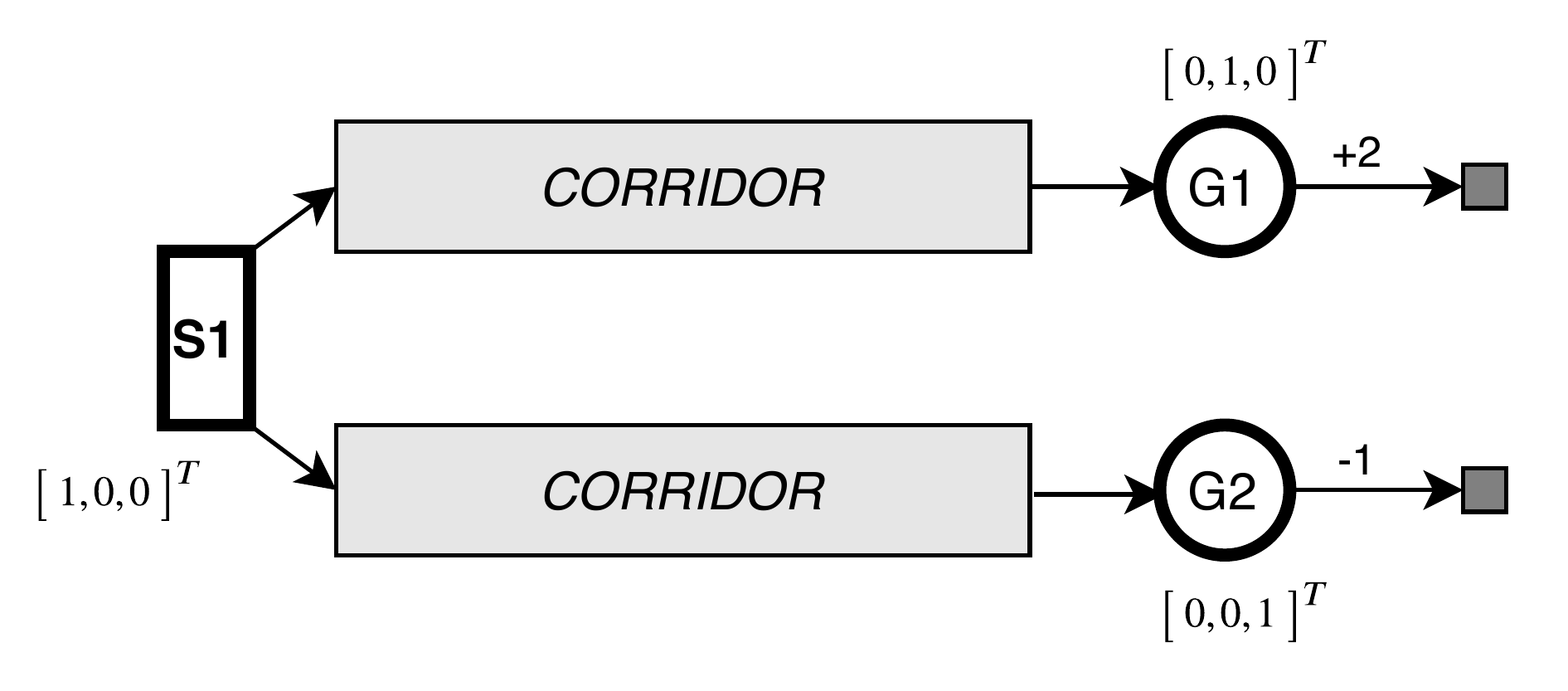}
        \caption{Task 1}
        \label{fig:linear_mdp_a}
    \end{subfigure}
    \vfill
    \begin{subfigure}{\columnwidth}
        \centering
        \includegraphics[width=\columnwidth, height=3cm]{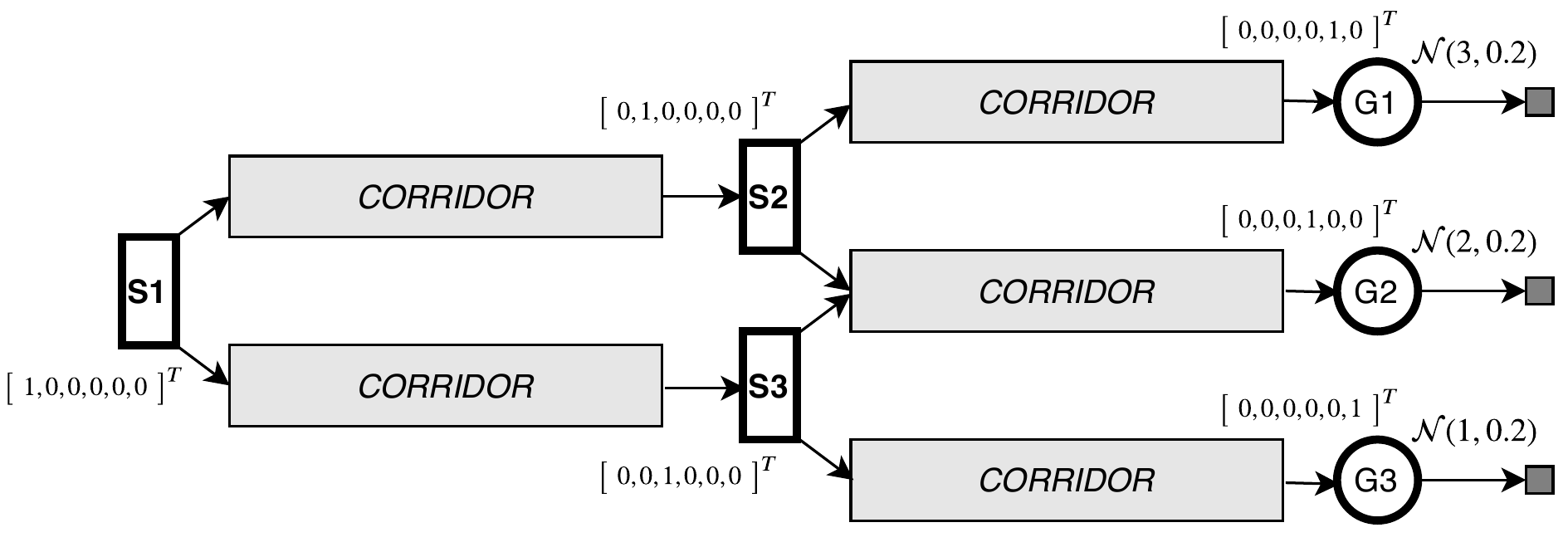}
        \caption{Task 2}
        \label{fig:linear_mdp_b}
    \end{subfigure}
    \caption{Delayed effect MDPs: decision states are shown as boxes, goal states are shown in circles. Feature vectors are specified next to the states. The corridor represents a chain of partially observable states.}
    \label{fig:linear_mdp}
\end{figure}

\section{Preliminaries and Notation}
\label{section:preliminaries}

A Markov Decision Process (MDP) is defined by a tuple $\m = (\s, \act, \p, r, \gamma)$, where $\s$ is the finite set of states, $\act$ is the finite set of actions, $\p(s'|s,a) = \prob\{s_{t+1}=s'|s_t=s, a_t=a\}$ is the transition model, $r:\s \times \act \to \mathbb{R}$ is the reward function, and $\gamma \in [0,1)$ is the discount factor. The agent interacts with the environment in state $s_t \in \s$ by selecting an action $a_t \in \act$ according to its policy, $\pol(a|s)=\prob\{a_t=a|s_t=s\}$. As a consequence, the agent transitions to a new state $s_{t+1}\sim \p(\cdot|s_t,a_t)$ and receives reward $r_{t+1}=r(s_t,a_t)$. We consider the policy evaluation setting, where the agent's goal is to estimate the value function:
\begin{equation}
\label{eq:value_function_definition}
    \vp(s) = \E_{\pol}[G_t | s_t = s],
\end{equation}
where $G_t = \sum_{i=t}^{\infty}\gamma^{i-t} r_{i+1}$ is the discounted return obtained by following the policy $\pol$ from state $s$. If $|\s|$ is very big, $\vp$ must be approximated by using a function approximator. We consider linear approximations:
\begin{equation}
\label{eq:state_value_linearFA}
    v_{\w}(s) = \w^T \fvec(s),
\end{equation}
where $\w \in \mathbb{R}^k$ is the parameter vector and $\fvec(s)$ is the feature vector for state $s$. Note that the linear case encompasses both the tabular case as well as the use of fixed non-linear features, as detailed in~\citet{sutton2018reinforcement}. By defining a matrix $\fmat$ whose rows are $\fvec^T(i),\ \forall i \in \s$, this approximation can be written as a vector in ${\mathbb R}^{|\s|}$ as ${\bf v}_{\w} = \fmat \w$.

TD($\lambda$)~\citep{sutton1984temporal, sutton1988learning} is an online algorithm for updating $\w$, which performs the following update after every time step: 
\begin{align}
    \label{eq:td_etrace}
    \e_t &= \gamma \lambda \e_{t-1} +  \fvec(s_t)\\
    \w_{t+1} &= \w_t + \alpha_t (r_{t+1} + \gamma \w^T \fvec(s_{t+1}) - \w^T\fvec(s_t)) \e_t, \nonumber
\end{align}
where $\alpha_t$ is the learning rate parameter, $\lambda \in [0,1]$ is the eligibility trace parameter, used to propagate TD-errors with exponential decay to states that are further back in time, and $\e_t$ is the eligibility trace.

\section{Preferential Temporal Difference Learning}
\label{section:PTD}

Let $\param:\s \to [0,1]$ be a preference function, which assigns a certain importance to each state. A preference of $\param(s)=0$ means that the value function will not be updated at all when $s$ is visited, while $\param(s)=1$ means $s$ will receive a full update. Note, however, that if $s$ is not updated, its value will be completely inaccurate, and hence it should not be used as a target for predecessor states because it would lead to a biased update. To prevent this, we can modify the return $G_t$ to a form that is similar to $\lambda$-returns~\citep{watkins1989learning, sutton2018reinforcement}, by bootstrapping according to the preference:
\begin{align}
    \label{eq:preferential_td_returns}
    G_t^{\param} &= r_{t+1} + \nonumber \\
    &  \gamma [\param(s_{t+1}) \w^T \fvec(s_{t+1}) + (1 - \param(s_{t+1})) G_{t+1}^{\param}].
\end{align}
The update corresponding to this return leads to the offline Preferential TD algorithm:
\begin{align}
    \label{eq:offline_PTD}
    \w_{t+1} &= \w_t + \alpha \param(s_t) (G_t^{\param} - \w^T\fvec(s_t)) \fvec(s_t), \nonumber \\
    &= \w_t + \\
    &\!\!\! \!\!\!\!\!\!\alpha_t \Big(\underbrace{\param(s_t) G_t^{\param} + (1 - \param(s_t)) \w^T\fvec(s_t)}_{target} - \w^T\fvec(s_t) \Big) \fvec(s_t). \nonumber
\end{align}
The expected target of offline PTD (cf. equation \ref{eq:offline_PTD}) can be written as a Bellman operator:
\begin{theorem}
The expected target in the forward view can be summarized using the operator $$\Tp {\bf v} = B (I - \gamma \pp (I-B))^{-1} (\rp + \gamma \pp B {\bf v}) + (I-B){\bf v},$$
where $B$ is the $|\s| \times |\s|$ diagonal matrix with $\param(s)$ on its diagonal and $\rp$ and $\pp$ are the state reward vector and state-to-state transition matrix for policy $\pi$.
\end{theorem}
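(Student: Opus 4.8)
The plan is to reduce the claim to the computation of a single vector. Write $\mathbf{v} = \fmat\w$ so that $\w^T\fvec(s) = \mathbf{v}(s)$, and let $\mathbf{g}\in\R^{|\s|}$ be the vector with components $\mathbf{g}(s) = \Ep[G_t^{\param}\mid s_t=s]$. From \eqref{eq:offline_PTD}, the expected target at state $s$ (viewed as a function of the current weights) is $\param(s)\mathbf{g}(s) + (1-\param(s))\mathbf{v}(s)$, i.e.\ $B\mathbf{g} + (I-B)\mathbf{v}$ in vector form. So it suffices to show $\mathbf{g} = (I-\gamma\pp(I-B))^{-1}(\rp+\gamma\pp B\mathbf{v})$, after which substitution into $B\mathbf{g}+(I-B)\mathbf{v}$ yields exactly $\Tp\mathbf{v}$. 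The three steps are: (i) check $\mathbf{g}$ is well defined; (ii) derive a linear fixed-point equation for $\mathbf{g}$ from the recursion \eqref{eq:preferential_td_returns}; (iii) invert the relevant matrix and solve.

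For step (i), unrolling \eqref{eq:preferential_td_returns} gives
$$G_t^{\param} = \sum_{n=0}^{\infty}\Big(\prod_{j=1}^{n}\gamma(1-\param(s_{t+j}))\Big)\big(r_{t+n+1}+\gamma\param(s_{t+n+1})\mathbf{v}(s_{t+n+1})\big),$$
with the empty product equal to $1$. Since $\s$ and $\act$ are finite, $r$ and $\mathbf{v}=\fmat\w$ are bounded, and each product lies in $[0,\gamma^n]$, the series is absolutely convergent and $\Ep|G_t^{\param}|<\infty$; hence $\mathbf{g}$ is well defined, and dominated convergence lets us interchange expectation with the series where needed. For step (ii), take $\Ep[\,\cdot\mid s_t=s]$ of \eqref{eq:preferential_td_returns}: the reward term contributes $\rp(s)$, the bootstrap term contributes $\gamma\sum_{s'}\pp(s'\mid s)\param(s')\mathbf{v}(s') = \gamma(\pp B\mathbf{v})(s)$, and for the recursive term I condition further on $s_{t+1}$. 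By the Markov property, $G_{t+1}^{\param}$ is a function of $s_{t+1},s_{t+2},\dots$ and the rewards thereafter, so it is conditionally independent of $s_t$ given $s_{t+1}$; by time-homogeneity of the chain induced by $\pol$, $\Ep[G_{t+1}^{\param}\mid s_{t+1}=s'] = \mathbf{g}(s')$. Thus the recursive term equals $\gamma\sum_{s'}\pp(s'\mid s)(1-\param(s'))\mathbf{g}(s') = \gamma(\pp(I-B)\mathbf{g})(s)$, and collecting terms in vector form,
$$\mathbf{g} = \rp + \gamma\pp B\mathbf{v} + \gamma\pp(I-B)\mathbf{g}.$$

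For step (iii), observe that $\gamma\pp(I-B)$ is nonnegative with row sums $\gamma\sum_{s'}\pp(s'\mid s)(1-\param(s'))\le\gamma<1$, so $\|\gamma\pp(I-B)\|_\infty<1$ and $I-\gamma\pp(I-B)$ is invertible with inverse $\sum_{k\ge0}(\gamma\pp(I-B))^k$ (this also shows $B=I$ recovers the usual Bellman operator and $B=0$ gives the identity, a useful sanity check). Solving the fixed-point equation gives $\mathbf{g} = (I-\gamma\pp(I-B))^{-1}(\rp+\gamma\pp B\mathbf{v})$, and substituting into $B\mathbf{g}+(I-B)\mathbf{v}$ produces $\Tp\mathbf{v}$.

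The main obstacle is step (ii): converting the sample-path recursion for $G_t^{\param}$ into the vector equation requires the integrability established in step (i) together with a careful application of the Markov property and stationarity, so that the conditional expectation $\Ep[G_{t+1}^{\param}\mid\cdot]$ collapses to the \emph{same} vector $\mathbf{g}$ rather than to a quantity that still depends on $s_t$ or earlier history. Steps (i) and (iii) are then routine (absolute convergence of a geometric-type series and a Neumann-series argument, respectively).
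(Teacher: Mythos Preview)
Your argument is correct. The paper's own proof is essentially the same idea but presented in the other direction: it writes the expected target directly as the infinite series
\[
(I-B)\mathbf{v} + B\Big(\rp + \gamma\pp B\mathbf{v} + \gamma\pp(I-B)\rp + \gamma^2\pp(I-B)\pp B\mathbf{v} + \cdots\Big),
\]
groups the reward terms and the value terms separately, and then recognizes each as a Neumann series for $(I-\gamma\pp(I-B))^{-1}$. You instead first derive the one-step fixed-point relation $\mathbf{g} = \rp + \gamma\pp B\mathbf{v} + \gamma\pp(I-B)\mathbf{g}$ and then invert. The two routes are equivalent (iterating your fixed-point equation reproduces the paper's series), but yours is tidier and more carefully justified: you explicitly verify integrability of $G_t^{\param}$, spell out the Markov/time-homogeneity step that makes $\Ep[G_{t+1}^{\param}\mid s_{t+1}=s']=\mathbf{g}(s')$, and establish invertibility via the $\|\cdot\|_\infty$ bound rather than simply invoking the Neumann series. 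The paper's version is shorter but leaves those points implicit.
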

We obtain the desired result by considering expected updates in vector form. The complete proof is provided in Appendix \ref{app:operator_proof}.

Using equation ~\ref{eq:preferential_td_returns}, one would need to wait until the end of the episode to compute $G_t^{\param}$. We can turn this into an online update rule using the eligibility trace mechanism~\citep{sutton1984temporal, sutton1988learning}:
\begin{align}
    \label{eq:preferential_etrace}
    \e_t &= \gamma (1- \param(s_t)) \e_{t-1} + \param(s_t) \fvec(s_t), \\
    \w_{t+1} &= \w_t + \alpha_t (r_{t+1} + \gamma \w^T \fvec(s_{t+1}) - \w^T\fvec(s_t)) \e_t, \nonumber
\end{align}
where $\e_t$ is the eligibility trace as before. The equivalence between the offline and online algorithm can be obtained following~\citet{sutton1988learning, sutton2018reinforcement}. 
Theorem~\ref{remark:forward_backward} in Section \ref{section:convergence} formalizes this equivalence. We can turn the update equations into an algorithm as shown in algorithm \ref{algo:ptd}.

\begin{algorithm}[H]
\caption{Preferential TD: Linear FA}
\begin{algorithmic}[1]
    \label{PTD_online}
    \STATE Input: $\pi$,$\gamma$,$\param$, $\fvec$
    \STATE Initialize: $\w_{-1}=0, e_{-1}=0$
    \STATE Output: $\w_T$
    \FOR{$t:0 \to T$}
        \STATE Take action $a \sim \pi(s_t)$ , observe $r_{t+1},s_{t+1}$
        \STATE $v(s_t)=\w_t^T \fvec(s_{t}) , v(s_{t+1})= \w_t^T \fvec(s_{t+1})$ 
        \STATE $\delta_t =  r_{t+1} + \gamma v(s_{t+1}) - v(s_t)$
        \STATE $e_t = \param(s_t) \fvec(s_t) + \gamma (1 - \param(s_t)) e_{t-1}$
        \STATE $\w_{t+1} \gets \w_t + \alpha_t \delta_t e_t$
    \ENDFOR
\end{algorithmic}
\label{algo:ptd}
\end{algorithm}

\section{Convergence of PTD}
\label{section:convergence}

We consider a finite, irreducible, aperiodic Markov chain. Let $\{s_t | t = 0, 1, 2, \dots \}$ be the sequence of states visited by the Markov chain and let $\dis(s)$ denote the steady-state probability of $s\in \s$. 
We assume that $\dis(s)>0,\ \forall s \in \s$. Let $D^{\pi}$ be the $|\s|\times|\s|$ diagonal matrix with $D^{\pi}_{i,i} = \dis(i)$. We denote the Euclidean norm on vectors or Euclidean-induced norm on matrices by $\norm{\cdot}$: $\norm{A} = \max_{\norm{x}=1} \norm{Ax}$. We make the following assumptions to establish the convergence result \footnote{Similar assumptions are made to establish the convergence proof of TD($\lambda$) in the linear function approximation setting for a constant $\lambda$ \citep{tsitsiklis1997analysis}.}.

\begin{assumption}
\label{assump:feature_assumption}
The feature matrix, $\fmat \in \mathbb{R}^{|\s|} \times \mathbb{R}^{k}$ is a full column rank matrix, i.e., the column vectors $\{\fvec_i | i=1 \dots k\}$ are linearly independent. Also, $\norm{\fmat} \leq M$, where $M$ is a constant.
\end{assumption}

\begin{assumption}
\label{assump:rapid_mixing}
The Markov chain is rapidly mixing:
$$|\pp(s_t = s| s_0) - \dis(s)| \leq C \rho^t,\ \forall s_0 \in \s, \rho < 1,$$ where $C$ is a constant. 
\end{assumption}

\begin{assumption}
\label{assump:step_size}
The sequence of step sizes satisfies the Robbins-Monro conditions: $$\sum _{{t=0}}^{{\infty }}\alpha_{t}=\infty \quad {\text{and}} \quad \sum _{{t=0}}^{{\infty }}\alpha_{t}^{2}<\infty.$$
\end{assumption}

The update equations of Preferential TD (cf. equation \ref{eq:preferential_etrace}) can be written as:
$$\w_{t+1} \gets \w_t + \alpha_t (b(X_t) - A(X_t) \w_t ),$$ where $b(X_t) = \e_t r_{t+1}$, $A(X_t) = \e_t(\fvec(s_t) - \gamma \fvec(s_{t+1}))^T$, $X_t = (s_t, s_{t+1}, e_t)$ and $\e_t$ is the eligibility trace of PTD. Let $\keyA = \Edis[A(X_t)]$ and $\keyb = \Edis[b(X_t)]$. Let $\pbeta$ be a new transition matrix that accounts for the termination due to bootstrapping and discounting \footnote{$\pbeta$ is similar to ETD's $\pp^{\lambda}$ \citep{sutton2016emphatic}, $\param(s)$ takes the role of $1 - \lambda(s)$ and we consider a constant discount setting.}, defined as: $\pbeta = \gamma \Big(\sum_{k=0}^{\infty} (\gamma \pp (I-B))^k \Big) \pp B$. This is a sub-stochastic matrix for $\gamma<1$ and a stochastic matrix when $\gamma=1$. 

\begin{lemma}
\label{remark:keyA_keyB_definition}
The expected quantities $\keyA$ and $\keyb$ are given by $\keyA = \fmat^T D^\pi B (I - \pbeta)$ and $\keyb = \fmat^T D^\pi B (I - \gamma \pp)^{-1} \rp$. 
\end{lemma}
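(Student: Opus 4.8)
The plan is to compute the two steady-state expectations directly, by unrolling the eligibility trace and then averaging over the stationary chain. Iterating the recursion in equation \ref{eq:preferential_etrace} gives $\e_t=\sum_{m=0}^{\infty}\gamma^m\big(\prod_{i=0}^{m-1}(1-\param(s_{t-i}))\big)\param(s_{t-m})\fvec(s_{t-m})$, a series that converges because every factor $1-\param(s)$ lies in $[0,1]$ and $\gamma<1$. Since $A(X_t)=\e_t(\fvec(s_t)-\gamma\fvec(s_{t+1}))^{T}$, and $\Edis[b(X_t)]=\Edis[\e_t\, r_{t+1}]=\Edis[\e_t\,\rp(s_t)]$ by the tower property together with $\E[r_{t+1}\mid s_t]=\rp(s_t)$, it suffices to evaluate the three stationary expectations $\Edis[\e_t\fvec(s_t)^{T}]$, $\Edis[\e_t\fvec(s_{t+1})^{T}]$ and $\Edis[\e_t\rp(s_t)]$.

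The core step is a term-by-term average of this series. For a fixed $m$, I would condition on the length-$m$ sub-path $s_{t-m}\to\cdots\to s_t$ drawn from the stationary chain: the anchor $s_{t-m}$ occurs with weight $\dis(s_{t-m})\param(s_{t-m})$, which assembles the prefactor $\fmat^{T}D^\pi B$; each of the $m$ transitions $s_{t-m+j}\to s_{t-m+j+1}$ contributes $\pp(s_{t-m+j},s_{t-m+j+1})\,(1-\param(s_{t-m+j+1}))$, i.e. one copy of the transfer matrix $\pp(I-B)$; and the terminal position carries $\fvec(s_t)^{T}$ (a trailing $\fmat$), or, after one further transition, $\fvec(s_{t+1})^{T}$ (a trailing $\pp\fmat$, with no extra $(I-B)$), or $\rp$. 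Summing the resulting geometric series $\sum_{m\ge0}(\gamma\pp(I-B))^m=(I-\gamma\pp(I-B))^{-1}$ — legitimate since $\gamma\pp(I-B)$ is $\gamma$ times a sub-stochastic matrix and hence has spectral radius below $1$ — gives $\Edis[\e_t\fvec(s_t)^{T}]=\fmat^{T}D^\pi B(I-\gamma\pp(I-B))^{-1}\fmat$, $\Edis[\e_t\fvec(s_{t+1})^{T}]=\fmat^{T}D^\pi B(I-\gamma\pp(I-B))^{-1}\pp\fmat$, and, by the identical computation carrying $\rp$ through the terminal step, the expression for $\keyb$.

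Subtracting the first two (with the weight $-\gamma$ on the second) gives
\begin{equation*}
\keyA=\fmat^{T}D^\pi B\,(I-\gamma\pp(I-B))^{-1}(I-\gamma\pp)\,\fmat,
\end{equation*}
so what remains is the matrix identity $(I-\gamma\pp(I-B))^{-1}(I-\gamma\pp)=I-\pbeta$. I would prove it by left-multiplying by $I-\gamma\pp(I-B)$ and checking that $(I-\gamma\pp(I-B))-\gamma\pp B=I-\gamma\pp$, which is immediate from $\pp(I-B)+\pp B=\pp$ together with the definition $\pbeta=\gamma(I-\gamma\pp(I-B))^{-1}\pp B$. This yields $\keyA=\fmat^{T}D^\pi B(I-\pbeta)\fmat$. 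As an independent check, one could instead read $\keyA$ and $\keyb$ off the forward-view operator $\Tp$ via the equivalence between the forward and backward views, but the trace computation above is self-contained.

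I expect the main obstacle to be the bookkeeping in this term-by-term average: one must keep straight that the anchored state $s_{t-m}$ is the one carrying $\param$ (so it contributes the diagonal factor $B$, not $I-B$), that every other visited state contributes a $1-\param$ factor (so the transfer matrix is $\pp(I-B)$, not $\pp$), and that interchanging $\Edis[\cdot]$ with the infinite sum is justified — here the geometric $\gamma(1-\param)$ decay of the trace together with rapid mixing (Assumption \ref{assump:rapid_mixing}) controls the tail, just as in the TD$(\lambda)$ analysis of \citet{tsitsiklis1997analysis}. Once the series is in closed form, passing to the compact $(I-\pbeta)$ expression is only the short identity above.
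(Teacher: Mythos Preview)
Your argument is correct and arrives at the same intermediate expression $\keyA=\fmat^{T}D^\pi B\,(I-\gamma\pp(I-B))^{-1}(I-\gamma\pp)\,\fmat$ as the paper, followed by the same $\pbeta$ identity. The route, however, is genuinely different. You unroll the trace into the explicit series $\e_t=\sum_{m\ge0}\gamma^m\big(\prod_{i=0}^{m-1}(1-\param(s_{t-i}))\big)\param(s_{t-m})\fvec(s_{t-m})$ and average each term forward along the stationary chain, reading off one copy of the transfer matrix $\pp(I-B)$ per step; this is the Tsitsiklis--Van~Roy style calculation. The paper instead follows the Emphatic~TD template: it conditions on $s_t=s$, uses the Markov property to split $\Ep[\e_t\mid s_t]$ from $\Ep[(\fvec(s_t)-\gamma\fvec(s_{t+1}))^T\mid s_t]$, and then derives a \emph{backward} recursion for $e(s)=\dis(s)\,\Ep[\e_t\mid s_t=s]$ by time-reversing one step via Bayes' rule, which it solves as a fixed-point equation in matrix form. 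Your approach is more elementary in that it avoids the time-reversal step, at the cost of having to justify interchanging the expectation with the infinite sum (which you correctly flag); the paper's recursive formulation sidesteps that interchange but requires the reverse-transition identity $\dis(s)\,\prob(s_{t-1}=\bar s\mid s_t=s)=\dis(\bar s)\pp(s\mid\bar s)$. Both land on the same closed form, and your verification of $(I-\gamma\pp(I-B))^{-1}(I-\gamma\pp)=I-\pbeta$ is exactly the computation the paper carries out.
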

We use the proof template from Emphatic TD \citep{sutton2016emphatic} to get the desired result. The proof is provided in Appendix \ref{app:keyA_keyB_def_proof}.

We are now equipped to establish the forward-backward equivalence.
\begin{theorem}
\label{remark:forward_backward}
The forward and the backward views of PTD are equivalent in expectation: $$\keyb - \keyA \w = \fmat^T D \Big(\Tp (\fmat \w) - \fmat \w \Big).$$
\end{theorem}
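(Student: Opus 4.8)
The statement to prove is the identity $\keyb - \keyA \w = \fmat^T D (\Tp(\fmat\w) - \fmat\w)$, relating the expected backward-view update (left side, whose ingredients $\keyA,\keyb$ are given explicitly by Lemma~\ref{remark:keyA_keyB_definition}) to the forward-view Bellman operator $\Tp$ (defined in Theorem~1). The plan is to expand both sides into closed matrix form and check they coincide. I would start from the right-hand side: substitute the definition $\Tp{\bf v} = B(I-\gamma\pp(I-B))^{-1}(\rp+\gamma\pp B{\bf v}) + (I-B){\bf v}$ with ${\bf v}=\fmat\w$, so that $\Tp(\fmat\w)-\fmat\w = B(I-\gamma\pp(I-B))^{-1}(\rp+\gamma\pp B\fmat\w) - B\fmat\w$. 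Pre-multiplying by $\fmat^T D$ gives the target expression, and I want to show it equals $\keyb - \keyA\w = \fmat^T D^\pi B(I-\gamma\pp)^{-1}\rp - \fmat^T D^\pi B(I-\pbeta)\w$.

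The key bridge is the observation that $(I-\gamma\pp(I-B))^{-1} = \sum_{k=0}^\infty (\gamma\pp(I-B))^k$ (the Neumann series, valid because the spectral radius is below $1$ — this is exactly the substochasticity noted after the definition of $\pbeta$), and that the definition of $\pbeta$ can be rewritten as $\pbeta = \gamma (I-\gamma\pp(I-B))^{-1}\pp B$. Hence $B(I-\gamma\pp(I-B))^{-1}\gamma\pp B\fmat\w = B\pbeta\fmat\w$, which matches the $-\fmat^T D^\pi B(I-\pbeta)\w$ term once we pair it with the $-B\fmat\w$ term: $B(\gamma\pp(I-B))$-series applied appropriately gives $B\pbeta\fmat\w - B\fmat\w = -B(I-\pbeta)\fmat\w$. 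For the reward term, I would similarly show $B(I-\gamma\pp(I-B))^{-1}\rp = B(I-\gamma\pp)^{-1}\rp$; this is the one nontrivial algebraic identity and follows by expanding both Neumann series and regrouping, or more slickly by noting $(I-\gamma\pp(I-B))^{-1}\rp$ and $(I-\gamma\pp)^{-1}\rp$ are both characterized as the unique fixed points of appropriate affine maps and checking they agree after the relevant multiplication — essentially the same resolvent manipulation used in the proof of Lemma~\ref{remark:keyA_keyB_definition}. Noting $D = D^\pi$ closes the identification term by term.

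The main obstacle I anticipate is the reward-term identity $B(I-\gamma\pp(I-B))^{-1}\rp = B(I-\gamma\pp)^{-1}\rp$: this is where the specific structure of $\pbeta$ (splitting each transition into a "bootstrap now with weight $\param$" branch and a "continue with weight $1-\param$" branch, while the reward is collected on every step regardless of $\param$) must be used carefully. Concretely I would verify it by writing $(I-\gamma\pp)^{-1} = \sum_j (\gamma\pp)^j$, inserting $I = B + (I-B)$ between consecutive factors, and matching the resulting expansion against $\big(\sum_k(\gamma\pp(I-B))^k\big)\big(I + \gamma\pp B(I-\gamma\pp)^{-1}\big)$ — i.e. conditioning on the first time the $B$-branch is taken. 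Alternatively, and more cleanly, I would observe the identity holds already at the level of $\Tp$ itself (the forward-view return $G_t^\param$ was derived so that its expectation is exactly the row of $(I-\gamma\pp(I-B))^{-1}(\rp+\gamma\pp B\fmat\w)$ starting from $s_t$), so that this step can be imported from the proof of Theorem~1 in Appendix~\ref{app:operator_proof} rather than re-derived. Once that identity is in hand, the remainder is bookkeeping: factor out $\fmat^T D^\pi B$ on the left of every term and read off equality.
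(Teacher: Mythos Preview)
Your handling of the value term is correct and is exactly what the paper does: both arguments use the identity $\pbeta = \gamma(I-\gamma\pp(I-B))^{-1}\pp B$ to collapse $B(I-\gamma\pp(I-B))^{-1}\gamma\pp B\fmat\w - B\fmat\w$ into $-B(I-\pbeta)\fmat\w$.

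The reward term, however, has a genuine gap. The identity you set out to prove,
\[
B(I-\gamma\pp(I-B))^{-1}\rp \;=\; B(I-\gamma\pp)^{-1}\rp,
\]
is \emph{false} in general: already taking $B=I$ makes the left side $\rp$ and the right side $(I-\gamma\pp)^{-1}\rp$, and a $2\times 2$ check with nontrivial $B$ also fails. Your own ``condition on the first $B$-branch'' expansion in fact yields the correct resolvent identity $(I-\gamma\pp)^{-1} = (I-\gamma\pp(I-B))^{-1}\big(I+\gamma\pp B(I-\gamma\pp)^{-1}\big)$, which does \emph{not} reduce to the claimed equality after left-multiplying by $B$ --- the extra factor $\gamma\pp B(I-\gamma\pp)^{-1}$ does not vanish.

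The source of the problem is a typo in the main-text statement of Lemma~\ref{remark:keyA_keyB_definition}: the displayed formula $\keyb = \fmat^T D^\pi B(I-\gamma\pp)^{-1}\rp$ is wrong (and $\keyA$ is also missing a trailing $\fmat$). The appendix proof of that lemma actually derives $\keyb = \fmat^T D^\pi B(I-\gamma\pp(I-B))^{-1}\rp$. With this correct expression for $\keyb$, the reward terms on the two sides of the theorem are \emph{literally identical}, no extra identity is needed, and the whole proof collapses to the three-line substitution the paper gives.
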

The proof is provided in Appendix \ref{app:forward_backward_proof}.

The next two lemmas verify certain conditions on $\keyA$ that are required to show that it is positive definite.
\begin{lemma}
\label{remark:positive_row_sums}
$\keyA$ has positive diagonal elements with positive row sums.
\end{lemma}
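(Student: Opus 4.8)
The plan is to read the claim directly off the factorization of $\keyA$ given by Lemma~\ref{remark:keyA_keyB_definition}: $\keyA$ is assembled from $D^\pi$, $B$, and $I-\pbeta$, so all the relevant sign information lives in the matrix $D^\pi B(I-\pbeta)$, and I would establish the diagonal and row-sum statements for it entrywise. Three observations make the entries easy to control: $D^\pi$ is diagonal with $\dis(s)>0$ for every $s$ (our standing assumption); $B$ is diagonal with $\param(s)>0$ (I use positivity of the preferences here — if some preference vanished the corresponding diagonal entry and row sum would only be non-negative); and $\pbeta\ge 0$ entrywise, since it is a convergent sum of products of the non-negative matrices $\gamma\pp(I-B)$ and $\pp B$. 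Consequently the $(s,s)$ entry of $D^\pi B(I-\pbeta)$ equals $\dis(s)\param(s)(1-\pbeta_{ss})$ and the off-diagonal $(s,s')$ entry equals $-\dis(s)\param(s)\pbeta_{ss'}\le 0$, so everything reduces to showing that each diagonal entry $\pbeta_{ss}$ and each row sum $[\pbeta\mathbf 1]_s$ of $\pbeta$ is strictly below $1$.

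That is the one genuine computation, and it sharpens the sub-stochasticity of $\pbeta$ noted right after its definition to a quantitative bound. Writing $\pbeta=\gamma(I-\gamma\pp(I-B))^{-1}\pp B$ and using $\pp\mathbf 1=\mathbf 1$, one gets $(I-\gamma\pp(I-B))\mathbf 1=(1-\gamma)\mathbf 1+\gamma\pp B\mathbf 1$, hence $\gamma\pp B\mathbf 1=(I-\gamma\pp(I-B))\mathbf 1-(1-\gamma)\mathbf 1$, and therefore $\pbeta\mathbf 1=\mathbf 1-(1-\gamma)(I-\gamma\pp(I-B))^{-1}\mathbf 1$. Since $(I-\gamma\pp(I-B))^{-1}=\sum_{k\ge 0}(\gamma\pp(I-B))^k$ is entrywise non-negative and dominates $I$ (the $k=0$ term), we have $(I-\gamma\pp(I-B))^{-1}\mathbf 1\ge\mathbf 1$, so $\pbeta\mathbf 1\le\gamma\mathbf 1<\mathbf 1$; and because $\pbeta$ has non-negative entries, $\pbeta_{ss}\le[\pbeta\mathbf 1]_s\le\gamma<1$ for every $s$.

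Plugging these in finishes the argument: the $s$-th diagonal entry of $\keyA$ is $\dis(s)\param(s)(1-\pbeta_{ss})\ge(1-\gamma)\dis(s)\param(s)>0$, and the $s$-th row sum is $\dis(s)\param(s)\big(1-[\pbeta\mathbf 1]_s\big)=(1-\gamma)\,\dis(s)\param(s)\,[(I-\gamma\pp(I-B))^{-1}\mathbf 1]_s\ge(1-\gamma)\dis(s)\param(s)>0$, which is exactly the claim. I do not expect a serious obstacle: the only things to be careful about are getting the Neumann-series identity for $\pbeta\mathbf 1$ right and making sure the strict inequalities genuinely use $\gamma<1$ together with $\param(s)>0$. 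The non-positivity of the off-diagonal entries, which is needed together with these bounds to conclude positive definiteness of $\keyA$, follows immediately from $\pbeta\ge 0$ and is presumably the content of the following lemma.
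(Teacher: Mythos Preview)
Your argument is correct and follows essentially the same line as the paper's: both reduce the claim to the sub-stochasticity of $\pbeta$ (non-negative entries with row sums strictly below $1$), from which positivity of the diagonal and of the row sums of $D^\pi B(I-\pbeta)$ is immediate. The paper simply invokes the assertion ``$\pbeta$ is a sub-stochastic matrix for $\gamma\in[0,1)$'' made right after its definition and reads off $\sum_j[\pbeta]_{i,j}<1$ and $[\pbeta]_{i,i}<1$, whereas you actually derive the closed form $\pbeta\mathbf 1=\mathbf 1-(1-\gamma)(I-\gamma\pp(I-B))^{-1}\mathbf 1$ and the sharper bound $\pbeta\mathbf 1\le\gamma\mathbf 1$; this is a genuine (if small) strengthening, and your explicit note that strict positivity needs $\param(s)>0$ is a useful caveat the paper leaves implicit.
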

\begin{proof}
Consider the term $(I - \pbeta)$. $\pbeta$ is a sub-stochastic matrix for $\gamma \in [0, 1)$, hence, $\sum_j [\pbeta]_{i, j} < 1$. Therefore, $\sum_j [I - \pbeta]_{i,j} = 1 - \sum_j [\pbeta]_{i, j} > 0$. Additionally, $[I - \pbeta]_{i,i} > 0$ since $[\pbeta]_{i,i} < 1$.
\end{proof}

\begin{lemma}
\label{remark:positive_column_sums}
The column sums of $\keyA$ are positive.
\end{lemma}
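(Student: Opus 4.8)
The plan is to analyze the column sums of $\keyA = \fmat^T D^\pi B (I - \pbeta)$ by reducing everything to a statement about the matrix $M := D^\pi B (I - \pbeta)$; since Lemma \ref{remark:positive_row_sums} already handled row sums, here I would show that the column sums of $M$ are positive, i.e. $\mathbf{1}^T D^\pi B (I - \pbeta) > 0$ componentwise (and then the column sums of $\keyA$ inherit positivity after left-multiplication by $\fmat^T$, using Assumption \ref{assump:feature_assumption} so that no column collapses — though one should be slightly careful: mixing a feature column with positive weights preserves sign, so really the cleanest statement is that $\mathbf{1}^T \keyA = (\mathbf{1}^T M)\fmat$ is a nonnegative combination of the strictly positive vector $\mathbf{1}^T M$; I would state the lemma's conclusion as being about the column sums of the ``core'' matrix $M$, matching how the row/column-sum argument is used downstream to invoke the standard positive-definiteness criterion).

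The key computation is $\mathbf{1}^T D^\pi B (I - \pbeta)$. First I would use the identity $\mathbf{1}^T D^\pi P_\pi = \mathbf{1}^T D^\pi$, which holds because $\dis$ is the stationary distribution of $P_\pi$. The main obstacle is that $\pbeta = \gamma\big(\sum_{k=0}^\infty (\gamma P_\pi(I-B))^k\big) P_\pi B$ is an infinite series, so I would first rewrite it in closed form: setting $Q := \gamma P_\pi(I-B)$, we have $\pbeta = (I - Q)^{-1} \gamma P_\pi B = (I-Q)^{-1}(\gamma P_\pi - Q)$ since $\gamma P_\pi B = \gamma P_\pi - \gamma P_\pi(I-B) = \gamma P_\pi - Q$. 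Hence $I - \pbeta = (I-Q)^{-1}\big((I-Q) - (\gamma P_\pi - Q)\big) = (I-Q)^{-1}(I - \gamma P_\pi)$, so that $B(I-\pbeta) = B(I-Q)^{-1}(I-\gamma P_\pi)$. Then
\begin{equation*}
\mathbf{1}^T D^\pi B(I-\pbeta) = \mathbf{1}^T D^\pi B (I-Q)^{-1}(I - \gamma P_\pi).
\end{equation*}
Writing $u^T := \mathbf{1}^T D^\pi B(I-Q)^{-1} = \mathbf{1}^T D^\pi B \sum_{k\ge 0} Q^k$, note $u^T \ge \mathbf{1}^T D^\pi B \ge 0$ with $u^T$ strictly positive in every coordinate where $\param > 0$ (and the $\param=0$ coordinates get filled in by the $Q^k$ terms for $k\ge1$ provided the chain is irreducible, which is assumed). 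The column sum is then $u^T(I - \gamma P_\pi) = u^T - \gamma u^T P_\pi$; for $\gamma < 1$ this is bounded below by $u^T - \gamma(\max_i u_i)\mathbf{1}^T$, which is not obviously positive, so this crude bound is the wrong move.

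Instead, the right approach is to avoid expanding $I-\gamma P_\pi$ and work directly with the structure. Rewrite $\mathbf{1}^T D^\pi B(I-\pbeta)$ as $\mathbf{1}^T D^\pi\big(B - B\pbeta\big)$ and use $B\pbeta = \pbeta$ is false in general, so instead observe from $\pbeta = \gamma(\sum_k (\gamma P_\pi(I-B))^k)P_\pi B$ that every row of $\pbeta$ sums to at most $\gamma < 1$ (sub-stochasticity, already noted in the text), i.e. $\pbeta \mathbf{1} \le \gamma \mathbf{1}$; dually I want information about $\mathbf{1}^T D^\pi B \pbeta$. The cleanest path: let $\mu^T := \mathbf{1}^T D^\pi B$, a nonnegative row vector, not all zero (assuming $\param\not\equiv 0$). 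Then the $j$-th column sum of $M$ is $\mu_j - [\mu^T \pbeta]_j = \mu_j - \sum_i \mu_i [\pbeta]_{ij}$. Since $\pbeta$ is sub-stochastic, $\sum_i \mu_i[\pbeta]_{ij} \le \sum_i \mu_i \cdot \max_i (\text{row-sum}) $ — again too weak. So the genuine hard part is controlling the column mass of $\pbeta$ reweighted by $\mu$. Here I would invoke the identity above, $B(I-\pbeta) = B(I-Q)^{-1}(I-\gamma P_\pi)$, together with $\mathbf{1}^T D^\pi(I-\gamma P_\pi) = (1-\gamma)\mathbf{1}^T D^\pi > 0$, and reconcile the two forms: writing $\mathbf{1}^T D^\pi B = \mathbf{1}^T D^\pi - \mathbf{1}^T D^\pi(I-B)$ and pushing the $(I-B)$ term through $(I-Q)^{-1}$, the telescoping $\sum_k Q^k - \sum_k Q^k \cdot(\text{shift})$ should leave exactly $\mathbf{1}^T D^\pi(I-\gamma P_\pi) = (1-\gamma)\mathbf{1}^T D^\pi$ plus a manifestly nonnegative remainder. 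I expect this telescoping/bookkeeping step — showing the $(I-B)$ contributions cancel against the $k\ge 1$ terms of the Neumann series and leave a nonnegative surplus on top of the strictly positive $(1-\gamma)\dis$ vector — to be the main obstacle, and it is exactly the dual of the row-sum argument in Lemma \ref{remark:positive_row_sums}. Strict positivity of each column sum then follows since $(1-\gamma)\dis(j) > 0$ for all $j$ by the assumption $\dis(s) > 0$, and finally $\mathbf{1}^T \keyA = (\mathbf{1}^T M)\fmat$ gives the column sums of $\keyA$ as positive combinations of a strictly positive vector.
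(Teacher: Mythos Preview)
Your plan has a genuine gap: the claimed decomposition ``$(1-\gamma)\dis$ plus a manifestly nonnegative remainder'' is false. Take $\param(s)\equiv\beta\in(0,1)$ constant. Then $Q=\gamma(1-\beta)\pp$, so $\dis B(I-Q)^{-1}=\frac{\beta}{1-\gamma(1-\beta)}\dis$ and the column-sum vector is $\dis B(I-\pbeta)=\frac{\beta(1-\gamma)}{1-\gamma(1-\beta)}\dis$, which is \emph{strictly smaller} than $(1-\gamma)\dis$ whenever $\beta<1$. So the telescoping you anticipate does not leave a nonnegative surplus on top of $(1-\gamma)\dis$; the ``remainder'' is negative and your argument would stall exactly at the step you flagged as the main obstacle. (There is also a minor slip at the end: $\mathbf{1}^T\keyA=(\fmat\mathbf{1})^T M\fmat$, not $(\mathbf{1}^T M)\fmat$; but as you note, the downstream diagonal-dominance argument really concerns the core matrix $M=D^\pi B(I-\pbeta)$, so this is not the real issue.)

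Your direct route can be repaired, but the correct identity is different. Using $I-\pbeta=(I-Q)^{-1}(I-\gamma\pp)$ and the one-line check $\dis B+(1-\gamma)\dis(I-B)=\dis(I-Q)$, one gets $\dis B(I-Q)^{-1}=\dis-(1-\gamma)\dis(I-B)(I-Q)^{-1}$, and hence
\[
\dis B(I-\pbeta)\;=\;(1-\gamma)\bigl[\dis B+\dis(I-B)\pbeta\bigr],
\]
which is componentwise positive (since $\dis(j)\param(j)>0$ and the second term is nonnegative). This is a clean direct proof in the spirit of what you were attempting. The paper takes a different tack: it first proves the identity $\dis B\,\pbeta=\dis B$ at $\gamma=1$ (via an auxiliary matrix $\mathbf{S}$ and the recursion $\dis\mathbf{S}=\dis\mathbf{S}(I-B)\pp$, which forces $\dis\mathbf{S}=0$), and then observes that every entry of $\pbeta$ is a power series in $\gamma$ with nonnegative coefficients, so $\dis B\pbeta<\dis B$ for $\gamma<1$. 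Both approaches hinge on stationarity $\dis\pp=\dis$; the paper's version isolates the $\gamma$-dependence via monotonicity, whereas the corrected direct computation makes the factor $(1-\gamma)$ explicit.
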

The idea of the proof is to show that $\dis B \pbeta = \dis B$ for $\gamma=1$. This implies $\dis B (I - \pbeta) = 0$ for $\gamma=1$ and $\dis B (I - \pbeta) > 0$ for $\gamma \in [0, 1)$ as $\dis B \pbeta < \dis B$. Therefore, column sums are positive. The complete proof is provided in Appendix \ref{app:col_sums_proof}.
Note that $\dis B$ is not the stationary distribution of $\pbeta$, as it is unnormalized.

\begin{lemma}
\label{lemma:keyA_PD}
$\keyA$ is positive definite.
\end{lemma}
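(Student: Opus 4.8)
The plan is to reduce the claim to a statement about the $|\s|\times|\s|$ matrix $H:=D^{\pi}B(I-\pbeta)$ and then use the full-column-rank assumption on $\fmat$. By Lemma~\ref{remark:keyA_keyB_definition} the matrix $\keyA$ is obtained from $H$ by the congruence $\fmat^{T}(\,\cdot\,)\fmat$; since $\fmat$ has full column rank (Assumption~\ref{assump:feature_assumption}), for every $x\neq 0$ the vector $y:=\fmat x$ is nonzero and $x^{T}\keyA x = y^{T}Hy = \tfrac12\,y^{T}(H+H^{T})y$. Hence it suffices to show that the symmetric matrix $H+H^{T}$ is positive definite.

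For this I would use only the structural facts already established, exactly as in the classical TD$(\lambda)$ / Emphatic TD analysis. First, $H$ has nonpositive off-diagonal entries: for $i\neq j$, $H_{ij}=-\dis(i)\param(i)[\pbeta]_{ij}\le 0$ because $\pbeta\ge 0$ entrywise and $D^{\pi},B$ are nonnegative diagonal matrices. Second, by Lemma~\ref{remark:positive_row_sums} the matrix $I-\pbeta$ has positive diagonal and positive row sums, so scaling row $i$ by the positive number $\dis(i)\param(i)$ shows $H$ has positive diagonal entries and $H\mathbf{1}>0$. Third, by Lemma~\ref{remark:positive_column_sums}, $\mathbf{1}^{T}H=\dis B(I-\pbeta)>0$ for $\gamma\in[0,1)$, i.e.\ $H$ has strictly positive column sums. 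Combining these, for each $i$ the off-diagonal absolute row sum of $H+H^{T}$ is
\begin{align*}
\sum_{j\neq i}\bigl|[H+H^{T}]_{ij}\bigr|
 &= -\sum_{j\neq i}H_{ij}\;-\sum_{j\neq i}H_{ji} \\
 &= \bigl(H_{ii}-[H\mathbf{1}]_{i}\bigr)+\bigl(H_{ii}-[\mathbf{1}^{T}H]_{i}\bigr) \\
 &< 2H_{ii} = [H+H^{T}]_{ii},
\end{align*}
where the first equality uses nonpositivity of the off-diagonal entries and the strict inequality uses that the $i$-th row sum and the $i$-th column sum of $H$ are both strictly positive. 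Thus $H+H^{T}$ is symmetric, strictly diagonally dominant, and has a positive diagonal, so by Gershgorin's theorem all of its (real) eigenvalues are positive; hence $H+H^{T}$ is positive definite and, with the first paragraph, so is $\keyA$.

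The step I expect to be the real obstacle is not in this lemma at all but upstream: the positive-column-sum bound, Lemma~\ref{remark:positive_column_sums}, which rests on the non-obvious flow identity $\dis B\pbeta=\dis B$ at $\gamma=1$ (so that $\dis B(I-\pbeta)=0$ there) together with a monotonicity argument to pass to $\gamma<1$ (where $\dis B\pbeta<\dis B$). Once Lemmas~\ref{remark:positive_row_sums}--\ref{remark:positive_column_sums} are in place, what remains is the routine diagonal-dominance computation above plus one point of bookkeeping: a state $s$ with $\param(s)=0$ contributes an identically zero row \emph{and} column to $H$ --- its column in $\pbeta$ vanishes, since the rightmost factor of $\pbeta$ is $\pp B$ --- so it drops out of the quadratic form, and to get strict positive definiteness one assumes, as is standard, that $\param(s)>0$ for all $s$ (or, more carefully, that the rows of $\fmat$ indexed by $\{s:\param(s)>0\}$ still form a full-column-rank matrix), under which the argument above applies verbatim on those coordinates. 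An alternative to the Gershgorin step is the direct estimate $y^{T}Hy\ge\tfrac12\sum_{i}\bigl(\dis(i)\param(i)(1-[\pbeta\mathbf{1}]_{i})+[\dis B(I-\pbeta)]_{i}\bigr)y_{i}^{2}$, obtained by bounding each cross term $[\pbeta]_{ij}y_{i}y_{j}\le\tfrac12[\pbeta]_{ij}(y_{i}^{2}+y_{j}^{2})$ and regrouping; its coefficients are exactly the quantities Lemmas~\ref{remark:positive_row_sums}--\ref{remark:positive_column_sums} show to be positive.
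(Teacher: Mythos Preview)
Your proposal is correct and follows essentially the same route as the paper: show that the symmetrized ``inner'' matrix is strictly diagonally dominant with positive diagonal, conclude it is positive definite, and transfer this to $\keyA$ via the full column rank of $\fmat$. Your write-up is in fact more careful than the paper's --- you make explicit the reduction from $\keyA=\fmat^T H\fmat$ to $H=D^{\pi}B(I-\pbeta)$, you verify the nonpositivity of the off-diagonal entries of $H$ (which the paper's terse argument uses implicitly), and you flag the $\param(s)=0$ corner case, whereas the paper simply states ``$\keyA+\keyA^T$ is strictly diagonally dominant'' and invokes Varga's corollary (equivalent to your Gershgorin step).
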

\begin{proof}
From lemmas \ref{remark:positive_row_sums} and \ref{remark:positive_column_sums}, the row sums and the column sums of $\keyA$ are positive. Also, the diagonal elements of $\keyA$ are positive. Therefore, $\keyA + \keyA^T$ is a strictly diagonally dominant matrix with positive diagonal elements. We can use corollary 1.22 from \citet{varga1999matrix} (provided in Appendix \ref{app:std_corollary} for completeness) to conclude that $\keyA + \keyA^T$ is positive definite. Hence, $\keyA$ is also positive definite.
\end{proof}

The next lemma shows that the bias resulting from initial updates, i.e., when the chain has not yet reached the stationarity, is controlled.
\begin{lemma}
\label{lemma:sampling_bias}
We have, $\sum_{t=0}^{\infty} \norm{\Ep[A(X_t)|X_0] - \keyA} \leq C_1$ and $\sum_{t=0}^{\infty} \norm{\Ep[b(X_t)|X_0] - \keyb} \leq C_2$ under assumption \ref{assump:rapid_mixing}, where $C_1$ and $C_2$ are constants.
\end{lemma}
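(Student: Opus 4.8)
The plan is to exploit the explicit geometric structure of the eligibility trace together with Assumption~\ref{assump:rapid_mixing}, following the bias-control argument used for TD($\lambda$) \citep{tsitsiklis1997analysis} and ETD \citep{sutton2016emphatic}. Unrolling the recursion in equation~\ref{eq:preferential_etrace} with $\e_{-1}=0$ gives
$$\e_t=\sum_{k=0}^{t}\Big[\prod_{j=k+1}^{t}\gamma(1-\param(s_j))\Big]\param(s_k)\fvec(s_k),$$
with empty products equal to $1$. Since $\param(s)\in[0,1]$ and $\gamma<1$, each factor is at most $\gamma$, so $\prod_{j=k+1}^{t}\gamma(1-\param(s_j))\le\gamma^{\,t-k}$; together with $\norm{\fvec(s)}\le\norm{\fmat}\le M$ (Assumption~\ref{assump:feature_assumption}) this gives $\norm{\e_t}\le M/(1-\gamma)$, and since the rewards are bounded (finite $\s,\act$) the quantities $\norm{A(X_t)}$ and $\norm{b(X_t)}$ are uniformly bounded.

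The key structural observation is that, by the Markov property, the contribution to $A(X_t)=\e_t(\fvec(s_t)-\gamma\fvec(s_{t+1}))^T$ coming from the term that entered the trace at time $k=t-m$ factorizes into the probability of occupying a state $s$ at time $k$, times a quantity that depends only on $s$ and the \emph{lag} $m$. Concretely, for each $s\in\s$ and $m\ge 0$ define the matrix
$$H_m(s)=\gamma^{m}\,\param(s)\,\fvec(s)\;\E\!\Big[\Big(\prod_{j=1}^{m}(1-\param(s_j))\Big)\big(\fvec(s_m)-\gamma\fvec(s_{m+1})\big)^{T}\ \Big|\ s_0=s\Big].$$
Then $\Ep[A(X_t)\mid X_0]=\sum_{m=0}^{t}\sum_{s\in\s}\prob(s_{t-m}=s\mid X_0)\,H_m(s)$, whereas by stationarity $\keyA=\Edis[A(X_t)]=\sum_{m=0}^{\infty}\sum_{s\in\s}\dis(s)\,H_m(s)$. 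Subtracting and splitting off the truncation tail,
$$\Ep[A(X_t)\mid X_0]-\keyA=\sum_{m=0}^{t}\sum_{s\in\s}\big(\prob(s_{t-m}=s\mid X_0)-\dis(s)\big)H_m(s)\ -\ \sum_{m=t+1}^{\infty}\sum_{s\in\s}\dis(s)\,H_m(s).$$
Using $\norm{H_m(s)}\le\gamma^{m}M^{2}(1+\gamma)$ and the mixing bound $|\prob(s_{t-m}=s\mid X_0)-\dis(s)|\le C'\rho^{\,t-m}$ (which follows from Assumption~\ref{assump:rapid_mixing}, absorbing the $m=t$ edge term into the constant) yields $\norm{\Ep[A(X_t)\mid X_0]-\keyA}\le |\s|C'M^{2}(1+\gamma)\sum_{m=0}^{t}\rho^{\,t-m}\gamma^{m}+M^{2}(1+\gamma)\gamma^{\,t+1}/(1-\gamma)$.

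Finally I would sum over $t$: the truncation term contributes $M^{2}(1+\gamma)\gamma/(1-\gamma)^{2}$, and after interchanging the summation order (legitimate by nonnegativity) the first term contributes $|\s|C'M^{2}(1+\gamma)\sum_{m\ge0}\gamma^{m}\sum_{t\ge m}\rho^{\,t-m}=|\s|C'M^{2}(1+\gamma)/\big((1-\gamma)(1-\rho)\big)$, so the total is a finite constant $C_1$. The argument for $b(X_t)=\e_t r_{t+1}$ is identical after replacing $(\fvec(s_m)-\gamma\fvec(s_{m+1}))^{T}$ by the scalar $r_{m+1}$ in the definition of $H_m$ and bounding $|r_{m+1}|$ by $R_{\max}$, which yields $C_2$. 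The main obstacle is purely organizational: setting up the lag decomposition so that the error splits into one piece governed by the mixing rate $\rho^{\,t-m}$ and one governed by the trace-decay rate $\gamma^{\,t}$; once that is done, the geometric-series and double-sum manipulations are routine.
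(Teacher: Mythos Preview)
Your argument is correct and follows the same overall strategy as the paper: split $\Ep[A(X_t)\mid X_0]-\keyA$ into a mixing error controlled by Assumption~\ref{assump:rapid_mixing} and a trace-truncation tail controlled by $\gamma^{t}$, then sum the two geometric pieces. The organizational difference is that the paper works at the matrix level, asserting (by analogy with Lemma~\ref{remark:keyA_keyB_definition}) a closed form $\Ep[A(X_t)\mid X_0]=\fmat^T B D_t(I-\pbeta)\fmat-\fmat^T B D_t\sum_{m>t}(\gamma\pp(I-B))^m(I-\gamma\pp)\fmat$ with a single $D_t$, which yields the simpler per-step bound $K_4\rho^{t}+K_3\gamma^{t+1}/(1-\gamma)$. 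Your lag decomposition via $H_m(s)$ instead keeps track of the fact that the trace term entering at time $t-m$ is weighted by $D_{t-m}$, not $D_t$; this produces the convolution $\sum_{m=0}^{t}\rho^{\,t-m}\gamma^{m}$ for the mixing piece, which is arguably the more careful accounting and still sums (after Fubini) to $1/((1-\gamma)(1-\rho))$. Either route gives the same finite $C_1$, and your treatment of $b(X_t)$ by the identical argument with $r_{m+1}$ in place of the feature difference is fine.
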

The proof is provided in Appendix \ref{app:sampling_bias_proof}.

We now present the main result of the paper.
\begin{theorem}
The sequence of expected updates computed by Preferential TD converges to a unique fixed point.
\end{theorem}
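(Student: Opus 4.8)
The plan is to recognize the PTD recursion of Section~\ref{section:convergence} as a linear stochastic approximation scheme driven by a Markov chain, and to appeal to a standard convergence theorem for such schemes, of the type used by \citet{tsitsiklis1997analysis} for TD($\lambda$) and by \citet{sutton2016emphatic} for Emphatic TD. Writing $\w_{t+1} = \w_t + \lr_t(\keyb - \keyA\w_t) + \lr_t\big[(\keyA - A(X_t))\w_t + (b(X_t) - \keyb)\big]$ displays the update as a deterministic, contractive-in-the-mean step perturbed by a term that vanishes on average as the chain mixes. Such a theorem concludes that the mean recursion $\bar{\w}_{t+1} = \bar{\w}_t + \lr_t(\keyb - \keyA\bar{\w}_t)$ — and, under the additional noise conditions, the random iterates themselves — converge to the unique solution $\w^\star = \keyA^{-1}\keyb$ of $\keyA\w = \keyb$, provided: (i) the step sizes obey the Robbins--Monro conditions; (ii) $A(X_t)$ and $b(X_t)$ have uniformly bounded norms; (iii) the steady-state means $\keyA,\keyb$ exist and the transient bias of the conditional expectations along the chain is summable; and (iv) $\keyA$ is positive definite, so that $-\keyA$ is Hurwitz and the mean ODE $\dot{\w} = \keyb - \keyA\w$ is globally asymptotically stable with equilibrium $\keyA^{-1}\keyb$.

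It remains to verify these conditions, all of which have essentially been prepared. Condition~(i) is Assumption~\ref{assump:step_size}. For~(ii), the eligibility trace is uniformly bounded: since $\e_{-1}=0$, $\param(\cdot)\in[0,1]$, $\gamma<1$, and $\norm{\fvec(s)} \le \norm{\fmat} \le M$ for every $s$ by Assumption~\ref{assump:feature_assumption}, a one-line induction on $\e_t = \param(s_t)\fvec(s_t) + \gamma(1-\param(s_t))\e_{t-1}$ gives $\norm{\e_t}\le M$ for all $t$, using $\param(s_t) + \gamma(1-\param(s_t)) \le 1$; hence $\norm{A(X_t)} \le M^2(1+\gamma)$ and $\norm{b(X_t)} \le M\max_{s,a}|r(s,a)|$. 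Condition~(iii) is exactly the content of Lemma~\ref{remark:keyA_keyB_definition}, which identifies the limits $\keyA = \fmat^T D^\pi B(I-\pbeta)$ and $\keyb = \fmat^T D^\pi B(I-\gamma\pp)^{-1}\rp$, together with Lemma~\ref{lemma:sampling_bias}, which gives $\sum_t\norm{\Ep[A(X_t)\mid X_0]-\keyA}\le C_1$ and $\sum_t\norm{\Ep[b(X_t)\mid X_0]-\keyb}\le C_2$ under the rapid-mixing Assumption~\ref{assump:rapid_mixing}. Condition~(iv) is Lemma~\ref{lemma:keyA_PD}. The cited theorem then yields convergence to $\w^\star = \keyA^{-1}\keyb$, and positive definiteness of $\keyA$ makes it invertible, so the fixed point is unique. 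Combining with Theorem~\ref{remark:forward_backward}, $\w^\star$ is moreover the unique solution of the projected preferential Bellman equation $\fmat^T D\big(\Tp(\fmat\w^\star) - \fmat\w^\star\big) = 0$.

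The main obstacle is conceptual rather than computational: the driving process $X_t=(s_t,s_{t+1},e_t)$ is not itself a finite Markov chain, since $e_t$ is a geometrically weighted functional of the whole past trajectory, so one cannot quote the ergodic-averaging and exponential-forgetting properties of the underlying finite chain directly. One must instead argue that these properties survive the bounded, geometrically discounted transformation from $(s_t)$ to $(e_t)$ — which is precisely what Lemmas~\ref{remark:keyA_keyB_definition} and~\ref{lemma:sampling_bias} isolate (existence of the steady-state operators and geometric decay of the transient bias). With those in hand, together with the uniform bound $\norm{\e_t}\le M$ and the already-established positive definiteness of $\keyA$, every hypothesis of the stochastic-approximation theorem is met and the conclusion follows.
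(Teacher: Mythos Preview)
Your proposal is correct and follows essentially the same route as the paper: verify the hypotheses of the stochastic-approximation theorem of \citet{tsitsiklis1997analysis}/\citet{bertsekas1996neuro} using Assumption~\ref{assump:step_size} for the step sizes, Lemmas~\ref{remark:keyA_keyB_definition} and~\ref{lemma:sampling_bias} for the well-definedness of $\keyA,\keyb$ and summability of the transient bias, and Lemma~\ref{lemma:keyA_PD} for positive definiteness of $\keyA$. Your explicit bound $\norm{\e_t}\le M$ and the remark on why the extended state $X_t=(s_t,s_{t+1},\e_t)$ is not itself a finite chain are helpful clarifications that the paper leaves implicit, but the argument is otherwise identical.
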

\begin{proof}
We can establish convergence by satisfying all the conditions of a standard result from the stochastic approximation literature, such as theorem 2 in \citet{tsitsiklis1997analysis} or proposition 4.8 in \citet{bertsekas1996neuro} (provided in Appendix \ref{app:std_theorem} for completeness). Assumption \ref{assump:step_size} satisfies the step-size requirement. Lemmas \ref{remark:keyA_keyB_definition} and \ref{lemma:keyA_PD} meet conditions 3 and 4. Lemma \ref{lemma:sampling_bias} controls the noise from sampling, hence satisfying the final requirement. Therefore, the expected update sequence of Preferential TD converges. The fixed point lies in the span of the feature matrix and this point has zero projected Bellman error. That is, $\proj \Big(\Tp(\fmat \wpi) - \fmat \wpi \Big) = 0$, where $\proj = \fmat (\fmat^T D^\pi \fmat)^{-1}\fmat^T D^\pi$ is the projection matrix.
\end{proof}

{\bf Consequences:} It is well-known that using state-dependent bootstrapping or re-weighing updates in TD($\lambda$) will result in convergence issues even in the on-policy setting. This was demonstrated using counterexamples \citep{mahmood2017incremental, ghiassian2017first}. Our result establishes convergence when a state-dependent bootstrapping parameter is used \textit{in addition to} re-weighing the updates. We analyze these examples below and show that they are invalid for Preferential TD. This makes Preferential TD one of only two algorithms to converge with standard assumptions in the function approximation setting, Emphatic TD being the other \citep{yu2015convergence}.

\textbf{Counterexample 1 \citep{mahmood2017incremental}:} Two state MDP with $\pp = \begin{bmatrix} 0.5 & 0.5\\ 0.5 & 0.5 \end{bmatrix}$, $\fmat = \begin{bmatrix} 0.5\\1 \end{bmatrix}$, $\lambda = \begin{bmatrix} 0.99 & 0.8 \end{bmatrix}$, $\dis = \begin{bmatrix} 0.5 & 0.5 \end{bmatrix}$, $\gamma = 0.99$. The key matrix of TD($\lambda$) is given by, $\keyA = \begin{bmatrix} -0.0429 \end{bmatrix}$, which is not positive definite. Therefore, the updates will not converge. The key matrix of Preferential TD for the same setup is $\keyA =  \begin{bmatrix} 0.009 \end{bmatrix}$ which is positive definite \footnote{We set $\lambda(s_0)=0.99$ instead of $1$, a small change to the original example, because $\param(s_0)=0$ when $\lambda(s_0)=1$ and such states can be excluded from the analysis.}. The second counterexample is analyzed in Appendix \ref{app_sec:counterexamples}.

\section{Related work}
A natural generalization of TD($\lambda$) is to make $\lambda$ a state-dependent function~\citep{sutton1995td, sutton1999between, sutton2016emphatic}. This setting produces significant improvements in many cases~\citep{sutton1995td, sutton2016emphatic, white2016greedy, xu2018meta}. However, it only modifies the target during bootstrapping, not the extent to which states are updated. The closest approach in spirit to ours is Emphatic TD($\lambda$) (Emphatic TD or ETD), which introduces a framework to reweigh the updates using the interest function ($i(s) \in \R^+, \forall s \in \s$) \citep{sutton2016emphatic}. Emphatic TD uses the interest of past states and the state-dependent bootstrap parameter $\lambda$ of the current state to construct the emphasis, $M_t \in \R^+$, at time $t$. The updates are then weighted using $M_t$. By this construction, the emphasis is a trajectory dependent quantity. Two different trajectories leading to a particular state have a different emphasis on the updates. This can be beneficial in the off-policy case, where one may want to carry along importance sampling ratios from the past. However, a trajectory-based quantity can result in high variance, as the same state could get different updates depending upon its past.

Emphatic TD and Preferential TD share the idea of reweighing the updates based on the agent's preference for states. Emphatic TD uses a trajectory-based quantity, whereas Preferential TD uses a state-dependent parameter to reweigh the updates. Furthermore, Preferential TD uses a single parameter to update and bootstrap, instead of two in Emphatic TD. Our analysis in Section~\ref{section:convergence} suggests that the two algorithms' fixed points are also different. However, we can set up Emphatic TD to achieve a similar effect to PTD (though not identical) when $\param \in \{0, 1\}$, by setting $\lambda = 1-\param$ and the interest of a state proportional to the preference for that state. In our experiments, we use this setting as well, which will be denoted as ETD-variable algorithm.

The fact that preference is a state-dependent quantity fits well with the intuition that if a state is partially observable, we may always want to avoid involving it in bootstrapping, regardless of the trajectory on which it occurs. Preferential TD lies in between TD($\lambda$) and Emphatic TD. Like TD($\lambda$), it uses a single state-dependent bootstrap function. Like Emphatic TD, it reweighs the updates but using only the preference.

Other works share the idea of ignoring updates on partially observable states, e.g.~\citealt{xu2017natural, thodoroff2019recurrent}. They use a trajectory-based value as a substitute to the value of a partially observable state. Temporal value transport~\citep{hung2019optimizing} uses an attention mechanism to pick past states to update, bypassing partially observable states for credit assignment. However, the theoretical properties of trajectory-dependent values or attention-based credit assignment are poorly understood at the moment. Our method is unbiased, and, as seen from Section \ref{section:convergence}, it can be understood well from a theoretical standpoint.

Our ideas are also related to the idea of a backward model~\citet{chelu2020forethought}, in which an explicit model of precursor states is constructed and used to propagate TD-errors in a planning-style algorithm, with a similar motivation of avoiding partially observable states and improving credit assignment. However, our approach is model-free.

\section{Illustrations}
In this section, we test Preferential TD on policy evaluation tasks in four different settings\footnote{Code to reproduce the results can be found \href{https://github.com/NishanthVAnand/Preferential-Temporal-Difference-Learning}{here}.}: tabular, linear, semi-linear (linear predictor with non-linear features), and non-linear (end-to-end training). Note that our theoretical results do not cover the last setup; however, it is quite easy to implement the algorithm in this setup.

\begin{figure*}[ht]
    \centering
    \includegraphics[width=0.85\textwidth, height=4.5cm]{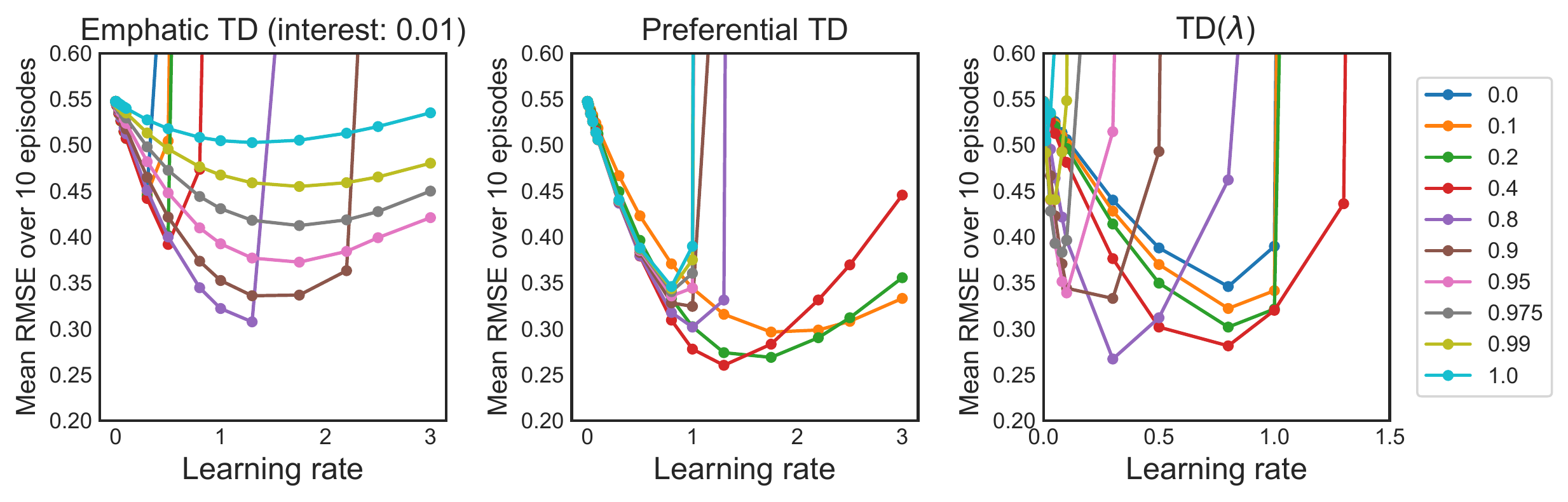}
    \caption{Root Mean Square Error (RMSE) of Emphatic TD, TD($\lambda$), and Preferential TD as a function of learning rate $\alpha$. Different curves in the plot correspond to different values of $\lambda$ or $\param$ (depending upon the algorithm).}
    \label{fig:tabular_hp_plot1}
\end{figure*}

\subsection{Tabular}
In this setting, the value function is represented as a look-up table. This experiment is provided to assist in understanding PTD as a whole in the limited sample setting for various choices of constant preference function.

\textbf{Task description:} We consider the 19-state random walk problem from \citealt{sutton1988learning, sutton2018reinforcement}. In this MDP, there are 19 states connected sequentially, with two terminal states on either side. From each state, the agent can choose to go left or right. The agent gets a reward of $+1$ when it transitions to the rightmost terminal state and $-1$ in the leftmost state; otherwise, rewards are $0$. We set $\gamma=1$ and the policy to uniformly random. We compare Emphatic TD, TD($\lambda$), and PTD on this task. We consider fixed parameters for these methods, i.e., interest in ETD is fixed, $\lambda$ in ETD and TD($\lambda$) are fixed, and $\param$ in PTD is fixed. For ETD, we selected a constant interest of $0.01$ on all states (selected from $\{0.01, 0.05, 0.1, 0.25\}$ based on hyperparameter search) for each $\lambda$-$\alpha$ pair. We consider fixed parameters to understand the differences between the three algorithms.

\textbf{Observations:} The U-curves obtained are presented in Figure \ref{fig:tabular_hp_plot1}, which depicts Root Mean Square Error (RMSE) on all states, obtained in the first 10 training episodes, as a function of $\alpha$. Various curves correspond to different choices of fixed $\lambda$ (or $\param$). In TD($\lambda$), when $\lambda$ is close to 0, the updates are close to TD(0), and hence there is high bias. When $\lambda$ is close to 1, the updates are close to Monte-Carlo, and there is a high variance. A good bias-variance trade-off is obtained for intermediate values of $\lambda$, resulting in low RMSE. A similar trade-off can be observed in PTD; however, the trade-off is different from TD($\lambda$). A $\param$ value close to $1$ results in a TD($0$)-style update (similar to $\lambda=0$), but $\param \approx 0$ results in negligible updates (as opposed to a Monte-Carlo update for similar values of $\lambda$). The resulting bias-variance trade-off is better.

TD($\lambda$) is very sensitive to the value of the learning rate. When $\lambda$ gets close to $1$, the RMSE is high even for a low learning rate, resulting in sharp U-curves. This is because of the high variance in the updates. However, this is not the case for PTD, which is relatively stable for all values of $\param$ on a wide range of learning rates. PTD exhibits the best behaviour when using a learning rate greater than $1$ for some values of $\param$. This behaviour can be attributed to negligible updates performed when a return close to Monte-Carlo ($\param$ is close to 0) is used. Due to negligible updating, the variance in the updates is contained.

The performance of ETD was slightly worse than the other two algorithms. ETD is very sensitive to the value of the interest function. We observed a large drop in performance for very small changes in the interest value. ETD also performs poorly with low learning rates when the interest is set high. This behaviour can be attributed to the high variance caused by the emphasis.

\subsection{Linear setting}
\textbf{Task description:} We consider two toy tasks to evaluate PTD with linear function approximation.

In the first task, depicted in Figure \ref{fig:linear_mdp_a}, the agent starts in $S1$ and can pick one of the two available actions, $\{up, down\}$, transitioning onto the corresponding chain. A goal state is attached at the end of each chain. Rewards of $+2$ and $-1$ are obtained upon reaching the goal state on the upper and lower chains, respectively. The chain has a sequence of partially observable states (corridor). In these states, both actions move the agent towards the goal by one state.

For the second task, we attach two MDPs of the same kind as used in task 1, as shown in Figure~\ref{fig:linear_mdp_b}. The actions in the connecting states $S1, S2, S3$ transition the agent to the upper and lower chains, but both actions have the same effect in the partially observable states (corridor). A goal state is attached at the end of each chain. To make the task challenging, a stochastic reward (as shown in Figure~\ref{fig:linear_mdp_b}) is given to the agent. In both tasks, the connecting states and the goal states are fully observable and have a unique representation, while the states in the corridor are represented by Gaussian noise, $\mathcal{N}(0.5,1)$.

\begin{figure}[ht]
    \centering
    \begin{subfigure}{0.48\textwidth}
        \centering
        \includegraphics[width=\textwidth, height=3cm]{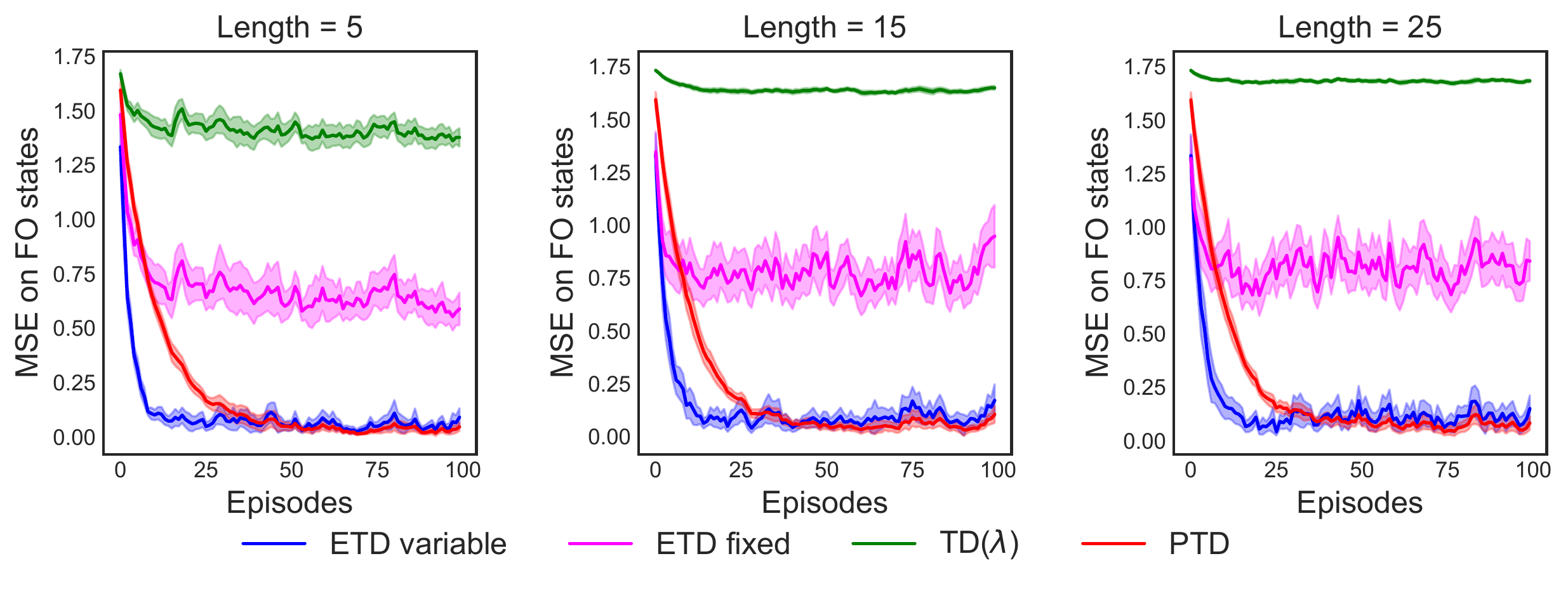}
        \caption{Task 1 results}
        \label{fig:task1_learning_linear}
    \end{subfigure}
    \hfill
    \begin{subfigure}{0.48\textwidth}
        \centering
        \includegraphics[width=\textwidth, height=3cm]{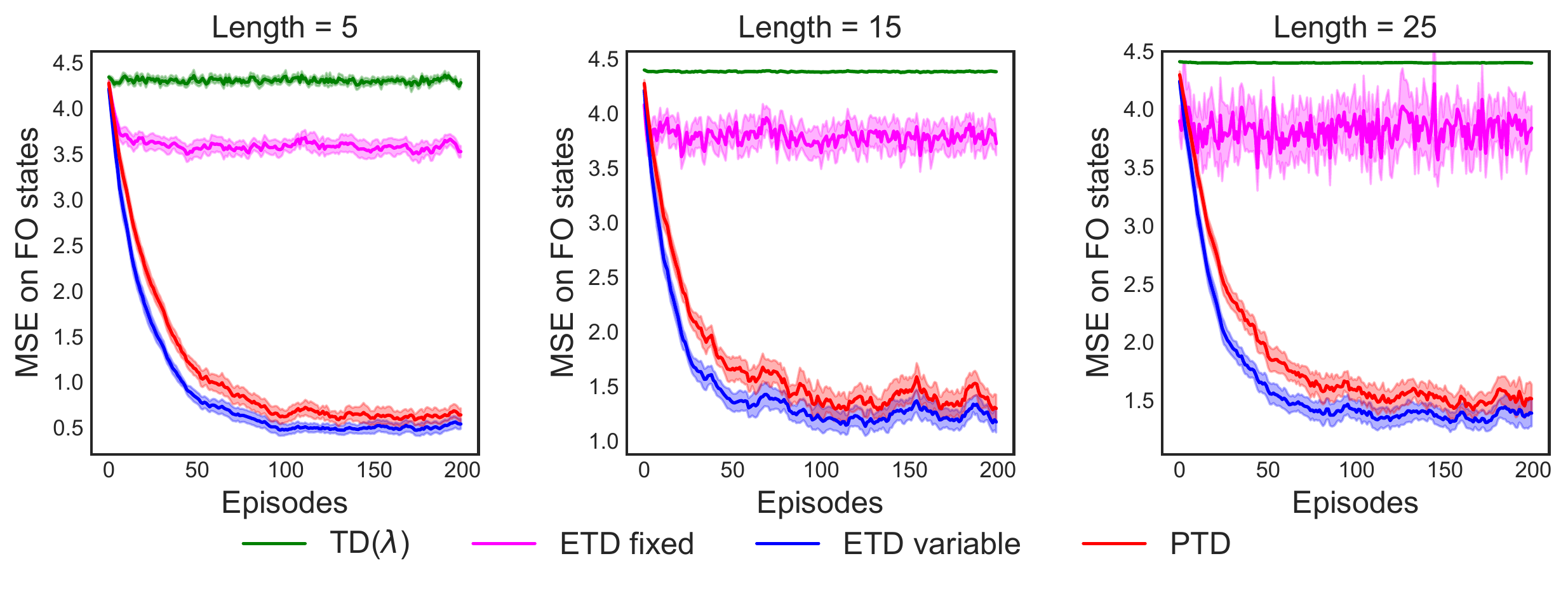}
        \caption{Task 2 results}
        \label{fig:task2_learning_linear}
    \end{subfigure}
    \caption{The mean squared error of fully observable states' values plotted as a function of episodes for various algorithms. Different plots correspond to different lengths of the corridor.}
    \label{fig:linear_learning}
\end{figure}

\textbf{Setup:} We tested 4 algorithms: TD($\lambda$), PTD, and two versions of ETD. The first ETD version has a fixed interest on all the states (referred to as ETD-fixed), and the second variant has a state-dependent interest (referred to as ETD-variable). We set the preference on fully observable states to $1$ and partially observable states to $0$. For a fair comparison, we make $\lambda$ a state-dependent quantity by setting the values of the fully observable states to $0$ and $1$ on partially observable states for TD($\lambda$) and ETD. In the case of ETD-fixed, we set the interest of all states to $0.01$ (selected from $\{0.01, 0.05, 0.1\}$ based on the hyperparameter search, see Appendix \ref{app_subsec:linear_experiments} for details). For ETD-variable, we set the interest to $0.5$ for fully observable states and $0$ on all other states. This choice results in an emphasis of $1$ on fully observable states. ETD-variable yielded very similar performance with various interest values but with different learning rates. We varied the corridor length ($\{5, 10, 15, 20, 25\}$) in our experiments. For each length and algorithm, we chose an optimal learning rate from 20 different values. We used $\gamma=1$, a uniformly random policy, and the value function is predicted only at fully observable states. The results are averaged across 25 seeds, and a confidence interval of $95\%$ is used in the plots\footnote{We use the t-distribution to compute the confidence intervals in all the settings.}.

\textbf{Observations:} A state-dependent $\lambda$ (or $\param$) results in bootstrapping from fully observable states only. Additionally, PTD weighs the updates on various states according to $\param$. The learning curves on both tasks are reported in Figures \ref{fig:linear_learning} and \ref{app_fig:linear_learning}. TD($\lambda)$ performs poorly, and the errors increase with the length of the chain on both tasks. Also, the values of the fully observable states stop improving even as the number of training episodes increases. This is to be expected, as TD($\lambda)$ updates the values of all states. Thus, errors in estimating the partially observable states affect the predictions at the fully observable states due to the generalization in the function approximator.

ETD-fixed performs a very small update on partially observable states due to the presence of a small interest. Nevertheless, this is sufficient to affect the predictions on fully observable states. The error in ETD-fixed increases with a small increase in the interest. ETD-fixed is also highly sensitive to the learning rate. However, ETD-variable performs the best. With the choice of $\lambda$ and interest function described previously, ETD-variable ignores the updates on corridor states and, at the same time, does not bootstrap from these states, producing a very low error in both tasks. This effect is similar to PTD. However, bootstrapping and updating are controlled by a single parameter ($\param$) in PTD, while ETD-variable has two parameters, $\lambda$ and the interest function, and we have to optimize both. Hence, while it converges slightly faster than PTD, this is at the cost of increased tuning complexity.

\subsection{Semi-linear setting}
\label{subsec:semi_linear}

\textbf{Task description:} We consider two grid navigation tasks shown in figures \ref{fig:task1_mdp_semi} and \ref{fig:task2_mdp_semi}. In the first task, the states forming a cross at the center are partially observable. In the second task, $50\%$ of the states are partially observable (sampled randomly for each seed). The agent starts randomly in one of the states in the top left quadrant, and it can choose from four actions ($up, down, left, right$), moving in the corresponding direction by one state. A reward of $+5$ is obtained when the agent transitions to the state at the bottom right corner. The transitions are deterministic and $\gamma=0.99$. To generate the features for these tasks, we first train a single-layer neural network to predict the value function using Monte Carlo targets on the fully observable grid. We use the output of the penultimate layer of the trained network to get the features for the states. We then train a linear function approximator on top of these features to predict the value function of a uniformly random policy. We use three grid sizes ($\{8\times8, 12\times12, 16\times16\}$) for experimentation.

\textbf{Setup:} We consider the same four algorithms in this task. We set $\lambda=0$ and $\param=1$ on fully observable states, and set $\lambda=1$ and $\param=0$ on other states. We set the interest to $0.01$ for ETD-fixed (selected from $\{0.01, 0.05, 0.1\}$ based on the hyperparameter search, see Appendix \ref{app_subsec:semi-linear}). The interest is set to $0.5$ on fully observable states and $0$ on other states for ETD-variable. The results are averaged across 50 seeds, and a confidence interval of $50\%$ is used in the plots\footnote{Picked for clarity of the plotting}.

\begin{figure}[ht]
    \centering
    \begin{subfigure}{0.23\textwidth}
        \centering
        \includegraphics[width=0.95\textwidth, height=0.95\textwidth]{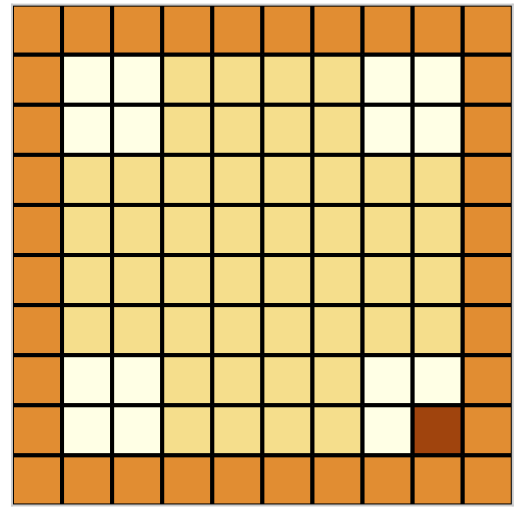}
        \caption{Grid task 1}
        \label{fig:task1_mdp_semi}
    \end{subfigure}
    \hfill
    \begin{subfigure}{0.23\textwidth}
        \centering
        \includegraphics[width=0.95\textwidth, height=0.95\textwidth]{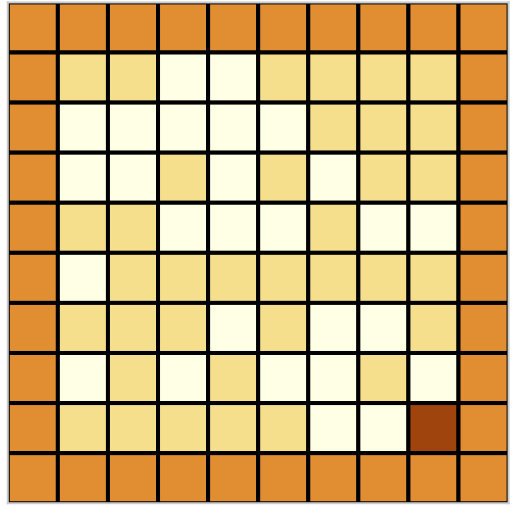}
        \caption{Grid task 2}
        \label{fig:task2_mdp_semi}
    \end{subfigure}
    \hfill
        \begin{subfigure}{0.23\textwidth}
        \centering
        \includegraphics[width=\textwidth, height=0.95\textwidth]{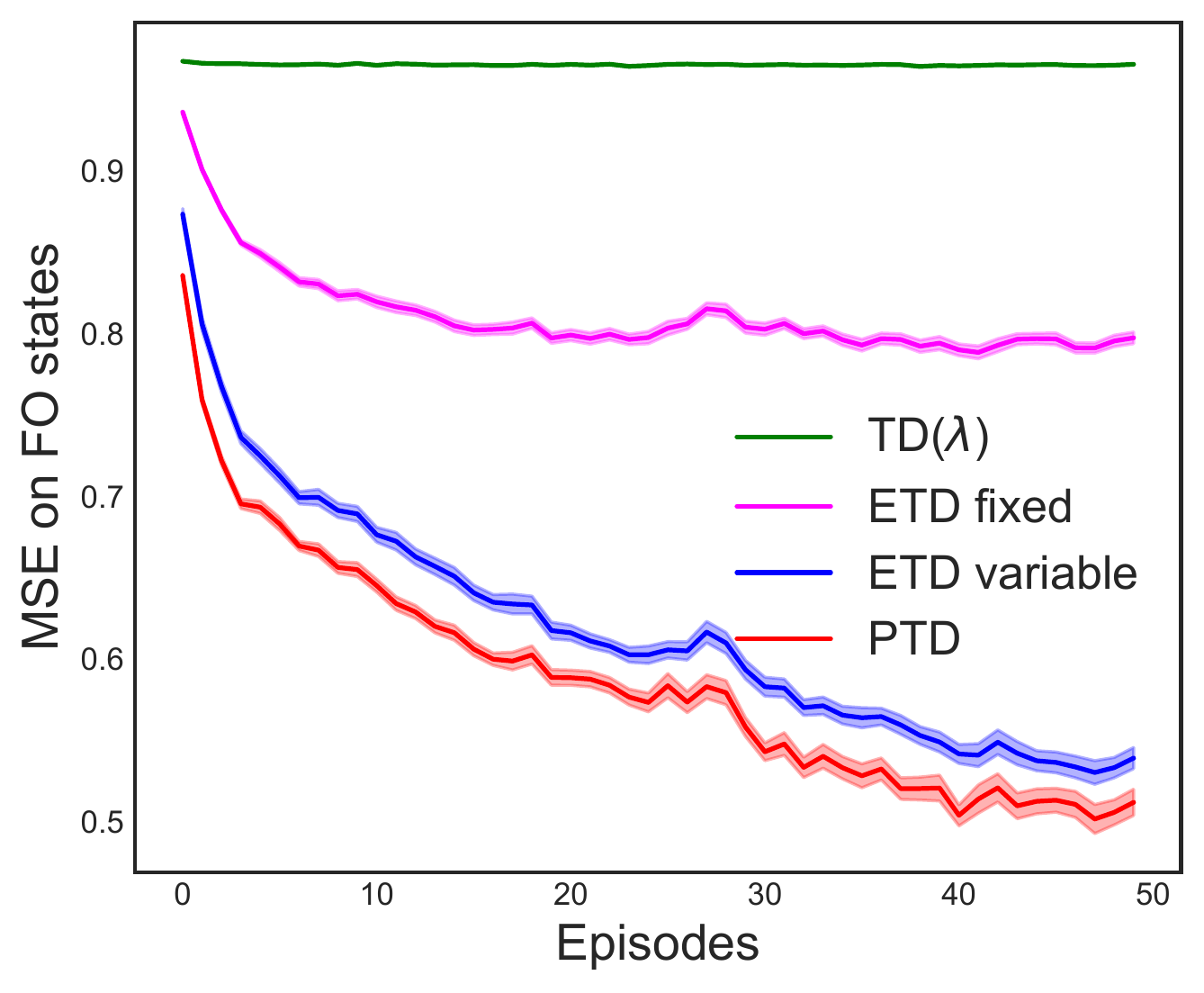}
        \caption{Grid task 1 results}
        \label{fig:task1_learning_semi}
    \end{subfigure}
    \hfill
    \begin{subfigure}{0.23\textwidth}
        \centering
        \includegraphics[width=\textwidth, height=0.95\textwidth]{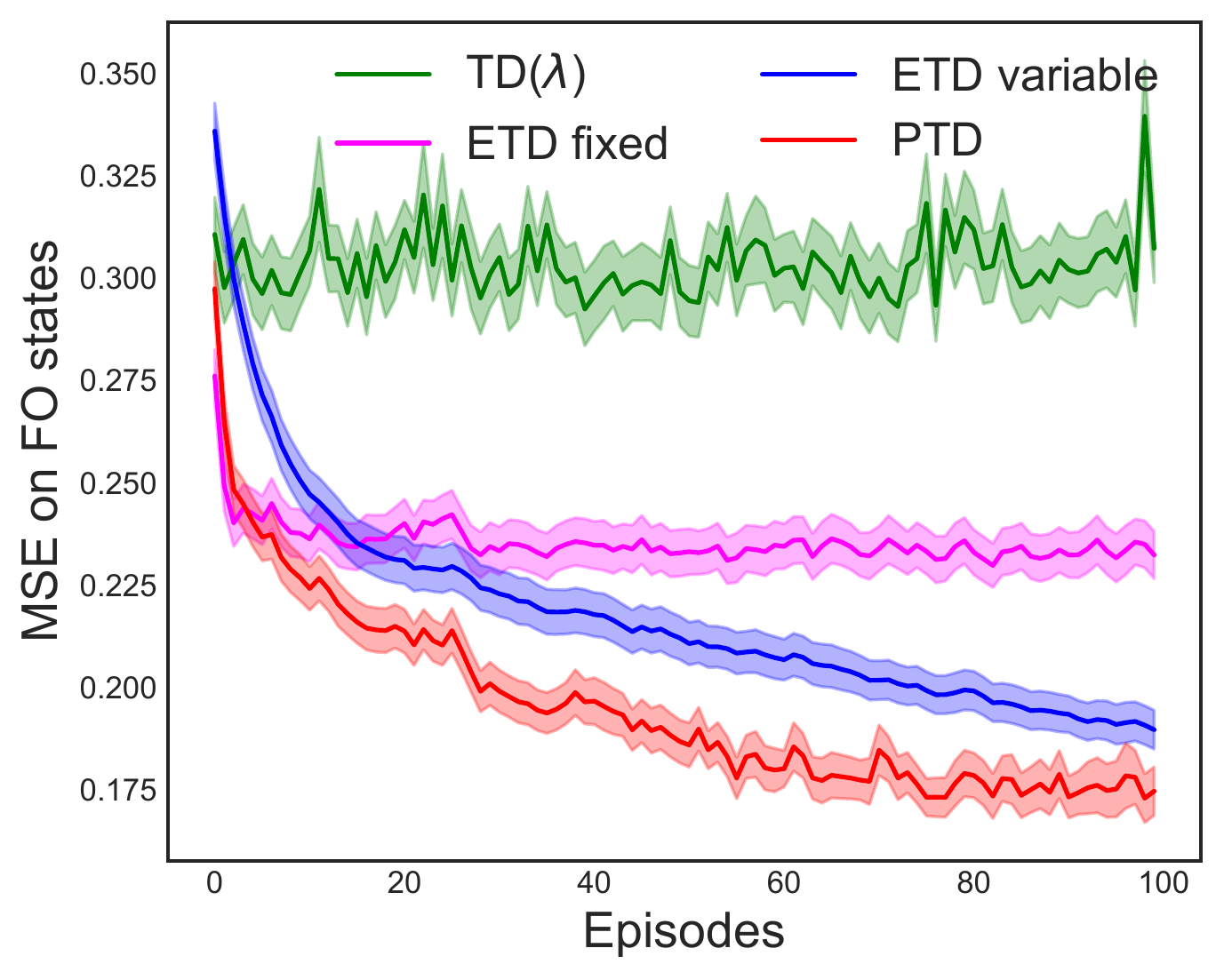}
        \caption{Grid task 2 results}
        \label{fig:task2_learning_semi}
    \end{subfigure}
    \caption{(\textit{Top}) \textbf{Grid tasks:} The dark brown square is the terminal state. The light squares are fully observable and the darker squares are partially observable. The walls are indicated in orange. (\textit{Bottom}) The mean squared error of fully observable states' values as a function of the number of episodes for various algorithms on $16 \times 16$ grid.}
    \label{fig:semilinear_learning}
\end{figure}

\textbf{Observations:} We report the mean squared error of the value function at fully observable states as the performance measure. The learning curves on both tasks, as a function of number of episodes are reported in figures \ref{fig:task1_learning_semi}, \ref{fig:task2_learning_semi}, and \ref{app_fig:semi_learning}. TD($\lambda)$ performed the worst because of the reasons mentioned earlier. Interestingly, ETD-fixed performed much better in this setting compared to the previous one. Its performance dropped significantly as the grid size increased in task 1, but the drop was marginal for task 2. ETD-variable performed much better than ETD-fixed. The performance on both tasks for grid size $8\times8$ and $12\times12$ was the same as PTD. However, there was a slight drop in performance for grid size $16\times16$. This is because, in the larger tasks, the trajectories can vary widely. As a result, the update on a particular state is reweighed differently depending upon the trajectory, causing variance. The hyperparameter tuning experiments also showed that ETD-variable is sensitive to small changes in learning rate. PTD performs best on all grid sizes and both tasks.

\subsection{Non-linear setting}
\label{subsec:non_linear}

\textbf{Task description:} The aim of this experiment is two-fold: (1) to verify if PTD still learns better predictions when the capacity of the function approximator is decreased, and (2) to verify if PTD is compatible with end-to-end training of a non-linear function approximator. We use the same grid tasks described in section \ref{subsec:semi_linear} for experimentation.

\subsubsection{Forward view (Returns)}
\label{subsubsec:forward}

\begin{figure}[ht]
    \centering
    \begin{subfigure}{0.48\textwidth}
        \centering
        \includegraphics[width=\textwidth, height=3cm]{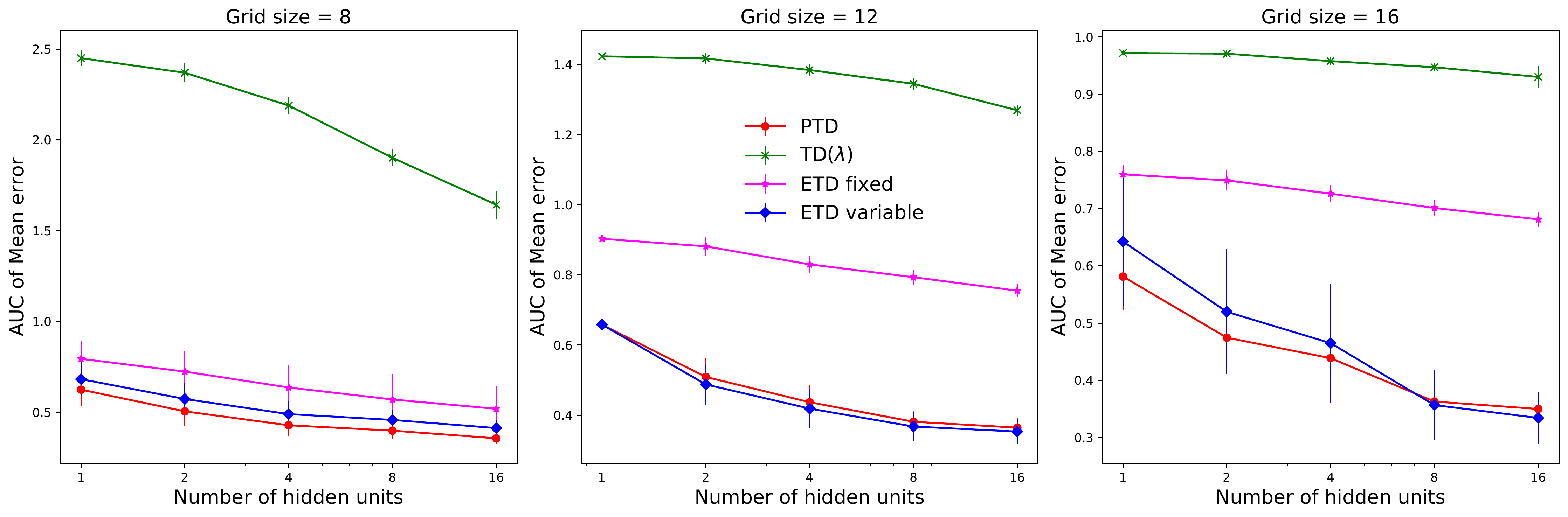}
        \caption{Grid task 1 results}
        \label{fig:Grid1_endtoend}
    \end{subfigure}
    \begin{subfigure}{0.48\textwidth}
        \centering
        \includegraphics[width=\textwidth, height=3cm]{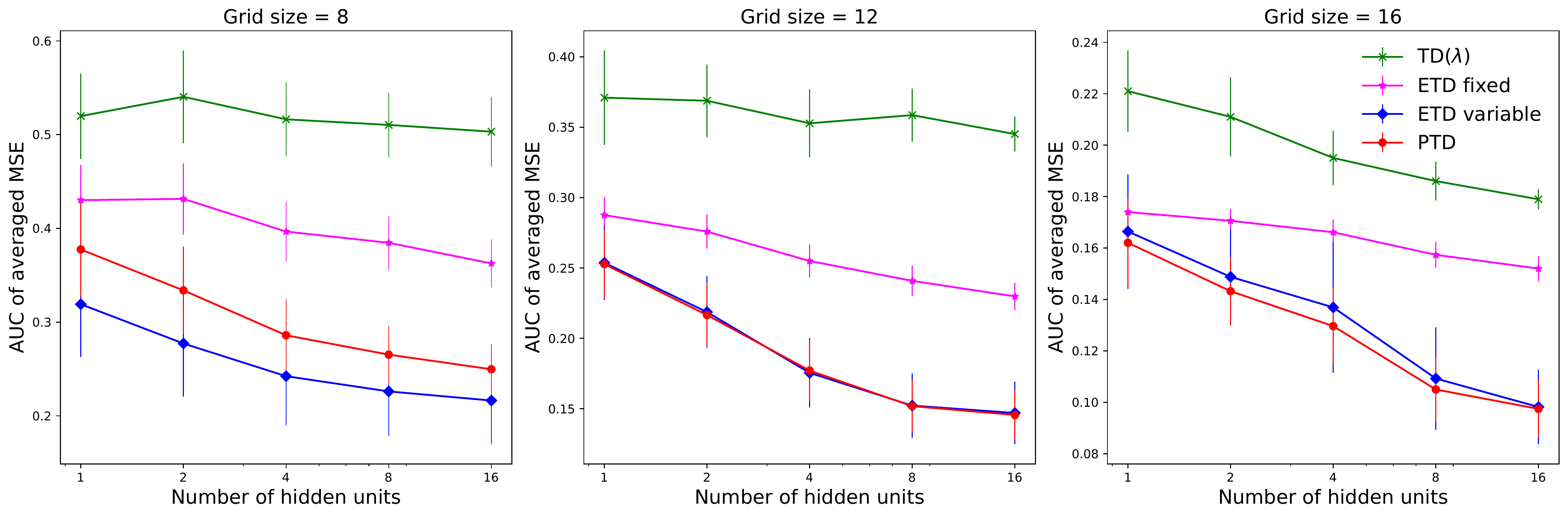}
        \caption{Grid task 2 results}
        \label{fig:Grid2_endtoend}
    \end{subfigure}
    \caption{The area under the curve (AUC) of mean squared error of the values of the fully observable states, averaged across seeds and episodes as a function of the number of hidden units for various algorithms. The error bars indicate the confidence interval of $90\%$ across 50 seeds. Different plots corresponds to different sizes of the grid task.}
    \label{fig:Grids_endtoend}
\end{figure}

\textbf{Setup:} We consider the same four algorithms and set $\lambda$, $\param$ and interest values as described in the previous section. We use a single-layered neural network whose input is a one-hot vector (size $n \times n$) indicating the agent's location in the grid, if the state is fully observable, and Gaussian noise otherwise. We used ReLU non-linearity in the hidden layer. We experimented with $\{1, 2, 4, 8, 16 \}$ hidden units to check if PTD can estimate the values when the approximation capacity is limited. We searched for the best learning rate for each algorithm and network configuration from a range of values. We picked the best learning rate based on the area under the error curve (AUC) of the MSE averaged across episodes and 25 seeds. We used 250 episodes for all the tasks. We trained the networks using the forward-view returns for all algorithms.

\textbf{Observations:} The results are presented in Figure \ref{fig:Grids_endtoend}. Each point in the plot is the AUC of the MSE for an algorithm and neural network configuration pair averaged across episodes and seeds. The error bars indicate the confidence interval of 90\% across seeds. TD($\lambda$) performs poorly because of the reasons stated before. ETD-fixed performs slightly better than TD($\lambda$), but the error is still high. PTD performs better than the other algorithms on both tasks and across all network sizes. The error is high only in the $16 \times 16$ grid for really small networks, with four or fewer hidden units. This is because the number of fully observable states is significantly larger for this grid size, and the network is simply not large enough to provide a good approximation for all these states. Nevertheless, PTD's predictions are significantly better than other algorithms in this setting. The behavior of ETD-variable is similar, but the variance is much higher. As before, hyperparameter tuning experiments indicate that ETD is also much more sensitive to learning rates.

\subsubsection{Backward view (Eligibility Traces)}

In this experiment, we are interested in finding out how PTD's eligibility traces would fare in the non-linear setting.

\begin{figure}[ht]
    \centering
    \begin{subfigure}{0.48\textwidth}
        \centering
        \includegraphics[width=\textwidth, height=3cm]{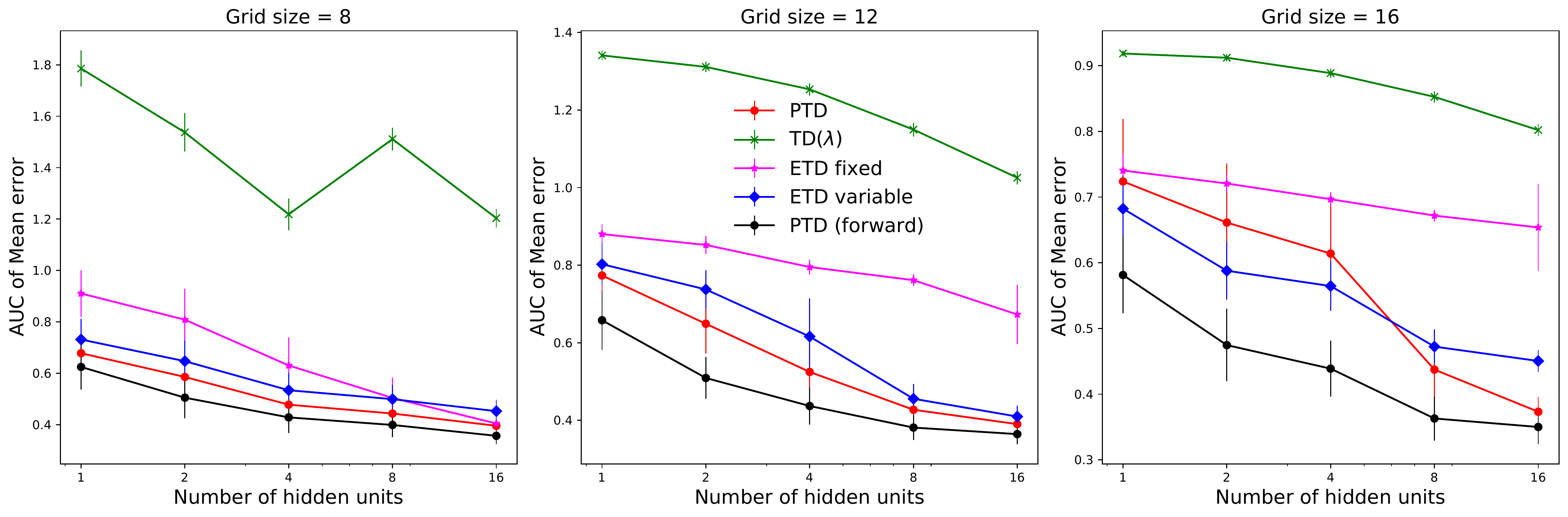}
        \caption{Grid task 1 results}
        \label{fig:Grid1_endtoend_traces}
    \end{subfigure}
    \begin{subfigure}{0.48\textwidth}
        \centering
        \includegraphics[width=\textwidth, height=3cm]{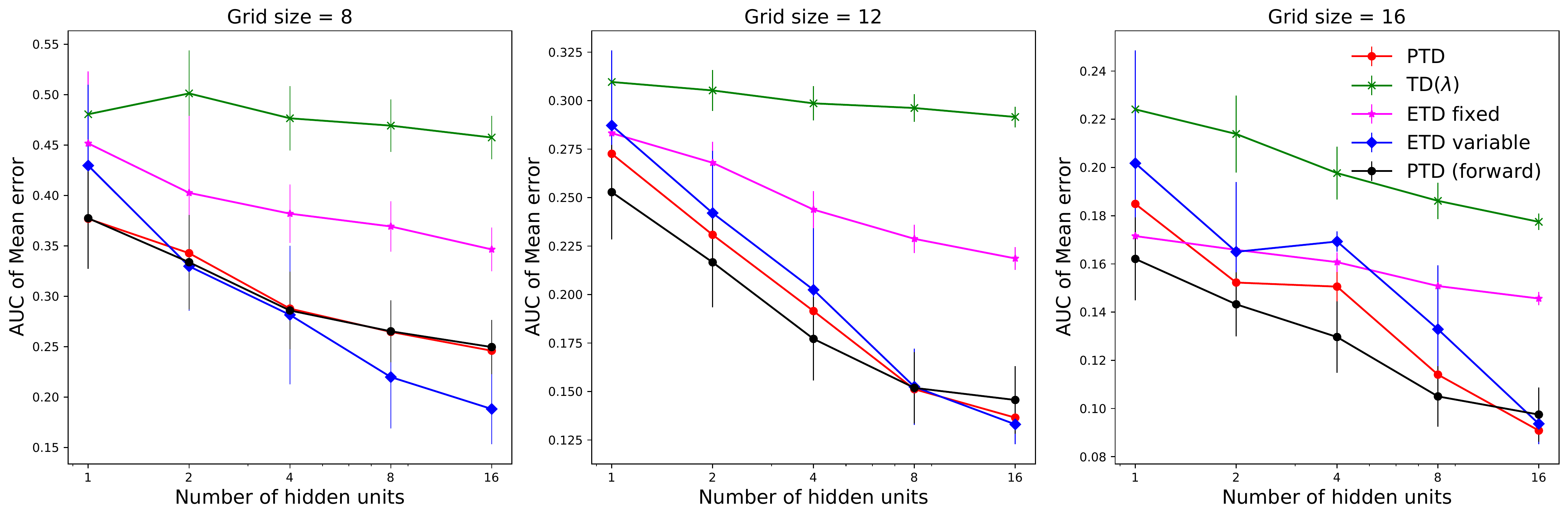}
        \caption{Grid task 2 results}
        \label{fig:Grid2_endtoend_traces}
    \end{subfigure}
    \caption{The area under the curve (AUC) of mean squared error of the values of the fully observable states, averaged across seeds and episodes as a function of the number of hidden units for various algorithms. The error bars indicate the confidence interval of $90\%$ across 50 seeds. Different plots corresponds to different sizes of the grid task.}
    \label{fig:Grids_endtoend_traces}
\end{figure}

\textbf{Setup:} The set of algorithms tested, network architecture, and the task details are same as Section \ref{subsubsec:forward}. We also followed the same procedure as the previous section to select the learning rate. We trained the networks in an online fashion using eligibility traces for all algorithms.

\textbf{Observations:} Eligibility traces facilitate online learning, and they provide a way to assign credit to the past states, making it a valuable tool in theory. However, the prediction errors of all the algorithms are higher than their forward view counterparts. The drop in performance is because eligibility traces remember the gradient that is computed with respect to the past parameters, which introduces significant bias. Besides, the bias in the eligibility traces caused instabilities in the training procedure of PTD and ETD-variable for high learning rates. The results are presented in Figure \ref{fig:Grids_endtoend_traces}. As shown, the performances of TD($\lambda$) and ETD-fixed performs are still poor compared to PTD and ETD-variable. Backward views of PTD and ETD-variable have slightly more error and variance across the seeds than their forward view equivalents when the capacity of the function approximator is small. However, they match the forward view performance when the capacity is increased. PTD still performs slightly better than ETD-variable in the $16 \times 16$ grids, indicating the usefulness of its eligibility traces with neural networks.

\section{Conclusions and Future Work}

We introduced Preferential TD learning, a novel TD learning method that updates the value function and bootstraps in the presence of state-dependent preferences, which allow these operations to be prioritized differently in different states. Our experiments show that PTD compares favourably to other TD variants, especially on tasks with partial observability. However, partial observability is not a requirement to use our method. PTD can be useful in other settings where updates can benefit from re-weighting. For instance, in an environment with \textit{bottleneck} states, having a high preference on such states could propagate credit faster. Preliminary experiments in Appendix \ref{app_sec:actor_critic} corroborate this claim, and further analysis could be interesting.

We set the preference function from the beginning to a fixed value in our experiments, but an important direction for future work is to learn or adapt it based on data. The preference function plays a dual role: its presence in the targets controls the amount of bootstrapping from future state values, and its presence in the updates determines the amount of re-weighting. Both can inspire learning approaches. The bootstrapping view opens the door to existing techniques such as meta-learning~\citep{white2016greedy, xu2018meta, zhao2020meta}, variance reduction~\citep{kearns2000bias, downey2010temporal}, and gradient interference~\citep{bengio2020interference}. Alternatively, we can leverage methods that identify partially observable states or important states (e.g. bottleneck states)~\citep{mcgovern2001automatic, stolle2002learning, bacon2017option, harutyunyan2019hindsight} to learn the preference function. We also believe that PTD would be useful in transfer learning, where one could learn parameterized preferences in a source task and use them in the updates on a target task to achieve faster training.

We demonstrated the utility of preferential updating in on-policy policy evaluation. The idea of preferential updating could also be exploited in other RL settings, such as off-policy learning, control, or policy gradients, to achieve faster learning. Our algorithm could also be applied to learning General Value Functions~\citep{comanici2018knowledge, sutton2017horde}, an exciting future direction.

As discussed earlier, our method bridges the gap between Emphatic TD and TD($\lambda$). Eligibility traces in PTD propagate the credit to the current state based on its preference, and the remaining credit goes to the past states. This idea is similar to the gradient updating scheme in the backpropagation through time algorithm, where the \textit{gates} in recurrent neural networks control the flow of credit to past states. We suspect an interesting connection between the eligibility trace mechanism in PTD and backpropagation through time. In the binary preference setting ($\param \in \{0, 1\}$), our method completely discards zero preference states in the MDP from bootstrapping and updating. The remaining states, whose preference is non-zero, participate in both. This creates a level of \textit{state abstraction} in the given problem.

Our ultimate goal is to train a small agent capable of learning in a large environment, where the agent can't represent values correctly everywhere since the state space is enormous~\citep{silver2021reward}. A logical step towards this goal is to find ways to effectively use function approximators with limited capacity compared to the environment. Therefore, it should prefer to update and bootstrap from a few states well rather than poorly estimate the values for all states. PTD helps achieve this goal and opens up avenues for developing more algorithms of this flavour.

\section*{Acknowledgements}
This research was funded through grants from NSERC and the CIFAR Learning in Machines and Brains Program. We thank Ahmed Touati for the helpful discussions on theory; and the anonymous reviewers and several colleagues at Mila for the constructive feedback on the draft.


\bibliography{example_paper}
\bibliographystyle{icml2021}

\newpage
\onecolumn
\appendix

\newtheorem{theorem*}{Theorem}[section]
\newtheorem{assumption*}{Assumption}[section]
\newtheorem{definition*}{Definition}[section]
\newtheorem{lemma*}{Lemma}[section]
\newtheorem{remark*}{Remark}[section]
\newtheorem{corollary*}{Corollary}[section]

\setcounter{section}{0}
\setcounter{lemma}{0}
\setcounter{remark}{0}
\setcounter{theorem}{0}

\counterwithin{figure}{section}
\counterwithin{table}{section}

\section{Appendix}

\subsection{Proofs}
\begin{theorem*}
\label{app:operator_proof}
The expected target of PTD's forward view can be summarized using the operator $$\Tp {\bf v} = B (I - \gamma \pp (I-B))^{-1} (\rp + \gamma \pp B {\bf v}) + (I-B){\bf v},$$
where $B$ is the $|\s| \times |\s|$ diagonal matrix with $\param(s)$ on its diagonal and $\rp$ and $\pp$ are the state reward vector and state-to-state transition matrix for policy $\pi$.
\end{theorem*}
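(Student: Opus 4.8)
The plan is to compute the expected offline PTD target one state at a time, stack the results into a vector indexed by $\s$, and recognize the linear fixed-point equation that vector satisfies. Throughout, as in the rest of the expected-update analysis, I treat $\w$ as fixed and write ${\bf v} = \fmat\w$. First I would define $\hat{G}^{\param}\in\R^{|\s|}$ by $\hat{G}^{\param}(s) = \Ep[G_t^{\param}\mid s_t = s]$. This is well defined: unrolling the recursion in equation~\ref{eq:preferential_td_returns} writes $G_t^{\param}$ as $\sum_{k\ge 0}\big(\prod_{i=1}^{k}\gamma(1-\param(s_{t+i}))\big)\big(r_{t+k+1}+\gamma\param(s_{t+k+1})\w^T\fvec(s_{t+k+1})\big)$, an absolutely convergent series in a finite MDP with $\gamma<1$ because the weights are dominated by $\gamma^k$ and the rewards and (for fixed $\w$) the bootstrap terms are bounded, so the series is integrable and expectations may be passed through it. From the update in equation~\ref{eq:offline_PTD}, the expected target at $s$ is $\param(s)\hat{G}^{\param}(s) + (1-\param(s))\,\w^T\fvec(s)$, i.e.\ in vector form the expected target equals $B\hat{G}^{\param} + (I-B){\bf v}$.

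Next I would obtain a linear equation for $\hat{G}^{\param}$. Applying $\Ep[\,\cdot\mid s_t = s]$ to equation~\ref{eq:preferential_td_returns} and splitting by the first transition, the reward term contributes $\rp$, the bootstrap term contributes $\gamma\pp B{\bf v}$, and the recursive term contributes $\gamma\pp(I-B)\hat{G}^{\param}$, where for the last one I use the Markov property together with time-homogeneity, namely $\Ep[G_{t+1}^{\param}\mid s_t=s,\,s_{t+1}=s'] = \Ep[G_{t+1}^{\param}\mid s_{t+1}=s'] = \hat{G}^{\param}(s')$. This yields
$$\hat{G}^{\param} = \rp + \gamma\pp B{\bf v} + \gamma\pp(I-B)\hat{G}^{\param}.$$
Since $\pp$ is stochastic and $I-B$ is diagonal with entries $1-\param(s)\in[0,1]$, the matrix $\gamma\pp(I-B)$ has row sums at most $\gamma<1$, so its spectral radius is strictly below $1$ and $\big(I-\gamma\pp(I-B)\big)^{-1} = \sum_{k\ge 0}\big(\gamma\pp(I-B)\big)^k$ exists. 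Solving gives $\hat{G}^{\param} = \big(I-\gamma\pp(I-B)\big)^{-1}\big(\rp + \gamma\pp B{\bf v}\big)$.

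Substituting this back into $B\hat{G}^{\param} + (I-B){\bf v}$ produces exactly $B\big(I-\gamma\pp(I-B)\big)^{-1}\big(\rp+\gamma\pp B{\bf v}\big) + (I-B){\bf v} = \Tp{\bf v}$, which is the claimed operator; since ${\bf v}=\fmat\w$ was arbitrary in the span of $\fmat$ and the identity is formally the same for any ${\bf v}$, the statement follows. I expect the only real obstacle to be the careful handling of the infinite-horizon preferential return rather than the algebra: verifying that $G_t^{\param}$ is almost surely well defined and integrable so that the recursion may be taken in expectation, and dealing with the boundary case $\gamma=1$ used elsewhere in the paper (there one must invoke episodic termination via an absorbing state, or $\param$ being bounded away from $0$ on the recurrent class, to keep $I-\gamma\pp(I-B)$ invertible). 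Everything else is a direct unroll-and-collect computation.
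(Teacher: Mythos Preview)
Your proposal is correct and follows essentially the same route as the paper: both compute the expected target $B\hat G^{\param}+(I-B){\bf v}$ and identify $\hat G^{\param}$ via the Neumann series $(I-\gamma\pp(I-B))^{-1}$. The only cosmetic difference is that the paper unrolls the recursion for $G_t^{\param}$ into an explicit infinite series and then collapses it, whereas you write the one-step linear fixed-point equation for $\hat G^{\param}$ and invert; your version is slightly more careful about integrability and invertibility, but the content is the same.
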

\begin{proof}
The expected target of Preferential TD in the vector form is given by,
\begin{equation*}
    \Tp {\bf v} = (I-B) v + B \Big( \rp + \gamma \pp B {\bf v} + \gamma \pp (I-B) \rp + \gamma^2 \pp (I-B) \pp B {\bf v} + \dots \Big).
\end{equation*}
We can now express the reward terms and the value terms compactly by using the Neumann series expansion.
\begin{align*}
    \Tp {\bf v} &= (I-B) {\bf v} + B \Big(\rp + \gamma \pp (I-B) \rp + (\gamma \pp (I-B))^2 \rp + \dots \\
    &\qquad + \gamma \pp B {\bf v} + \gamma^2 \pp (I-B) \pp B {\bf v} + \gamma^3 (\pp (I-B))^2 \pp B {\bf v} + \dots \Big),\\
    &= (I-B) {\bf v} + B \Big((I - \gamma \pp (I-B))^{-1} \rp + \gamma (I - \gamma \pp (I-B))^{-1} \pp B {\bf v} \Big), \\
    &= B (I - \gamma \pp (I-B))^{-1} (\rp + \gamma \pp B {\bf v}) + (I-B) {\bf v}.
\end{align*}
\end{proof}

\begin{lemma*}
\label{app:keyA_keyB_def_proof}
The expected quantities $\keyA$ and $\keyb$ are given by $\keyA = \fmat^T D^\pi B (I - \pbeta)$ and $\keyb = \fmat^T D^\pi B (I - \gamma \pp)^{-1} \rp$.
\end{lemma*}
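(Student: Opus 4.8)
The plan is to reduce both expectations to the limiting expected eligibility trace, in the style of the Emphatic TD analysis. First I would unroll the recursion $\e_t = \gamma(1-\param(s_t))\e_{t-1} + \param(s_t)\fvec(s_t)$ from $\e_{-1}=0$, obtaining $\e_t = \sum_{k=0}^{t}\big(\prod_{i=k+1}^{t}\gamma(1-\param(s_i))\big)\param(s_k)\fvec(s_k)$, so that $\e_t$ is a preference- and $\gamma$-weighted sum of past feature vectors.

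Then I would compute $\keyb = \Edis[\e_t r_{t+1}]$ and $\keyA = \Edis[\e_t(\fvec(s_t)-\gamma\fvec(s_{t+1}))^T]$ by summing the contribution of each lag $m=t-k$ under the stationary chain. Conditioning the $k$-th summand on $s_k=s$ and propagating $m$ steps forward, each step contributes a factor $\pp$ and its accompanying weight $\gamma(1-\param(\cdot))$ contributes a factor $\gamma(I-B)$, where $B$ is the diagonal preference matrix; hence the conditional expectation of the decay product landing at $s_t$ equals $\big[(\gamma\pp(I-B))^m\big]_{s,\cdot}$, and averaging the leading $\param(s_k)\fvec(s_k)$ against $\dis$ supplies the prefactor $\fmat^T D^\pi B$. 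Summing over lags and letting $t\to\infty$ turns the geometric series into $N := (I-\gamma\pp(I-B))^{-1}$, so $\Edis[\e_t\fvec(s_t)^T]\to \fmat^T D^\pi B N\fmat$. The same computation with one extra, un-weighted forward step (no $(I-B)$ factor, since $\fvec(s_{t+1})$ carries no preference) gives $\Edis[\e_t\fvec(s_{t+1})^T]\to \fmat^T D^\pi B N\pp\fmat$, and with $\E_\pi[r_{t+1}\mid s_t]=\rp(s_t)$ in place of the trailing feature it gives $\keyb = \fmat^T D^\pi B N\rp$. Subtracting, $\keyA = \fmat^T D^\pi B N(I-\gamma\pp)\fmat$.

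The remaining step is algebraic. From $N^{-1} = I-\gamma\pp(I-B) = (I-\gamma\pp) + \gamma\pp B$ one gets $N(I-\gamma\pp) = N(N^{-1} - \gamma\pp B) = I - N\gamma\pp B$, and $N\gamma\pp B = \gamma\big(\sum_{k\ge 0}(\gamma\pp(I-B))^k\big)\pp B = \pbeta$ by the definition of $\pbeta$; hence $\keyA = \fmat^T D^\pi B(I-\pbeta)$ (the trailing $\fmat$ being implicit once the value vector is written $\fmat\w$), while the closed form for $\keyb$ is read off directly from the Neumann series above.

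The main obstacle is the probabilistic bookkeeping in the middle step, since $\e_t$ depends on the entire history: one must justify that conditioning the $k$-th summand on $s_k$ decouples the length-$m$ forward segment (Markov property), that the future state $s_{t+1}$ is conditionally independent of $\e_t$ given $s_t$, and that the $t\to\infty$ limit may be exchanged with the lag sum — which holds because $(\gamma\pp(I-B))^m$ is summable ($\pbeta$ is sub-stochastic for $\gamma<1$, and the series still terminates at $\gamma=1$ through bootstrapping), with Assumption~\ref{assump:rapid_mixing} controlling the transient before stationarity. After that, everything is the same matrix manipulation used to derive the operator $\Tp$.
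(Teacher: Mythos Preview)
Your proposal is correct and reaches the same closed forms as the paper, but via a different decomposition. The paper conditions on the \emph{current} state $s_t$, factors $\Ep[\e_t(\fvec(s_t)-\gamma\fvec(s_{t+1}))^T\mid s_t=s]$ into a product by conditional independence, and then derives a \emph{backward} fixed-point equation for $e(s):=\dis(s)\lim_t\Ep[\e_t\mid s_t=s]$ by stepping one time unit into the past and using Bayes' rule to convert $\mathbb{P}(s_{t-1}=\bar s\mid s_t=s)$ into forward transitions weighted by $\dis$; solving $\keyE^T=\fmat^T D^\pi B+\gamma\keyE^T\pp(I-B)$ yields $\keyE^T=\fmat^T D^\pi B N$. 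You instead unroll the trace explicitly, condition each summand on the \emph{past} state $s_k$, and propagate forward $m=t-k$ steps, summing the geometric series in $m$ directly. Your route is more elementary in that it avoids the time-reversal step entirely, at the price of carrying the lag sum through the calculation; the paper's fixed-point formulation is tidier once set up and dovetails with the truncated-series expression needed later in Lemma~\ref{lemma:sampling_bias}. The final algebraic reduction $N(I-\gamma\pp)=I-\pbeta$ is identical in both. One small note: the exchange of limit and lag sum you flag needs only the summability of $(\gamma\pp(I-B))^m$, not Assumption~\ref{assump:rapid_mixing}, since $\Edis$ already places the chain in stationarity.
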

\begin{proof}
\begin{align*}
    \keyA &= \lim_{t\to \infty} \Ep[\keyA(X_t)], \\
    &= \lim_{t\to \infty}\Ep \Big[e_t \Big(\fvec(s_t) - \gamma \fvec(s_{t+1}) \Big)^T \Big], \\
    &= \sum_{s} \dis(s) \lim_{t\to \infty} \Ep \Big[e_t \Big(\fvec(s_t) - \gamma \fvec(s_{t+1}) \Big)^T \Big| s_t = s \Big], \\
    &= \sum_{s} \dis(s) \lim_{t\to \infty} \Ep \Big[\underbrace{\Big(\gamma (1-\param(s_t)) e_{t-1} + \param(s_t) \fvec(s_t)\Big) \Big(\fvec(s_t) - \gamma \fvec(s_{t+1})\Big)^T}_{\text{Independent due to conditioning on $s$}} \Big| s_t = s \Big], \\
    &= \sum_{s} \dis(s) \lim_{t\to \infty} \Ep \Big[\gamma (1-\param(s_t)) e_{t-1} + \param(s_t) \fvec(s_t) \Big| s_t = s \Big]\ \Ep \Big[ \Big(\fvec(s_t) - \gamma \fvec(s_{t+1}) \Big)^T \Big| s_t = s \Big], \\
    &= \sum_{s} e(s) \Big(\fvec(s) - \gamma \sum_{s'} \pp(s'|s) \fvec(s')\Big)^T,
\end{align*}
where $e(s) = \dis(s) \lim_{t\to \infty} \Ep[\gamma (1-\param(s_t)) e_{t-1} + \param(s_t) \fvec(s_t) | s_t = s]$, can be expanded as:
\begin{align*}
    e(s) &= \dis(s) \lim_{t\to \infty} \Ep[\gamma (1-\param(s_t)) e_{t-1} + \param(s_t) \fvec_t| s_t = s] \\
    &= \dis(s) \param(s) \fvec(s) + \gamma (1-\param(s)) \dis(s) \lim_{t \to \infty} \Ep[e_{t-1} | s_t = s] \\
    &= \dis(s) \param(s) \fvec(s) + \gamma (1-\param(s)) \dis(s) \lim_{t \to \infty} \sum_{\Bar{s}, \Bar{a}} \mathbb{P}\{s_{t-1} = \Bar{s}, a_{t-1}=\Bar{a} | s_t = s\} \Ep[e_{t-1} | s_{t-1} = \Bar{s}] \\
    &= \dis(s) \param(s) \fvec(s) + \gamma (1-\param(s)) \dis(s) \sum_{\Bar{s}} \frac{\dis(\Bar{s}) \pp(s|\Bar{s})}{\dis(s)} \lim_{t \to \infty} \Ep[e_{t-1} | s_{t-1} = \Bar{s}] \\
    &= \dis(s) \param(s) \fvec(s) + \gamma (1-\param(s)) \sum_{\Bar{s}} \pp(s|\Bar{s}) e(\Bar{s}).
\end{align*}
We can express these quantities in a matrix as,
\begin{align*}
    \keyE^T &= \fmat^T D B + \gamma \keyE^T \pp (I - B), \\
    &= \fmat^T D B + \gamma \fmat^T D B \pp (I - B) + \gamma^2 \fmat^T D B (\pp (I - B)))^2 + \dots , \\
    &= \fmat^T D B (I - \gamma \pp (I - B))^{-1}.
\end{align*}
We can perform a similar analysis on $\keyb$ and substitute $\keyE^T$ to get,
\begin{align}
    \keyA &= \fmat^T D B (I - \gamma \pp (I - B))^{-1} (I - \gamma \pp) \fmat, \\
    \keyb &= \fmat^T D B (I - \gamma \pp (I - B))^{-1} \rp.
\end{align}
These expressions can be simplified further by considering a new transition matrix $\pbeta$ that accounts for the termination due to bootstrapping and discounted by $\gamma$. In other words, $\pbeta$ is made up of $\pp$, but the transitions are terminated according to $B$ and continued according to $(I-B)$ at each step. This is a sub-stochastic matrix for $\gamma \in [0,1)$ and a stochastic matrix when $\gamma=1$. By definition,
\begin{align}
    \pbeta &= \gamma \pp B + \gamma \pp (I-B) \gamma \pp B + (\gamma \pp (I-B))^2 \gamma \pp B + \dots \nonumber \\
    &= \gamma \Big(\sum_{k=0}^{\infty} (\gamma \pp (I-B))^k \Big) \pp B \label{eq:pbeta_1} \\
    &= \gamma (I - \gamma \pp (I-B))^{-1} \pp B \label{eq:pbeta_2} \\
    &= (I - \gamma \pp (I-B))^{-1} (- \gamma \pp (I-B) + \gamma \pp) \nonumber \\
    &= (I - \gamma \pp (I-B))^{-1} (I - \gamma \pp (I-B) + \gamma \pp - I) \nonumber \\
    &= I - (I - \gamma \pp (I-B))^{-1} (I - \gamma \pp) \nonumber \\
    I - \pbeta &= (I - \gamma \pp (I-B))^{-1} (I - \gamma \pp) \label{eq:pbeta_3}.
\end{align}
Therefore $\keyA$ and $\keyb$ are given by,
\begin{align}
\label{Eq:keyA_keyB}
    \keyA &= \fmat^T D B (I - \pbeta) \fmat, \\
    \keyb &= \fmat^T D B (I - \gamma \pp (I - B))^{-1} \rp.
\end{align}
Under assumption \ref{assump:feature_assumption}, $\norm{\fmat} \leq M$, where $M$ is a constant. This implies each quantity in the above expression is bounded and well-defined.
\end{proof}

\begin{theorem*}
\label{app:forward_backward_proof}
The forward and the backward views of PTD are equivalent in expectation: $$\keyb - \keyA \w = \fmat^T D \Big(\Tp (\fmat \w) - \fmat \w \Big).$$
\end{theorem*}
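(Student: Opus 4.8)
The plan is to work from the right-hand side, plug in the closed form of the operator $\Tp$, and simplify until the matrix expressions for $\keyA$ and $\keyb$ given in Lemma~\ref{remark:keyA_keyB_definition} appear. Writing ${\bf v} = \fmat\w$ in the definition of $\Tp$, the bracketed target minus $\fmat\w$ is $B(I - \gamma\pp(I-B))^{-1}(\rp + \gamma\pp B\fmat\w) + (I-B)\fmat\w - \fmat\w$. The first simplification is immediate, since $(I-B)\fmat\w - \fmat\w = -B\fmat\w$, so the difference collapses into a $\w$-independent ``reward part'' $B(I - \gamma\pp(I-B))^{-1}\rp$ plus a ``value part'' $\gamma B(I - \gamma\pp(I-B))^{-1}\pp B\fmat\w - B\fmat\w$.

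Next I would recognize the value part via the identity $\pbeta = \gamma(I - \gamma\pp(I-B))^{-1}\pp B$, which is exactly equation~\ref{eq:pbeta_2} obtained in the proof of Lemma~\ref{remark:keyA_keyB_definition}. Substituting it, the value part becomes $B\pbeta\fmat\w - B\fmat\w = -B(I-\pbeta)\fmat\w$. Hence $\Tp(\fmat\w) - \fmat\w = B(I - \gamma\pp(I-B))^{-1}\rp - B(I-\pbeta)\fmat\w$.

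Finally, left-multiplying by $\fmat^T D$ gives $\fmat^T D\big(\Tp(\fmat\w) - \fmat\w\big) = \fmat^T D B(I - \gamma\pp(I-B))^{-1}\rp - \fmat^T D B(I-\pbeta)\fmat\,\w$, and by Lemma~\ref{remark:keyA_keyB_definition} the first term is precisely $\keyb$ and the second is $\keyA\w$, which yields $\keyb - \keyA\w$ as claimed.

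There is no real analytic obstacle here: once the closed form of $\Tp$ (from the first theorem of the paper) and the expressions for $\keyA,\keyb$ are in hand, the statement is a short algebraic rearrangement. The only thing to be careful about is the bookkeeping with the Neumann-series identities relating $\pbeta$, $(I - \gamma\pp(I-B))^{-1}$ and $(I-\gamma\pp)$ (equations~\ref{eq:pbeta_2}--\ref{eq:pbeta_3}); in particular one should use the form $\keyb = \fmat^T D B(I - \gamma\pp(I-B))^{-1}\rp$ that drops directly out of the forward view, rather than an equivalent rewriting. A useful consistency check is that both sides are affine in $\w$: matching the $\w$-independent terms pins down the reward part, and matching the linear terms pins down the value part, so the two contributions can be verified separately.
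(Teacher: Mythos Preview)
Your proposal is correct and is essentially the paper's own proof run in reverse: the paper starts from $\keyb-\keyA\w$, expands using Lemma~\ref{remark:keyA_keyB_definition}, substitutes the identity $\pbeta=\gamma(I-\gamma\pp(I-B))^{-1}\pp B$ (equation~\ref{eq:pbeta_2}), and then reassembles the operator $\Tp$, whereas you start from $\fmat^T D(\Tp(\fmat\w)-\fmat\w)$ and unwind the same identities to reach $\keyb-\keyA\w$. The algebraic content and the single nontrivial step (invoking equation~\ref{eq:pbeta_2}) are identical.
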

\begin{proof}
We have $\keyA = \fmat^T D B (I - \pbeta) \fmat$ and $\keyb = \fmat^T D B (I - \gamma \pp (I - B))^{-1} \rp$.
\begin{align*}
    \keyb - \keyA \w &= \fmat^T D B (I - \gamma \pp (I - B))^{-1} \rp - \fmat^T D B (I - \pbeta) \fmat \w, \\
    &= \fmat^T D B (I - \gamma \pp (I - B))^{-1} \rp + \fmat^T D B \pbeta \fmat \w - \fmat^T D B \fmat \w, \\
    &= \fmat^T D B (I - \gamma \pp (I - B))^{-1} \rp + \gamma \fmat^T D B (I - \gamma \pp (I - B))^{-1} \pp B \fmat \w - \fmat^T D B \fmat \w, \\
    &= \fmat^T D \Big(B (I - \gamma \pp (I - B))^{-1} (\rp + \gamma \pp B \fmat \w) - B \fmat \w \Big), \\
    &= \fmat^T D \Big(B (I - \gamma \pp (I - B))^{-1} (\rp + \gamma \pp B \fmat \w) + (I - B) \fmat \w  - \fmat \w \Big), \\
    &= \fmat^T D \Big(\Tp(\fmat \w) - \fmat \w \Big).
\end{align*}
We used the definition of $\pbeta$ (c.f equation \ref{eq:pbeta_2}) in the third step.
\end{proof}

\begin{lemma*}
\label{app:col_sums_proof}
The column sums of $\keyA$ are positive.
\end{lemma*}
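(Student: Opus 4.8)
The plan is to show that $\dis B (I - \pbeta) = \dis B - \dis B \pbeta$, the row vector collecting the column sums of $\keyA$ (here $\dis$ is the stationary distribution written as a row vector, so $\dis B$ is the vector with $i$-th entry $\dis(i)\param(i)$), vanishes when $\gamma = 1$ and is entrywise strictly positive when $\gamma \in [0,1)$. Throughout I use, as the footnote in Section~\ref{section:convergence} permits, that $\param(s) > 0$ for every $s$; I would record two consequences up front: $\dis B$ is a strictly positive vector, and $\gamma \pp (I - B)$ has every row sum at most $1 - \min_s \param(s) < 1$ for all $\gamma \in [0,1]$, so $(I - \gamma\pp(I-B))^{-1}$ and hence $\pbeta$ remain well defined even at the endpoint $\gamma = 1$. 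The whole argument then reduces to comparing $\dis B \pbeta$ with $\dis B$ entrywise.

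First I would treat $\gamma = 1$, where the comparison is an equality. Using the closed form $\pbeta = \gamma(I - \gamma\pp(I-B))^{-1}\pp B$ from equation~\ref{eq:pbeta_2}, at $\gamma = 1$ this reads $\pbeta = (I - \pp(I-B))^{-1}\pp B$. Let $u^T = \dis B (I - \pp(I-B))^{-1}$, so $u^T$ is the unique solution of $u^T(I - \pp(I-B)) = \dis B$. I claim $u = \dis$: using $\dis \pp = \dis$ (stationarity),
\[
\dis \big(I - \pp(I-B)\big) = \dis - \dis \pp (I-B) = \dis - \dis(I-B) = \dis B ,
\]
and uniqueness (the matrix is invertible by the remark above) gives $u = \dis$. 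Hence $\dis B \pbeta = u^T \pp B = \dis \pp B = \dis B$, i.e.\ $\dis B (I - \pbeta) = 0$ at $\gamma = 1$.

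For $\gamma \in [0,1)$ I would leverage monotonicity in $\gamma$. From equation~\ref{eq:pbeta_1}, $\pbeta = \gamma\big(\sum_{k=0}^{\infty}(\gamma\pp(I-B))^k\big)\pp B$, so every entry of $\pbeta$ is a power series in $\gamma$ with nonnegative coefficients, and therefore $\pbeta(\gamma) \le \pbeta(1)$ entrywise. Left-multiplying by the nonnegative vector $\dis B$ gives $\dis B \pbeta(\gamma) \le \dis B \pbeta(1) = \dis B$. To upgrade this to a strict inequality in each coordinate $j$: the $\gamma=1$ identity gives $[\dis B \pbeta(1)]_j = [\dis B]_j = \dis(j)\param(j) > 0$, so there exists a state $i$ with $[\pbeta(1)]_{i,j} > 0$, i.e.\ $[(\pp(I-B))^k \pp B]_{i,j} > 0$ for some $k \ge 0$; then $[\pbeta(\gamma)]_{i,j} = \sum_{k\ge 0}\gamma^{k+1}[(\pp(I-B))^k\pp B]_{i,j} < [\pbeta(1)]_{i,j}$ for $\gamma < 1$, and since $[\dis B]_i = \dis(i)\param(i) > 0$ this forces $[\dis B \pbeta(\gamma)]_j < [\dis B]_j$. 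Thus all column sums of $\keyA$ are strictly positive for $\gamma \in [0,1)$, which, together with Lemma~\ref{remark:positive_row_sums} and the positivity of the diagonal, supplies exactly the hypotheses invoked in Lemma~\ref{lemma:keyA_PD}.

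The step I expect to be the main obstacle is the strictness in the last paragraph: one must be sure the power-series coefficients $[(\pp(I-B))^k\pp B]_{i,j}$ are genuinely positive for a suitable $i$ and $k$, and this is where irreducibility of the chain, together with $\param(j) > 0$ (so the final $\pp B$ transition into $j$ is not killed), really enters. By contrast the $\gamma = 1$ identity $\dis B \pbeta = \dis B$ is the easy part, resting only on $\dis \pp = \dis$ and the invertibility of $I - \pp(I-B)$; and the remaining bookkeeping — that $\pbeta$ and $(I - \gamma\pp(I-B))^{-1}$ stay finite at $\gamma = 1$ — is again just $\min_s \param(s) > 0$.
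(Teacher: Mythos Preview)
Your proof is correct and reaches the same destination as the paper --- the identity $\dis B\,\pbeta = \dis B$ at $\gamma=1$, followed by monotonicity in $\gamma$ --- but by a shorter road. The paper expands $\dis B\,\pbeta$ via the Neumann series \eqref{eq:pbeta_1}, introduces the auxiliary matrix $\mathbf{S}=-I+B\sum_{k\ge 0}\pp((I-B)\pp)^k$, shows $\dis\,\mathbf{S}$ satisfies the fixed-point relation $\dis\,\mathbf{S}=\dis\,\mathbf{S}(I-B)\pp$, and then iterates this to conclude $\dis\,\mathbf{S}=0$; this collapses the expression to $\dis\pp B=\dis B$. You bypass all of that by working with the closed form \eqref{eq:pbeta_2} and directly checking $\dis\bigl(I-\pp(I-B)\bigr)=\dis B$, which is a one-line consequence of stationarity $\dis\pp=\dis$; invertibility of $I-\pp(I-B)$ (from $\min_s\param(s)>0$) then gives $\dis B(I-\pp(I-B))^{-1}=\dis$ immediately. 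For $\gamma<1$ the paper simply asserts $\dis B\,\pbeta<\dis B$, whereas you supply the missing strictness argument via the nonnegative power-series coefficients. What your approach buys is brevity and transparency; what the paper's series route buys is that it never names $(I-\pp(I-B))^{-1}$ at $\gamma=1$, so it is marginally less sensitive to the blanket assumption $\param(s)>0$ (it only needs $((I-B)\pp)^n\to 0$, which holds under irreducibility as soon as some $\param(s)>0$).
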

\begin{proof}
Let $\gamma=1$. The column sums of the key matrix is given by $1^T D B (I - \pbeta) = \dis B - \dis B \pbeta$. We can expand $\dis B \pbeta$ using equation \ref{eq:pbeta_1},
\begin{align*}
   \dis B \pbeta &= \dis B \Big(\sum_{k=0}^{\infty} (\pp (I-B))^k \Big) \pp B, \\
   &= \dis \Big(B + B \sum_{k=1}^{\infty} (\pp (I-B))^k \Big) \pp B, \\
   &= \dis \Big(I - (I-B) + B \sum_{k=1}^{\infty} (\pp (I-B))^k \Big) \pp B, \\
   &= \dis \Big(I + \Big(-I + B \sum_{k=1}^{\infty} \pp ((I-B) \pp)^{k-1} \Big) (I-B) \Big) \pp B, \\
   &= \dis \Big(I + \Big(\underbrace{-I + B \sum_{k=0}^{\infty} \pp ((I-B) \pp)^k}_{=\mathbf{S}} \Big) (I-B) \Big) \pp B, \\
   &= \Big(\dis + \dis \mathbf{S} (I-B) \Big) \pp B \numberthis \label{eq:dis_S}.
\end{align*}
Consider $\dis \mathbf{S}$,
\begin{align*}
    \dis \mathbf{S} &= \dis \Big(-I + B \sum_{k=0}^{\infty} \pp ((I-B) \pp)^k \Big), \\
    &= \dis \Big(-I + B \pp + B \sum_{k=1}^{\infty} \pp ((I-B) \pp)^k \Big), \\
    &= \dis \Big(-I + \pp - (I - B) \pp + B \sum_{k=1}^{\infty} \pp ((I-B) \pp)^k \Big), \\
    &= \underbrace{-\dis + \dis \pp}_{=0,\ \dis \pp = \dis} + \dis \Big(-(I - B) \pp + B \sum_{k=1}^{\infty} \pp ((I-B) \pp)^k \Big), \\
    &= \dis \Big(-I + B \sum_{k=1}^{\infty} \pp ((I-B) \pp)^{k-1} \Big) (I-B) \pp, \\
    &= \dis \Big(-I + B \sum_{k=0}^{\infty} \pp ((I-B) \pp)^k \Big) (I-B) \pp, \\
    &= \dis \mathbf{S} (I-B) \pp.
\end{align*}
We can recursively expand $\dis \mathbf{S}$ $n$ times to get $\dis \mathbf{S} = \dis \mathbf{S} ((I-B) \pp)^n$. Notice that $(I-B) \pp$ is a sub-stochastic matrix, therefore, the elements of the matrix keep getting smaller as $n$ gets bigger. In fact, $\lim_{n \to \infty} ((I-B) \pp)^n = 0$. Therefore, $\dis \mathbf{S} = 0$ as $n \to \infty$. We can use this result in equation \ref{eq:dis_S} to get,
\begin{equation}
   \dis B \pbeta = \dis \pp B \ = \dis B.
\end{equation}
This implies, $\dis B (I - \pbeta) = 0$ for $\gamma=1$ and $\dis B (I - \pbeta) > 0$ for $\gamma \in [0, 1)$ as $\dis B \pbeta < \dis B$. Therefore, column sums are positive.
\end{proof}

\begin{lemma*}
\label{app:sampling_bias_proof}
We have, $\sum_{t=0}^{\infty} \norm{\Ep[A(X_t)|X_0] - \keyA} \leq C_1$ and $\sum_{t=0}^{\infty} \norm{\Ep[b(X_t)|X_0] - \keyb} \leq C_2$ under assumption \ref{assump:rapid_mixing}, where $C_1$ and $C_2$ are constants.
\end{lemma*}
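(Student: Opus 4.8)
The plan is to prove that $\norm{\Ep[A(X_t)\mid X_0]-\keyA}$ and $\norm{\Ep[b(X_t)\mid X_0]-\keyb}$ decay geometrically in $t$ and then sum the resulting geometric series. I would begin with two uniform bounds. Since the rows of $\fmat$ have norm at most $\norm{\fmat}\le M$ (Assumption \ref{assump:feature_assumption}), a one-line induction on $e_t=\gamma(1-\param(s_t))e_{t-1}+\param(s_t)\fvec(s_t)$ (using $e_{-1}=0$, $\param(s)\in[0,1]$ and $\gamma\le1$) gives $\norm{e_t}\le M$ for every $t$; with $R:=\max_{s,a}|r(s,a)|<\infty$ this yields $\norm{A(X_t)}\le(1+\gamma)M^2$ and $\norm{b(X_t)}\le MR$ uniformly, so the finitely many early terms of the sums are harmless and only a geometric tail bound is needed. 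Next I would decouple the trace from the one-step quantities: $e_t$ is a function of $s_0,\dots,s_t$ and, conditionally on $s_t$, it is independent of $s_{t+1}$ and of $a_t$, so
\[
\Ep[A(X_t)\mid X_0]=\sum_s\prob(s_t=s\mid X_0)\,h_t(s)\,g(s)^T,\qquad \keyA=\sum_s\dis(s)\,\bar h(s)\,g(s)^T,
\]
where $g(s):=\fvec(s)-\gamma\sum_{s'}\pp(s'\mid s)\fvec(s')$, so $\norm{g(s)}\le(1+\gamma)M$; $h_t(s):=\Ep[e_t\mid s_t=s,X_0]$; and $\bar h(s):=\lim_{t\to\infty}\Ep[e_t\mid s_t=s]$, which exists by the computation in the proof of Lemma \ref{remark:keyA_keyB_definition} and satisfies $\norm{\bar h(s)}\le M$ by the trace bound just obtained.

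Adding and subtracting $\prob(s_t=s\mid X_0)\bar h(s)$ splits the error into $\sum_s(\prob(s_t=s\mid X_0)-\dis(s))\bar h(s)g(s)^T$, whose norm is $\le(1+\gamma)M^2\sum_s|\prob(s_t=s\mid X_0)-\dis(s)|\le(1+\gamma)M^2|\s|C\rho^t$ by Assumption \ref{assump:rapid_mixing}, plus $\sum_s\prob(s_t=s\mid X_0)(h_t(s)-\bar h(s))g(s)^T$, for which it is enough to bound $\max_s\norm{h_t(s)-\bar h(s)}$ geometrically. This last bound — which expresses that the eligibility trace forgets its initial condition geometrically fast in expectation — is the step I expect to be the main obstacle, since when $\gamma=1$ there is no per-step contraction and $\param$ may vanish on many states. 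If $\param\equiv0$ the trace is identically zero and there is nothing to prove, so assume $\param(s^\star)>0$ for some $s^\star$. The engine of the argument is a block estimate: by irreducibility and aperiodicity there is an $L$ with $\pp^{L}(s,s^\star)\ge\delta>0$ for all $s$, hence over each window of $L$ steps the conditional expectation of $\prod\gamma(1-\param(s_j))$ is at most $\gamma^{L}(1-\delta\param(s^\star))\le1-\delta\param(s^\star)<1$ (keep only the last factor of the window and use $\gamma\le1$); chaining windows via the Markov property gives $\Ep\big[\prod_{j=m}^{t}\gamma(1-\param(s_j))\mid \mathcal F_{m-1}\big]\le C_3\rho_3^{\,t-m}$ with $\rho_3<1$.

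To finish, I would split $e_t$ at $m:=\lceil t/2\rceil$ as $e_t=\big(\prod_{j=m}^{t}\gamma(1-\param(s_j))\big)e_{m-1}+e_t^{(\ge m)}$, where $e_t^{(\ge m)}$ depends only on $s_m,\dots,s_t$ and still satisfies $\norm{e_t^{(\ge m)}}\le M$. The block estimate, together with $\prob(s_t=s\mid X_0)\ge\dis(s)/2$ for $t$ large (Assumption \ref{assump:rapid_mixing}), makes the ``old'' part of $h_t(s)$ geometrically small, and the same holds for the old part of $\bar h(s)$. For the ``recent'' part, $\Ep[e_t^{(\ge m)}\mid s_t=s,X_0]=\sum_u\prob(s_m=u\mid s_t=s,X_0)\,\psi_{t-m}(u,s)$ with $\psi_n(u,s):=\Ep[e_t^{(\ge m)}\mid s_m=u,s_t=s]$ a time-homogeneous quantity independent of $X_0$ and bounded by $M$; applying the block estimate once more — now to the bridge conditioned on its two endpoints — shows $\psi_n(u,s)$ converges as $n\to\infty$, geometrically and uniformly in $u$, to a limit that by consistency equals $\bar h(s)$, so $\Ep[e_t^{(\ge m)}\mid s_t=s,X_0]=\bar h(s)+O(\rho_3^{\,t-m})$ uniformly in $s$. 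Combining the old and recent estimates gives $\max_s\norm{h_t(s)-\bar h(s)}\le C'(\rho')^{\,t}$, hence $\norm{\Ep[A(X_t)\mid X_0]-\keyA}\le C_4\rho_4^{\,t}$ for $t\ge t_0$; summing the geometric series and bounding the terms $t<t_0$ by $2(1+\gamma)M^2$ gives the first inequality with a finite $C_1$. The second inequality follows by the identical argument with the bounded scalar $\sum_a\pi(a\mid s)r(s,a)$ in place of $g(s)^T$, yielding a finite $C_2$.
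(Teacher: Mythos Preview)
Your decomposition into a mixing term and a trace-forgetting term is correct and is the same split the paper makes. Where you diverge is in how much work you do on the second piece. You build a block-contraction argument from irreducibility, explicitly flagging $\gamma=1$ as the hard case; but in this paper $\gamma\in[0,1)$ throughout (Section~\ref{section:preliminaries}), so $\prod_{j=m}^t\gamma(1-\param(s_j))\le\gamma^{t-m+1}$ already gives deterministic geometric decay and no probabilistic argument is needed. The paper exploits this directly: it reruns the backward recursion of Lemma~\ref{remark:keyA_keyB_definition} with the time-$t$ marginal $D_t$ in place of $D$ to write $\Ep[A(X_t)\mid X_0]$ in closed matrix form, then bounds the difference from $\keyA$ as one term controlled by $\norm{D_t-D}\le C\rho^t$ (mixing) plus a Neumann-series tail $\sum_{m>t}\norm{(\gamma\pp(I-B))^m}\le K\gamma^{t+1}/(1-\gamma)$; summing the two geometric series over $t$ yields $C_1$.

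Your route would in principle extend to $\gamma=1$, which the paper's does not, but the step ``apply the block estimate to the bridge conditioned on both endpoints to show $\psi_n(u,s)\to\bar h(s)$ uniformly in $u$'' is not justified as written: conditioning on $s_t=s$ makes the backward chain time-inhomogeneous, and you have neither shown that the block contraction survives this conditioning nor identified the limit with $\bar h(s)$. If you want to keep your structure but match the paper's assumptions, replace the block argument everywhere by the trivial bound $\gamma^{t-m+1}$ and handle the recent part by comparing $\prob(s_m=u\mid s_t=s,X_0)$ with its stationary analogue $\dis(u)[\pp^{t-m}]_{u,s}/\dis(s)$ directly via Assumption~\ref{assump:rapid_mixing}; with $m=\lceil t/2\rceil$ both pieces are $O(\max(\rho,\gamma)^{t/2})$, which suffices.
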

\begin{proof}
By similar analysis to finding $\keyA$ (lemma \ref{remark:keyA_keyB_definition}), we have $$\Ep[A(X_t)|X_0] = \fmat^T B D_t (I - \pbeta) \fmat - \fmat^T B D_t \sum_{m=t+1}^{\infty} (\gamma \pp (I-B))^m (I -\gamma \pp) \fmat,$$ where $D_t(i,i) = \prob(s_t = s_i | s_0)$. Now,
\begin{align*}
    &\norm{\Ep[A(X_t)|X_0] - \keyA} = \norm{\fmat^T B (D_t - D) (I - \pbeta) \fmat - \fmat^T B D_t \sum_{m=t+1}^{\infty} (\gamma \pp (I-B))^m (I -\gamma \pp) \fmat}, \\
    &\leq \norm{\fmat^T B (D_t - D) (I - \pbeta) \fmat} + \norm{\fmat^T B D_t \sum_{m=t+1}^{\infty} (\gamma \pp (I-B))^m (I -\gamma \pp) \fmat}, \\
    &\leq \underbrace{\norm{\fmat^T} \norm{B}}_{\text{constant}} \norm{D_t - D} \underbrace{\norm{I - \pbeta} \norm{\fmat}}_{\text{constant}} + \underbrace{\norm{\fmat^T} \norm{B} \norm{D_t}}_{\text{constant}} \norm{\sum_{m=t+1}^{\infty} (\gamma \pp (I-B))^m} \underbrace{\norm{(I -\gamma \pp)} \norm{\fmat}}_{\text{constant}}, \\
    &\leq  K_1 C \rho^t + K_2 \sum_{m=t+1}^{\infty} \gamma^m \underbrace{\norm{(\pp (I-B))^m}}_{\leq K_5, \forall m}, \\
    &\leq K_1 C \rho^t + K_3 \sum_{m=t+1}^{\infty} \gamma^m, \\
    &= K_4 \rho^t + \frac{\gamma^{t+1}}{1-\gamma} K_3,
\end{align*}
where $K_1, K_2, K_3, K_4, K_5$ are constants, we have used the triangle inequality and $\norm{AB} \leq \norm{A} \cdot \norm{B}$ properties. Now,
\begin{align*}
    \sum_{t=0}^{\infty}\norm{\Ep[A(X_t)|X_0] - \keyA} &\leq \sum_{t=0}^{\infty} K_3 \rho^t + \frac{\gamma^{t+1}}{1-\gamma} K_2, \\
    &= K_3 \frac{1}{1-\rho} + K_2 \frac{\gamma}{(1-\gamma)^2}, \\
    &= C_1.
\end{align*}
By a similar analysis, we get $\sum_{t=0}^{\infty} \norm{\Ep[b(X_t)|X_0] - \keyb} \leq C_2$. Therefore, the bias due to sampling is contained.
\end{proof}

\subsection{Standard results used in the proof}
\label{app_subsec:std_results}

\begin{corollary*}
\label{app:std_corollary}
\citep{varga1999matrix} If $\keyA$ is a Hermitian $n \times n$ strictly diagonally dominant or irreducibly diagonally dominant matrix with positive real diagonal entries, then $\keyA$ is positive definite.
\end{corollary*}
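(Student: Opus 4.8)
The plan is to derive the corollary from the Gershgorin circle theorem, using that a Hermitian matrix has only real eigenvalues and is positive definite precisely when all its eigenvalues are positive. Write $R_i = \sum_{j \neq i} |[\keyA]_{ij}|$ for the $i$-th off-diagonal row sum. First I would note that, $\keyA$ being Hermitian, any eigenvalue $\lambda$ is real and, by Gershgorin's theorem, lies in some disc: there is an index $i$ with $|\lambda - [\keyA]_{ii}| \le R_i$. Since the diagonal entries are real, this yields the real two-sided bound $[\keyA]_{ii} - R_i \le \lambda \le [\keyA]_{ii} + R_i$. In the strictly diagonally dominant case we have $[\keyA]_{ii} > R_i$ for every $i$, so $\lambda \ge [\keyA]_{ii} - R_i > 0$; as this holds for every eigenvalue, $\keyA$ is positive definite. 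I would emphasize that this strict branch is the only one the paper actually needs, since Lemma \ref{lemma:keyA_PD} applies the corollary to $\keyA + \keyA^T$ only after establishing strict diagonal dominance from the positive diagonal, positive row sums (Lemma \ref{remark:positive_row_sums}), and positive column sums (Lemma \ref{remark:positive_column_sums}).

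The irreducibly diagonally dominant branch is where I expect the main difficulty, because strict Gershgorin no longer rules out an eigenvalue sitting on a disc boundary: here one only has weak dominance $[\keyA]_{ii} \ge R_i$ for all $i$, strict for at least one index, together with irreducibility. Rather than track boundary eigenvalues, my plan is to prove nonsingularity directly. Suppose $\keyA x = 0$ with $x \neq 0$ and normalize so that $\max_i |x_i| = 1$; let $\mathcal{I}$ be the set of indices attaining this maximum. For $i \in \mathcal{I}$ the $i$-th equation gives $[\keyA]_{ii} = |[\keyA]_{ii} x_i| = |\sum_{j \neq i} [\keyA]_{ij} x_j| \le \sum_{j \neq i} |[\keyA]_{ij}||x_j| \le R_i$, so weak dominance must be tight on $\mathcal{I}$, and equality in the middle step forces $|x_j| = 1$ for every $j$ with $[\keyA]_{ij} \neq 0$. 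Irreducibility (strong connectivity of the associated directed graph) then propagates membership in $\mathcal{I}$ from any starting index to all of them, so $\mathcal{I} = \{1, \dots, n\}$ and $[\keyA]_{ii} = R_i$ holds in every row, contradicting strict dominance in at least one row. Hence $0$ is not an eigenvalue and $\keyA$ is nonsingular.

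To finish, the closed-disc form of Gershgorin together with weak dominance gives $\lambda \ge [\keyA]_{ii} - R_i \ge 0$ for the disc containing any real eigenvalue $\lambda$, so every eigenvalue is nonnegative; combined with nonsingularity this upgrades to $\lambda > 0$, whence $\keyA$ is positive definite. The chaining argument above is exactly Taussky's refinement of Gershgorin's theorem for irreducible matrices, so an acceptable alternative is to invoke that refinement for nonsingularity and omit the propagation details. Since only the strictly diagonally dominant case feeds into Lemma \ref{lemma:keyA_PD}, I would present the strict-case argument as the main line and relegate the irreducible case to a short completeness remark.
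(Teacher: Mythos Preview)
Your argument is correct, but note that the paper does not actually prove this corollary: it is quoted verbatim from \citet{varga1999matrix} in the appendix section ``Standard results used in the proof'' and invoked as a black box. So there is no paper proof to compare against; you have supplied the standard textbook derivation (Gershgorin for the strict case, Taussky's irreducibility refinement for the other branch), which is precisely how Varga establishes it. Your remark that only the strictly diagonally dominant branch is exercised in Lemma~\ref{lemma:keyA_PD} is accurate and worth keeping.
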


\begin{theorem*}
\label{app:std_theorem}
\citep{bertsekas1996neuro, tsitsiklis1997analysis} Consider an iterative algorithm of the form $\w_{t+1} = \w_t + \alpha_t (A(X_t) \w_t + b(X_t))$ where,
\begin{enumerate}
    \item the step-size sequence $\alpha_t$ satisfies assumption \ref{assump:step_size};
    \item $X_t$ is a Markov process with a unique invariant distribution, and there exists a mapping $h$ from the states of the Markov process to the positive reals, satisfying the remaining conditions;
    \item $A(\cdot)$ and $b(\cdot)$ are matrix and vector valued functions respectively, for which $\keyA = \Edis[A(X_t)]$ and $\keyb = \Edis[b(X_t)]$ are well defined and finite;
    \item the matrix $\keyA$ is negative definite;
    \item there exists constants $C$ and $q$ such that for all $X$
    \begin{itemize}
        \item $\sum_{t=0}^{\infty} \norm{\Ep[A(X_t) | X_0 = X] - \keyA} \leq C (1 + h^q(X))$, and
        \item $\sum_{t=0}^{\infty} \norm{\Ep[b(X_t) | X_0 = X] - \keyb} \leq C (1 + h^q(X))$;
    \end{itemize}
    \item for any $q > 1$ there exists a constant $\mu_q$ such that for all $X, t$
    \begin{itemize}
        \item $\Ep[h^q(X_t)|X_0=X] \leq \mu_q (1 + h^q(X)).$
    \end{itemize}
\end{enumerate}
Then, $\w_t$ converges to $\wpi$, with probability one, where $\wpi$ is the unique vector that satisfies $\keyA \wpi + \keyb = 0$.
\end{theorem*}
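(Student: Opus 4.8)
The plan is to read ``the sequence of expected updates'' as the mean-path recursion obtained by replacing the sampled increment $b(X_t) - A(X_t)\w_t$ with its stationary expectation, namely $\w_{t+1} = \w_t + \alpha_t(\keyb - \keyA\w_t)$, where $\keyA$ and $\keyb$ are the well-defined limits identified in Lemma~\ref{remark:keyA_keyB_definition}. Because Lemma~\ref{lemma:keyA_PD} gives $\keyA \succ 0$, the matrix is invertible, so this recursion has a single candidate fixed point $\wpi = \keyA^{-1}\keyb$; the whole task reduces to showing the iterates are driven to $\wpi$. First I would pass to the error $\eta_t = \w_t - \wpi$, which obeys $\eta_{t+1} = (I - \alpha_t\keyA)\eta_t$.

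The core estimate is a one-step contraction in Euclidean norm. Writing $c > 0$ for the least eigenvalue of the symmetric part $\tfrac12(\keyA + \keyA^T)$ (positive because the proof of Lemma~\ref{lemma:keyA_PD} in fact establishes $\keyA + \keyA^T \succ 0$) and $L = \norm{\keyA}$, I would compute
\begin{align*}
\norm{\eta_{t+1}}^2 &= \norm{\eta_t}^2 - 2\alpha_t\,\eta_t^T \tfrac{\keyA + \keyA^T}{2}\,\eta_t + \alpha_t^2\,\norm{\keyA\eta_t}^2 \\
&\le \big(1 - 2\alpha_t c + \alpha_t^2 L^2\big)\,\norm{\eta_t}^2.
\end{align*}
For $t$ large enough the step size is small, so the factor lies in $(0,1)$; iterating and taking logarithms, $\sum_t \log(1 - 2\alpha_t c + \alpha_t^2 L^2) \le \sum_t(-2c\,\alpha_t + L^2\alpha_t^2)$ diverges to $-\infty$ because Assumption~\ref{assump:step_size} forces $\sum_t \alpha_t = \infty$ while $\sum_t \alpha_t^2 < \infty$. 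Hence the product of the contraction factors vanishes, $\norm{\eta_t} \to 0$, and the expected updates converge to the unique $\wpi$.

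Next I would identify this fixed point. Starting from $\keyA\wpi = \keyb$ and invoking the forward-backward equivalence of Theorem~\ref{remark:forward_backward} gives $0 = \keyb - \keyA\wpi = \fmat^T D^\pi(\Tp(\fmat\wpi) - \fmat\wpi)$; left-multiplying by $(\fmat^T D^\pi\fmat)^{-1}\fmat^T$, which exists because $\fmat$ has full column rank (Assumption~\ref{assump:feature_assumption}), shows $\proj(\Tp(\fmat\wpi) - \fmat\wpi) = 0$ with $\proj = \fmat(\fmat^T D^\pi\fmat)^{-1}\fmat^T D^\pi$. Thus the limit lies in the span of $\fmat$ and has zero projected Bellman error, pinning down the fixed point uniquely.

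Finally, if one wants the stronger almost-sure convergence of the actual sampled iterates rather than the mean path, I would instead verify the hypotheses of the standard stochastic-approximation result (Theorem~\ref{app:std_theorem}, \citet{tsitsiklis1997analysis, bertsekas1996neuro}): the step-size condition is Assumption~\ref{assump:step_size}, finiteness of the drift is Lemma~\ref{remark:keyA_keyB_definition}, the required definiteness of the drift matrix is $\keyA \succ 0$ from Lemma~\ref{lemma:keyA_PD}, and the summable-bias noise condition is exactly Lemma~\ref{lemma:sampling_bias}. I expect the main obstacle here to be the remaining structural hypothesis --- that the augmented process $X_t = (s_t, s_{t+1}, e_t)$ is Markov with a unique invariant distribution and admits a function $h$ controlling its higher moments --- since the eligibility component is continuous-valued; the trace recursion $e_t = \gamma(1 - \param(s_t))e_{t-1} + \param(s_t)\fvec(s_t)$ is a contraction (as $\gamma(1-\param(s)) < 1$) that forgets its initialization geometrically, so $(s_t, e_t)$ inherits a unique stationary law from the ergodicity of the finite chain and $h(X) = 1 + \norm{e}$ supplies the needed moment bound.
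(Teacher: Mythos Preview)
The statement you were given is not something the paper proves: it is a standard stochastic-approximation theorem quoted from \citet{bertsekas1996neuro, tsitsiklis1997analysis} and reproduced in Appendix~\ref{app_subsec:std_results} purely for reference. The paper offers no proof of it and none is expected --- it is the black-box tool the paper invokes to establish its own Theorem~3 (convergence of PTD).

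Your proposal does not attempt to prove this general theorem either. What you have written is a proof of Theorem~3, not of the cited result: you analyse the specific PTD recursion using the specific $\keyA,\keyb$ from Lemma~\ref{remark:keyA_keyB_definition} and the specific positive-definiteness from Lemma~\ref{lemma:keyA_PD}, and in your final paragraph you even \emph{invoke} ``Theorem~\ref{app:std_theorem}'' as an external tool --- which is circular if that is the statement under examination. So there is a basic mismatch between the target and the argument.

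If we instead read your proposal as a proof of Theorem~3, it differs from the paper's in one respect worth noting. The paper's proof is a one-line verification that the hypotheses of the cited theorem hold (step sizes via Assumption~\ref{assump:step_size}, well-definedness via Lemma~\ref{remark:keyA_keyB_definition}, definiteness via Lemma~\ref{lemma:keyA_PD}, noise control via Lemma~\ref{lemma:sampling_bias}). You instead give a direct, elementary contraction argument for the deterministic mean-path recursion $\w_{t+1} = \w_t + \alpha_t(\keyb - \keyA\w_t)$, which is a legitimate and self-contained route to the ``expected updates converge'' claim without needing the heavy stochastic-approximation machinery; only for the stronger almost-sure statement do you fall back on the black box. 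That is a reasonable alternative organisation, but it is an answer to a different question than the one posed.
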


\subsection{Counterexamples of TD($\lambda$)}
\label{app_sec:counterexamples}


\textbf{Example 2 \citep{ghiassian2017first}:} Two state MDP with $\pp = \begin{bmatrix} 0 & 1\\ 1 & 0 \end{bmatrix}$, $\fmat = \begin{bmatrix} 3 & 1 \\1 & 1 \end{bmatrix}$, $\lambda = \begin{bmatrix} 0 & 0.99 \end{bmatrix}$, $\dis = \begin{bmatrix} 0.5 & 0.5 \end{bmatrix}$, $\gamma = 0.95$. The key matrix of TD($\lambda$) is given by, $\keyA = \begin{bmatrix} -0.46 & 0.15 \\ -0.77 & 0.07  \end{bmatrix}$, which is not positive definite. Therefore, the updates will not converge. The key matrix of Preferential TD for the same setup is $\keyA =  \begin{bmatrix} 0.46 & 0.15 \\ 0.15 & 0.05 \end{bmatrix}$ which is positive definite.

\subsection{Experiments}

\subsubsection{Tabular setting}
\label{app_subsec:tabular_experiments}

\begin{figure}[ht]
    \centering
    \includegraphics[width=0.85\textwidth, height=4.5cm]{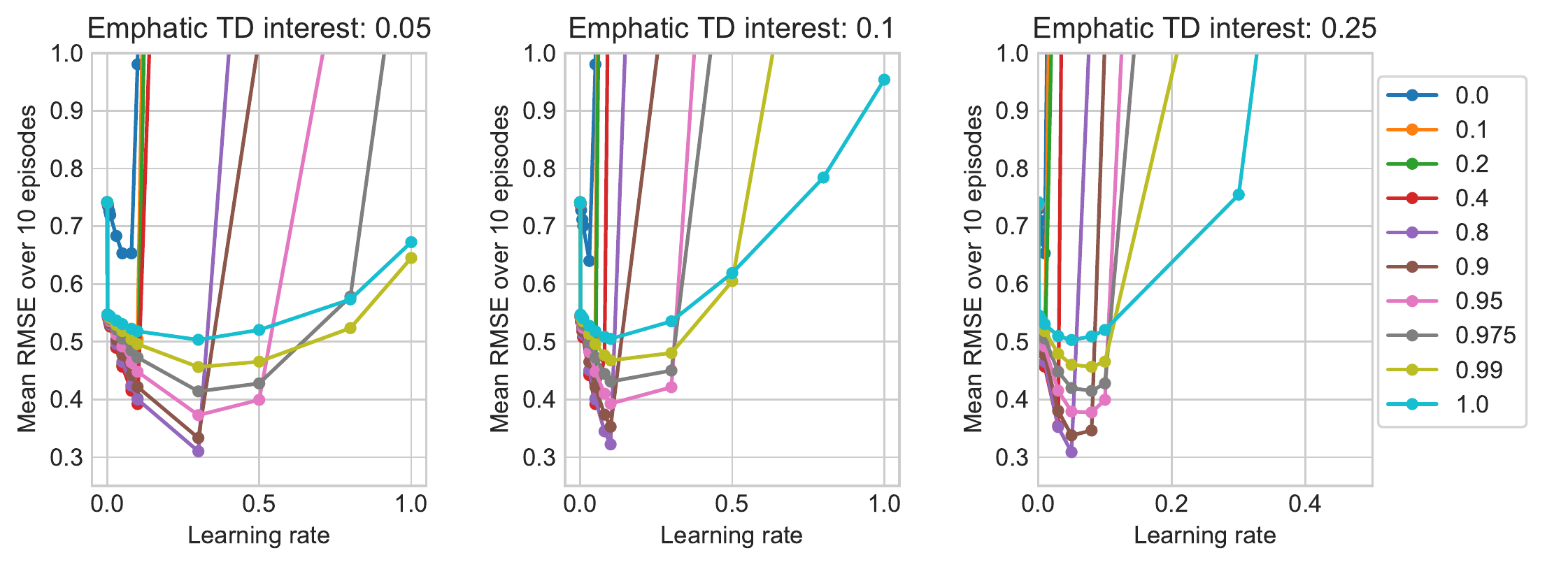}
    \caption{Performance of Emphatic TD with a fixed interest value (shown in the title of the plot). Average RMSE of the first 10 episodes is plotted against learning rate for different choices of $\lambda$.}
    \label{app_fig:tabular_hp_plot2}
\end{figure}

We analyzed the bias-variance trade-off of Emphatic TD with several choices of fixed interest values. ETD with a low interest value resulted in lower errors compared to high interest values. The difference in performance is due to the algorithm's sensitivity to learning rates when high interest is used. We present the root mean squared error obtained for various fixed interests in Figure \ref{app_fig:tabular_hp_plot2}. Different curves in the plot correspond to various values of $\lambda$. We generated the plots by averaging the root mean squared error of the value function over 10 episodes of training across 25 seeds.

\subsubsection{Linear setting}
\label{app_subsec:linear_experiments}

The error curves for the corridor lengths \{10, 20\} are provided in Figure \ref{app_fig:linear_learning}. The observations made in the main paper holds true for these lengths as well.

\begin{figure}[H]
    \centering
    \begin{subfigure}{0.4\textwidth}
        \centering
        \includegraphics[width=\textwidth, height=3.8cm]{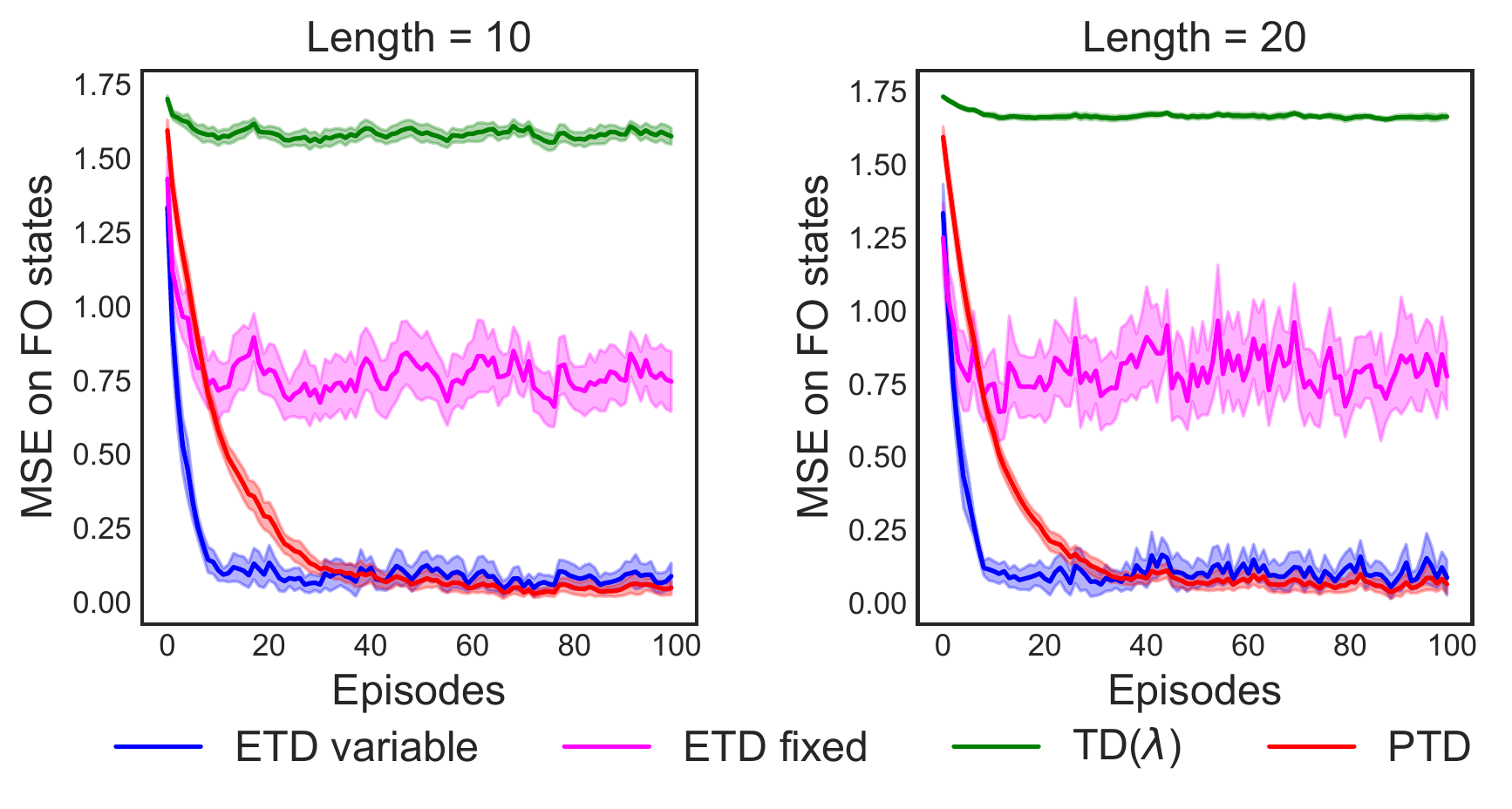}
    \end{subfigure}
    \begin{subfigure}{0.4\textwidth}
        \centering
        \includegraphics[width=\textwidth, height=3.8cm]{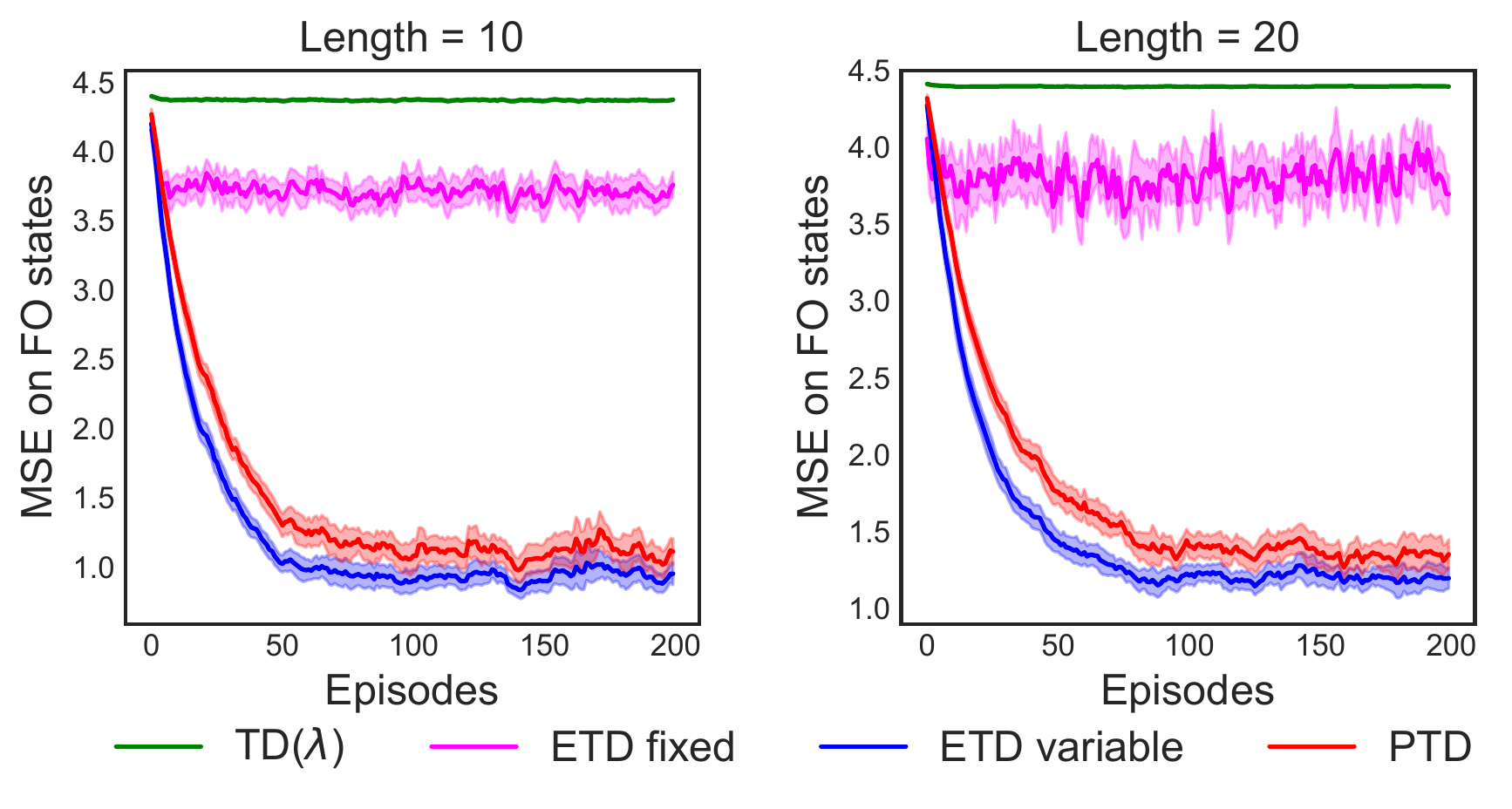}
    \end{subfigure}
    \caption{The mean squared error of fully observable states' values is plotted as a function of episodes for various algorithms. The first two plots correspond to the results on Task 1 (left) and the next plots correspond to Task 2 (right). The corridor length is indicated in the title of the plot.}
    \label{app_fig:linear_learning}
\end{figure}

\begin{figure}[H]
    \centering
    \begin{subfigure}{\textwidth}
        \centering
        \includegraphics[width=\textwidth, height=3.8cm]{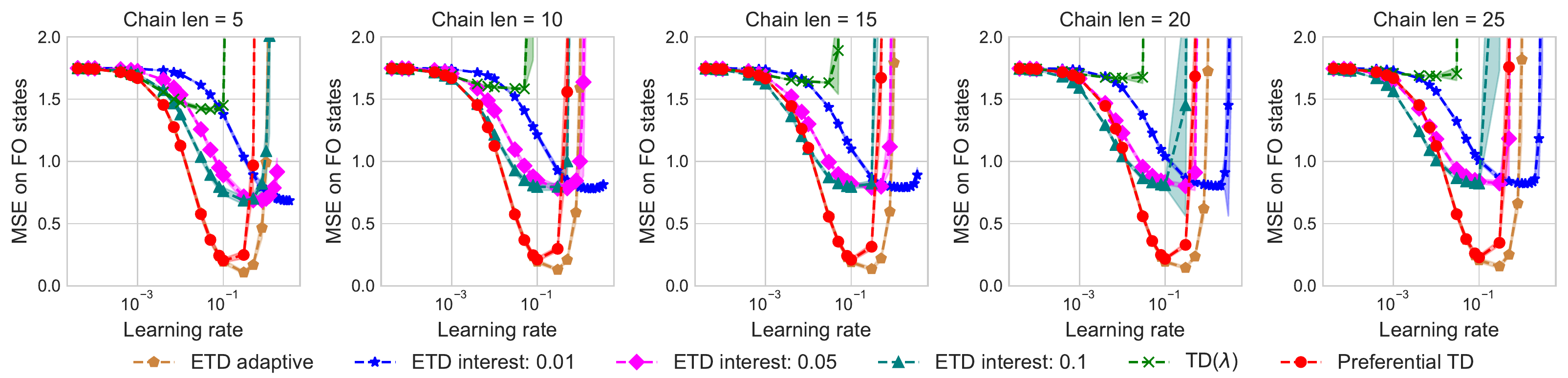}
    \end{subfigure}
    \hfill
    \begin{subfigure}{\textwidth}
        \centering
        \includegraphics[width=\textwidth, height=3.8cm]{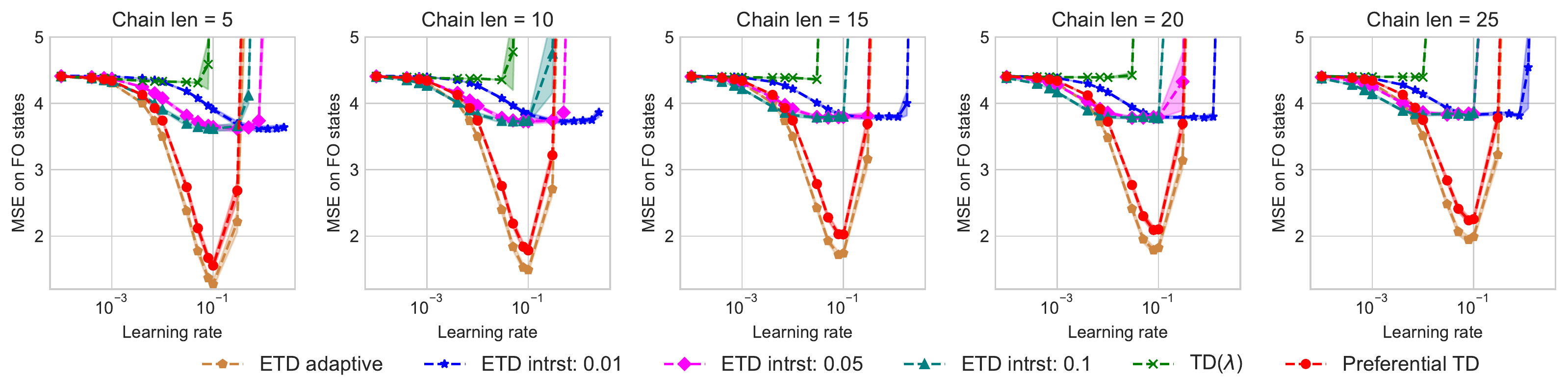}
    \end{subfigure}
    \caption{The average mean squared error of the fully observable states' values is plotted against learning rate on Task 1 (top) and Task 2 (bottom).}
    \label{app_fig:linear_hptuning}
\end{figure}

\begin{figure}[H]
    \centering
    \begin{subfigure}{\textwidth}
        \centering
        \includegraphics[width=\textwidth, height=3.8cm]{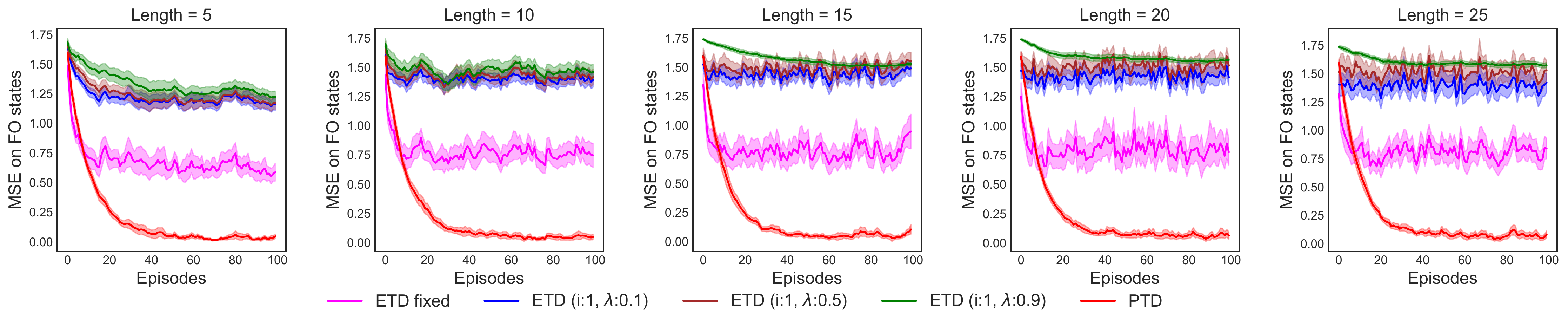}
    \end{subfigure}
    \hfill
    \begin{subfigure}{\textwidth}
        \centering
        \includegraphics[width=\textwidth, height=3.8cm]{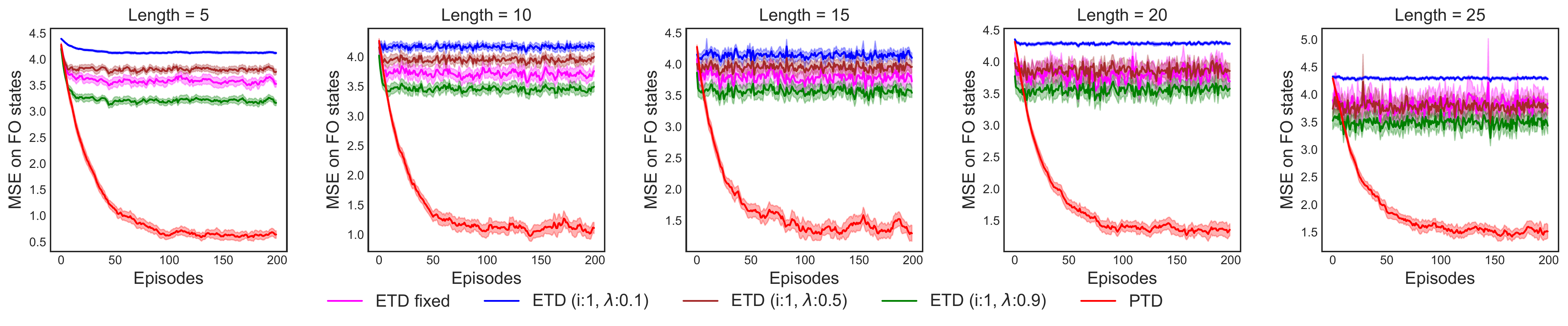}
    \end{subfigure}
    \caption{The mean squared error of the fully observable states' values is plotted against episodes on Task 1 (top) and Task 2 (bottom).}
    \label{app_fig:ETD_fixed_v2}
\end{figure}

\begin{figure}[ht]
    \centering
    \begin{subfigure}{\textwidth}
        \centering
        \includegraphics[width=\textwidth, height=3.8cm]{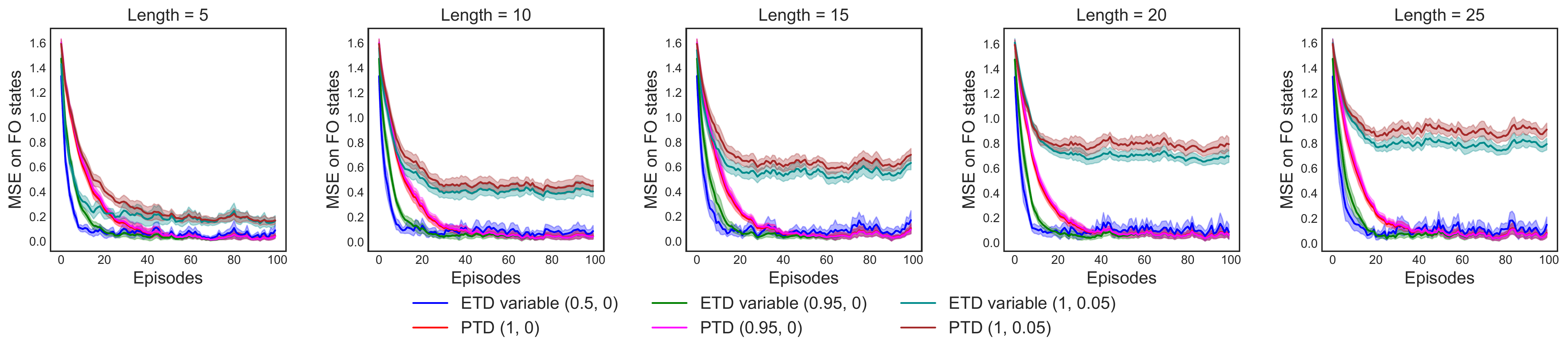}
    \end{subfigure}
    \hfill
    \begin{subfigure}{\textwidth}
        \centering
        \includegraphics[width=\textwidth, height=3.8cm]{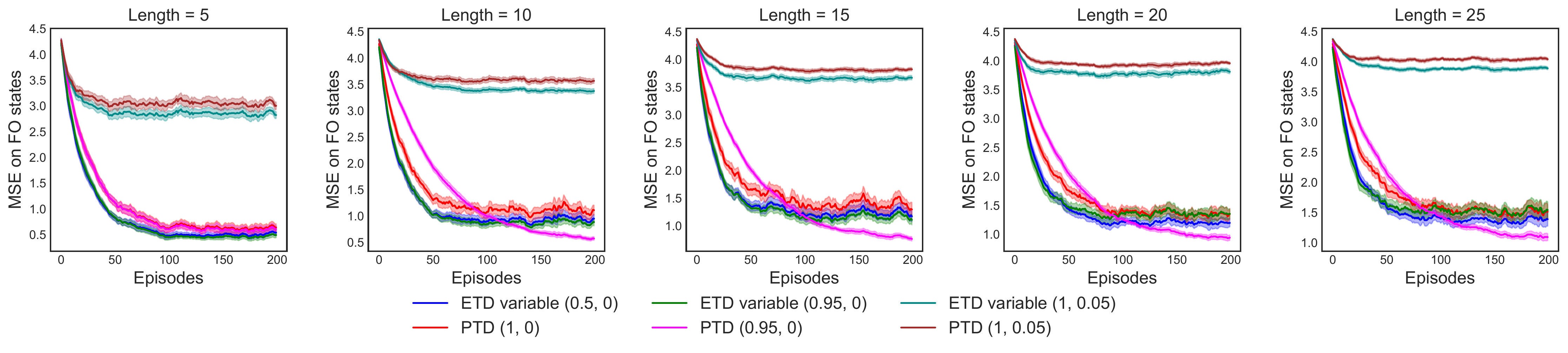}
    \end{subfigure}
    \caption{Performance plot of PTD and ETD-variable for different choices of $\beta$ (or $i$ and $\lambda$) for FO and PO states on Task 1 (top) and Task 2 (bottom).}
    \label{app_fig:PTD_ETD_v2}
\end{figure}

\textbf{Task 1:} We chose the best learning rate from \{1.2, 1.0, 0.8, 0.5, 0.3, 0.1, 8e-2, 5e-2, 3e-2, 1e-2, 7e-3, 4e-3, 1e-3, 7e-4, 4e-4, 1e-4, 7e-5, 4e-5\}. We extended the range of search to include \{10.0, 5.0, 4.0, 3.5, 3.0, 2.5, 2.0, 1.8, 1.5,\} for ETD-fixed. We also ran ETD-fixed on 3 different interests (\{0.01, 0.05, 0.1\}). We ran all the algorithms for 100 episodes. We calculated the mean squared error of fully observable states' values and averaged it across 100 episodes and 25 seeds for the above mentioned learning rates. We chose the learning rate that resulted in the lowest error. The hyperparameter tuning plot is presented in Figure \ref{app_fig:linear_hptuning}. We ran all the algorithms on 25 different seeds and plotted the mean error with a confidence interval of 0.5 times the standard deviation.

\textbf{Task 2:} The learning rate for this task was selected from the set \{0.8, 0.5, 0.3, 0.1, 8e-2, 5e-2, 3e-2, 1e-2, 7e-3, 4e-3, 1e-3, 7e-4, 4e-4, 1e-4\}. We extended the range of search to include \{2.5, 1.8, 1.2,\} for ETD-fixed. We also ran ETD-fixed on 3 different interest values (\{0.01, 0.05, 0.1\}). We calculated the mean squared error of fully observable states' values and averaged it across 100 episodes and 25 seeds for the above mentioned learning rates. We choose the learning rate that resulted in the lowest error. The hyperparameter tuning plot is presented in Figure \ref{app_fig:linear_hptuning}. We ran all the algorithms on 25 different seeds and plotted the mean with a confidence interval of 0.5 times the standard deviation. We ran 200 episodes to generate the learning curves presented in Figure \ref{fig:linear_learning} and \ref{app_fig:linear_learning} using the optimal learning rate.

\textbf{ETD-fixed with $i=1$ and a constant $\lambda$ for all the states:} In this experiment, we tested the performance of a different version of ETD-fixed. In this version, we set the interest and $\lambda$ to constant values for all the states. The interest value was set to $1$, and we experimented with three separate values of $\lambda$ - \{0.1, 0.5, 0.9\}. Like the earlier version, this too performs poorly compared to PTD and ETD-variable. This is because, setting $i=1$ results in updating all the states including those that are partially observable causing poor generalization. The results are presented in Figure \ref{app_fig:ETD_fixed_v2}.

\textbf{PTD and ETD-variable with non-extreme values:} In this experiment, we tested PTD and ETD-variable with non-extreme preference and (interest, $\lambda$) values respectively. The performance is not affected when $\param$ (or $i$) of fully observable states is set to a value $<1$ as the updates on partially observable states is still blocked. However, the performance drops significantly when $\param$ (or $i$) of partially observable states is set to non-zero values. The results are presented in Figure \ref{app_fig:PTD_ETD_v2}.

\subsubsection{Semi-linear setting}
\label{app_subsec:semi-linear}

\begin{figure}[ht]
    \centering
    \includegraphics[width=0.7\textwidth, height=4cm]{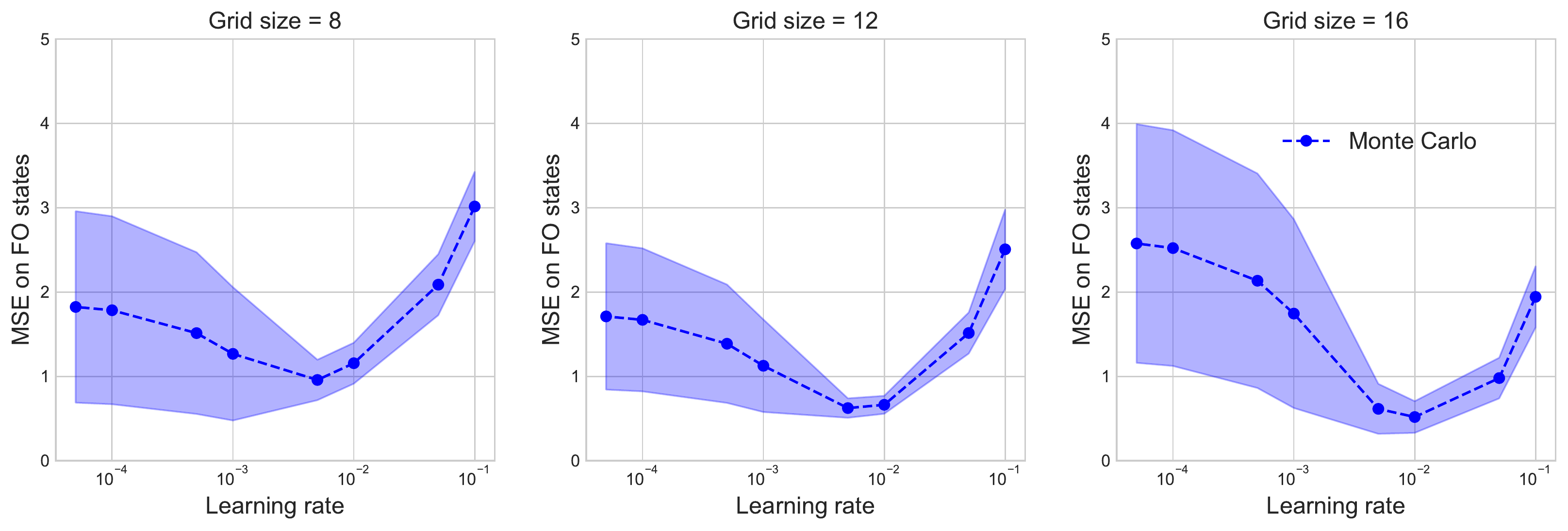}
    \caption{Average mean squared error of the value function is plotted against learning rate for the feature net. Different plots correspond to various grid sizes.}
    \label{app_fig:semilinear_MC_hp}
\end{figure}

\textbf{Feature net:} Feature net is a single-layered neural network with 32 neurons with ReLU non-linearity in the hidden layer. The network's input is a one-hot vector of size $n \times n$, where $n$ is the size of the grid. The component corresponding to the agent's location is set to $1$ and the remaining bits are set to $0$. The network is trained to minimize the mean squared error of Monte Carlo returns and its predictions for the states visited in a trajectory. We train the network using ADAM optimizer. We find the optimal learning rate from \{1e-1, 5e-2, 1e-2, 5e-3, 1e-3, 5e-4, 1e-4, 5e-5\} for every grid size through hyperparameter search. The networks are trained on 50 episodes on each grid size and the mean squared error of the value function across 50 episodes is used as a metric to pick the optimal learning rate. The hyperparameter tuning results are presented in Figure \ref{app_fig:semilinear_MC_hp}. We run experiments on 25 seeds and use a confidence interval of 0.5 times the standard deviation for plotting.

\textbf{Linear:} A linear function approximator is used to estimate the value function of the fully observable states. The hidden layer output of a fully trained feature net is used as features for the input state. A one-hot vector (for fully observable states) or a Gaussian vector (each component is generated from $\mathcal{N}(0,1)$ for partially observable states) is used as an input to the feature net to get features for the downstream task.

\begin{figure}[ht]
    \centering
    \begin{subfigure}{\textwidth}
        \centering
        \includegraphics[width=0.7\textwidth, height=3.5cm]{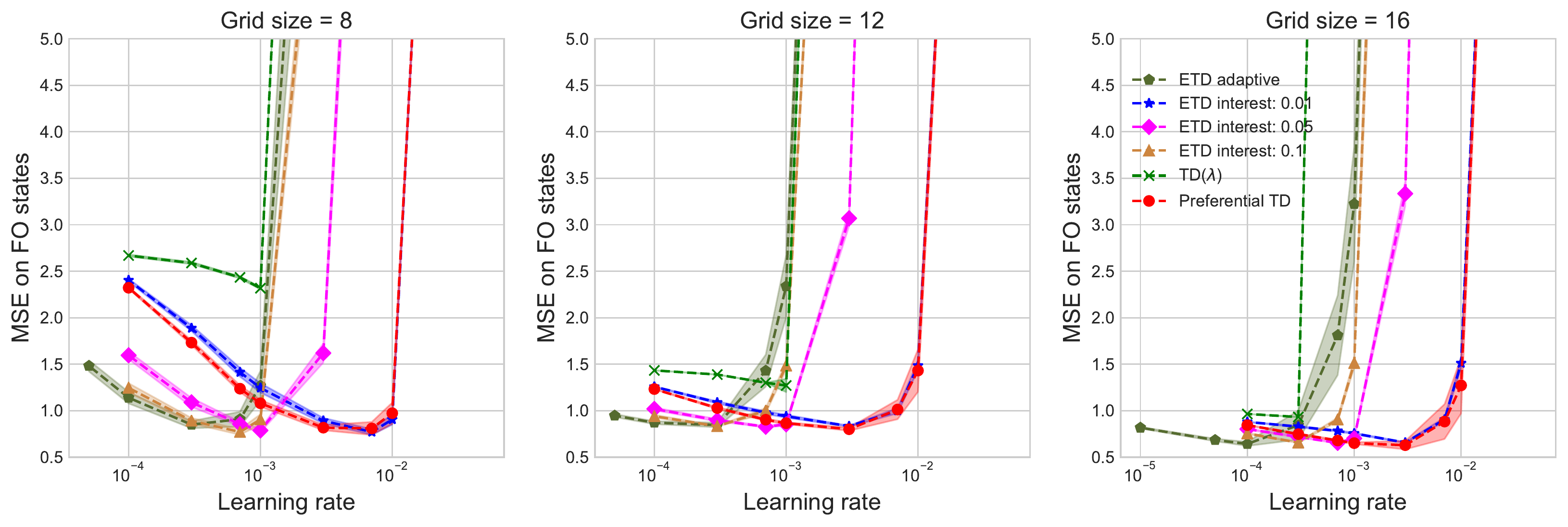}
    \end{subfigure}
    \hfill
    \begin{subfigure}{\textwidth}
        \centering
        \includegraphics[width=0.7\textwidth, height=3.5cm]{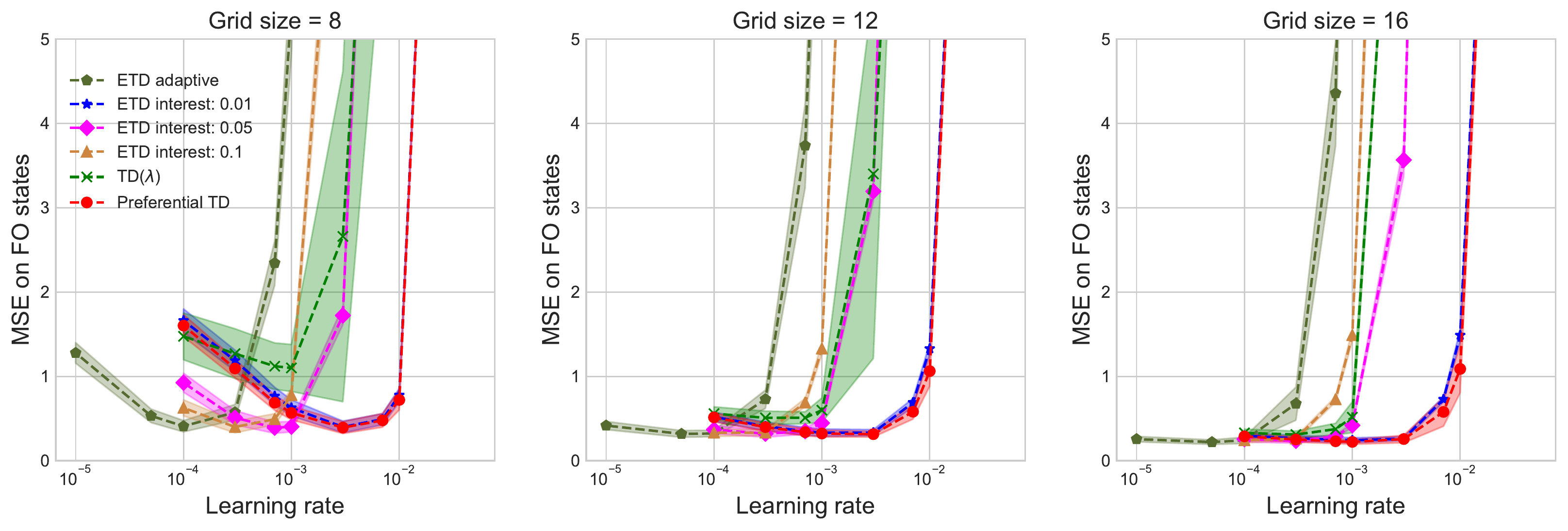}
    \end{subfigure}
    \caption{The average mean squared error of fully observable states' values plotted against various learning rates for Task 1 (top) and Task 2 (bottom).}
    \label{app_fig:semilinear_DRL_hp}
\end{figure}

\begin{figure}[H]
    \centering
    \begin{subfigure}{0.48\textwidth}
        \centering
        \includegraphics[width=\textwidth, height=3.8cm]{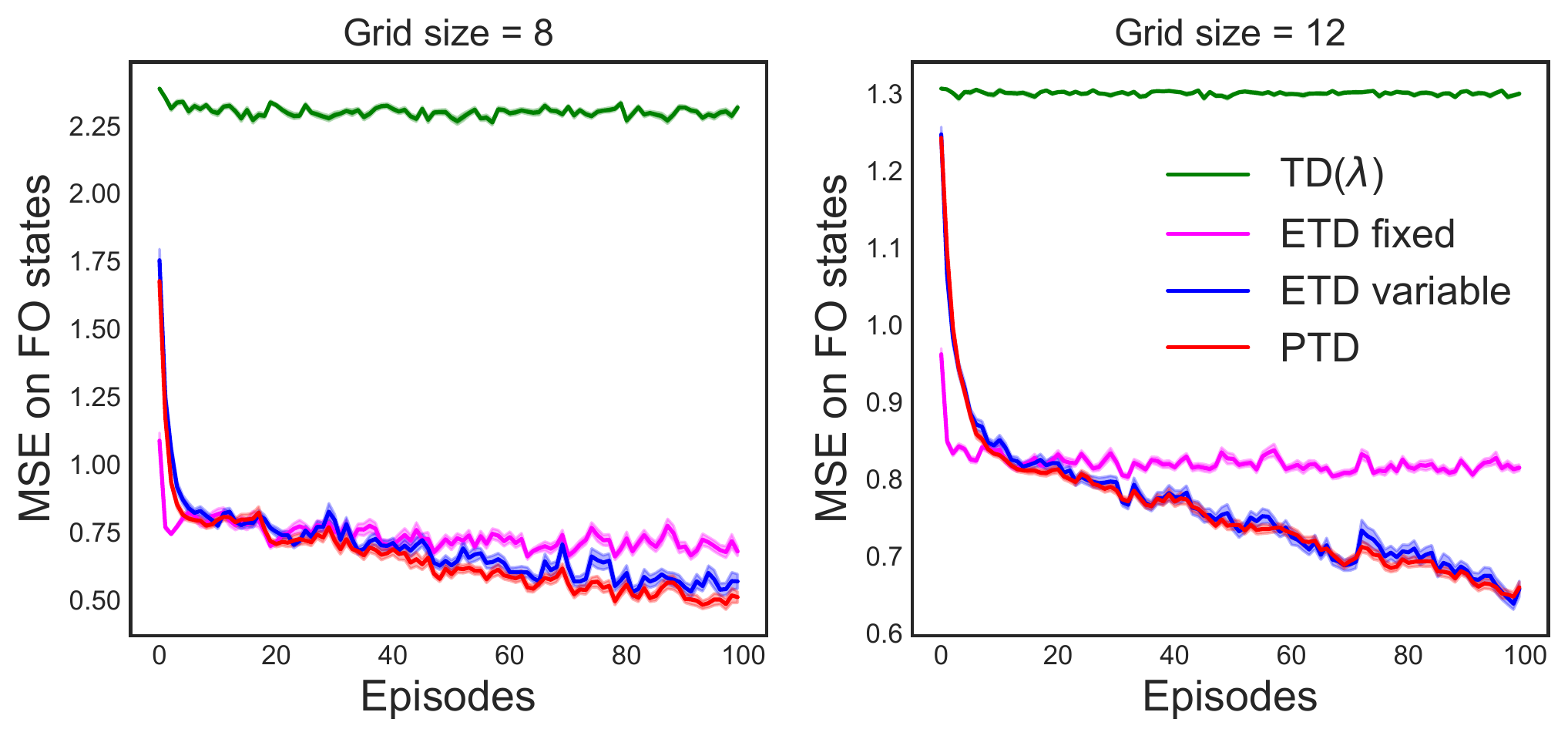}
        \caption{Learning curves on Task 1}
        \label{app_fig:task1_learning_semi}
    \end{subfigure}
    \begin{subfigure}{0.48\textwidth}
        \centering
        \includegraphics[width=\textwidth, height=3.8cm]{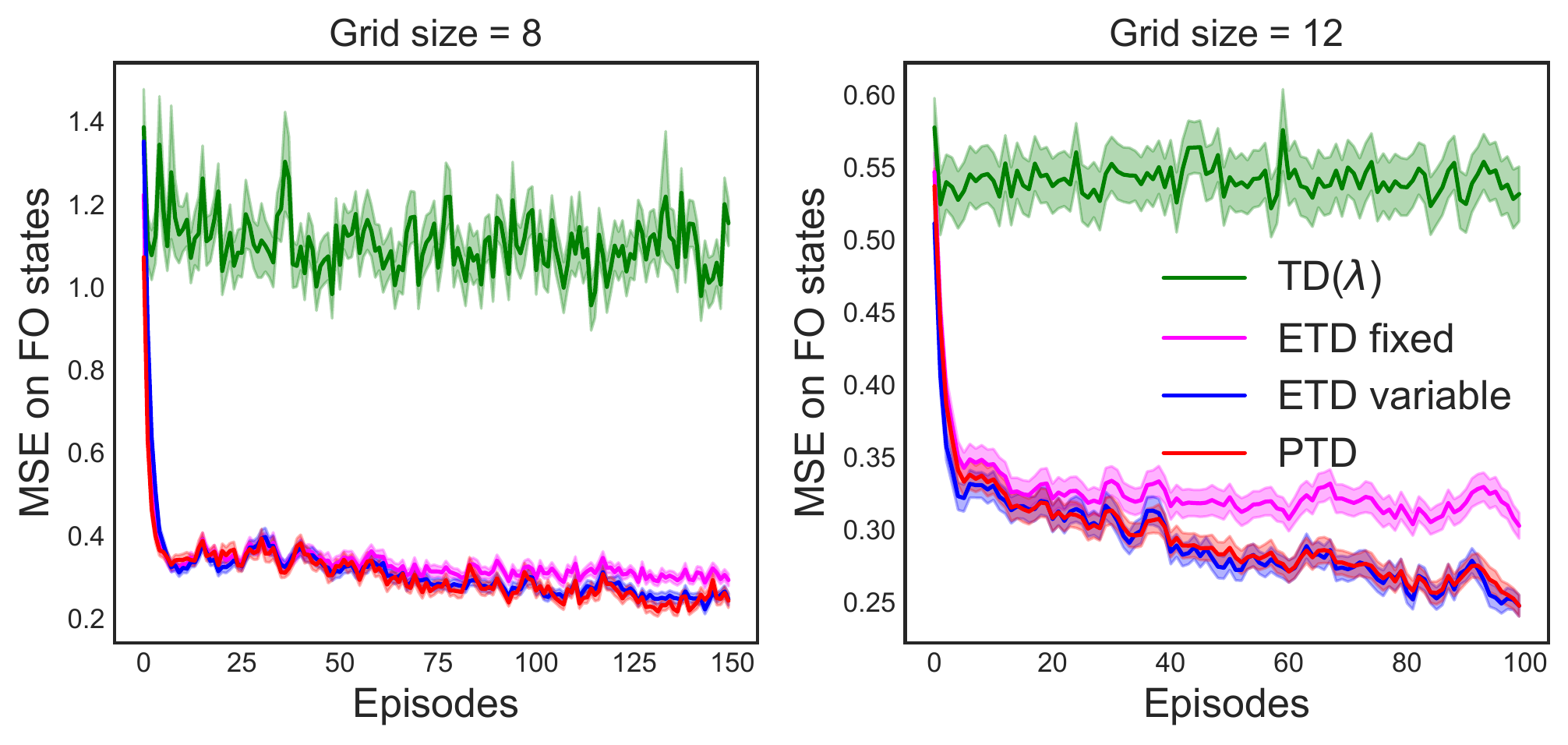}
        \caption{Learning curves on Task 2}
        \label{app_fig:task2_learning_semi}
    \end{subfigure}
    \caption{The mean squared error of fully observable states' values plotted against episodes for various algorithms. The grid size is indicated in the title of the plot.}
    \label{app_fig:semi_learning}
\end{figure}

\textbf{Task 1:} We chose the optimal learning rate from \{5e-2, 1e-2, 7e-3, 3e-3, 1e-3, 7e-4, 3e-4, 1e-4\}. We extended the range of search to include \{5e-5, 1e-5\} for ETD-variable. We also ran ETD-fixed on 3 different interest values - (\{0.01, 0.05, 0.1\}). We ran all the algorithms for 50 episodes. We calculated the mean squared error of fully observable states' values over 50 episodes for the learning rates mentioned earlier and chose the best learning rate based on the average MSE across episodes and seeds. The hyperparameter tuning plot is presented in Figure \ref{app_fig:semilinear_DRL_hp}. We ran all the algorithms on 25 different seeds and plotted the mean MSE with a confidence interval of 0.5 times the standard deviation. We ran 100 episodes for $8 \times 8$ and $12 \times 12$ grids, 50 episodes for $16 \times 16$ grid to generate the learning curves presented in the main paper using the optimal learning rates.

\textbf{Task 2:} All the experimental details are same as task 1. The hyperparameter tuning results are presented in Figure \ref{app_fig:semilinear_DRL_hp}. Once the best learning rate was found for all the algorithms, we ran 150 episodes for $8 \times 8$ grid, 100 episodes for $12 \times 12$ and $16 \times 16$ grids to generate the learning curves presented in Figure \ref{fig:semilinear_learning} and \ref{app_fig:semi_learning}.

\textbf{Additional results:} The error curves for the grid sizes 8 and 12 are presented in Figure \ref{app_fig:semi_learning}. The observations made in the main paper hold true for these grids too. However, both ETD-variable and PTD perform similarly. The other two methods result in poor predictions.

\subsubsection{Non-Linear setting}
\label{app_subsec:non_linear}

\textbf{Forward view:} For each algorithm and network combination, we find the best learning rate from \{5e-2, 1e-2, 5e-3, 1e-3, 5e-4\}. We included 1e-4 in the search space for grid task 2. The best learning rate was decided based on the area under the error curve (AUC) of MSE averaged across episodes and seeds. We used SGD optimizer to train the network. We consider 250 episodes and 25 seeds for all the tasks for hyperparameter tuning and use 50 seeds to plot the final results. The hyperparameter tuning results are presented in Figures \ref{app_fig:non_linear_fwd_grid1} and \ref{app_fig:non_linear_fwd_grid2}.

\textbf{Backward view:} For each algorithm and network combination, we find the best learning rate from \{0.1, 5e-2, 1e-2, 5e-3, 1e-3, 5e-4, 1e-4\}. The best learning rate was decided using the process described in \textit{Forward view}. Plotting details also remain the same as before. The hyperparameter tuning results are presented in Figures \ref{app_fig:non_linear_bwd_grid1} and \ref{app_fig:non_linear_bwd_grid2}.

\begin{figure}[H]
    \centering
    \includegraphics[width=\textwidth, height=10cm]{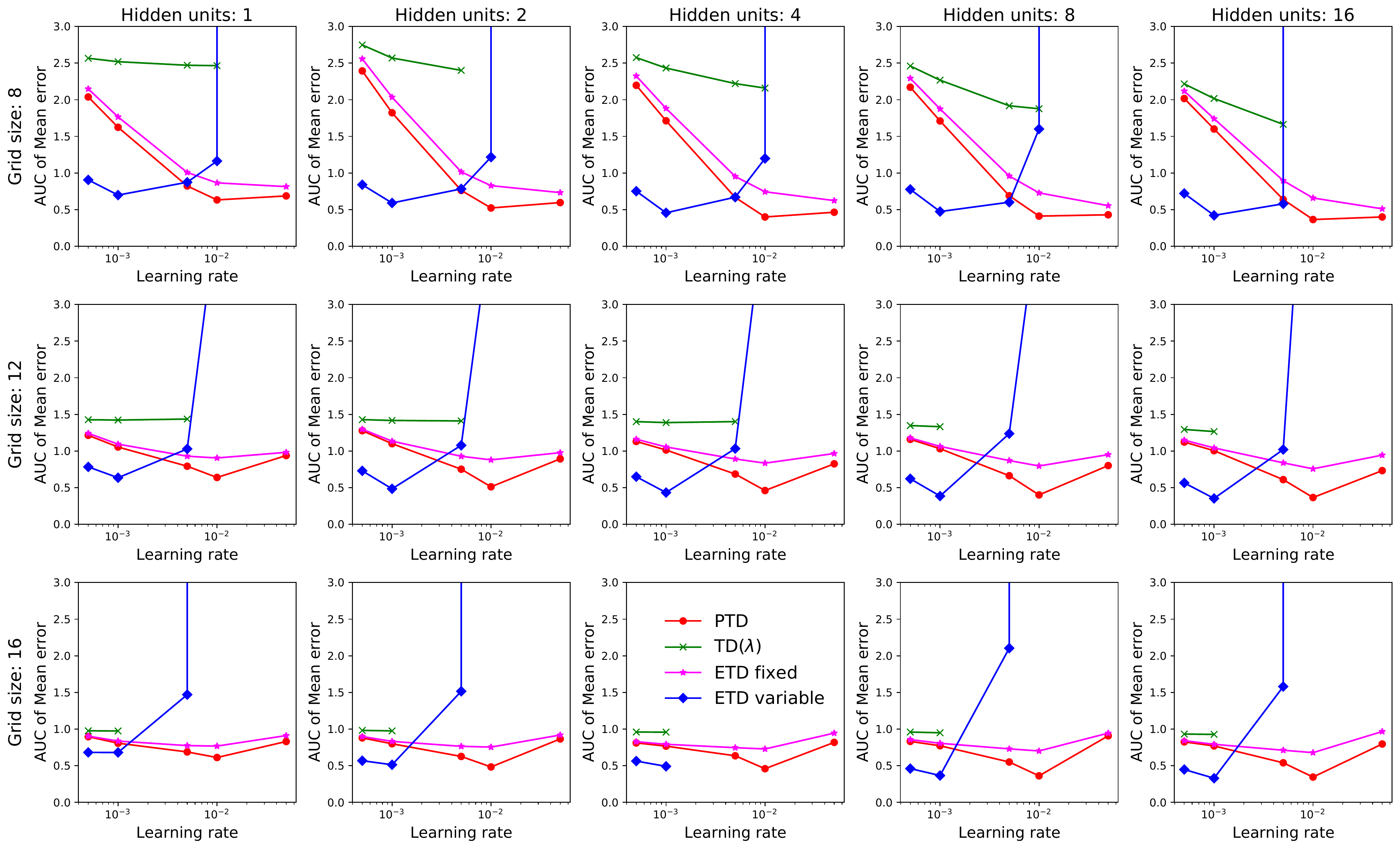}
    \caption{Task 1 forward view hyperparameter tuning curves. AUC of mean error is plotted against learning rates.}
    \label{app_fig:non_linear_fwd_grid1}
\end{figure}

\begin{figure}[H]
    \centering
    \includegraphics[width=\textwidth, height=10cm]{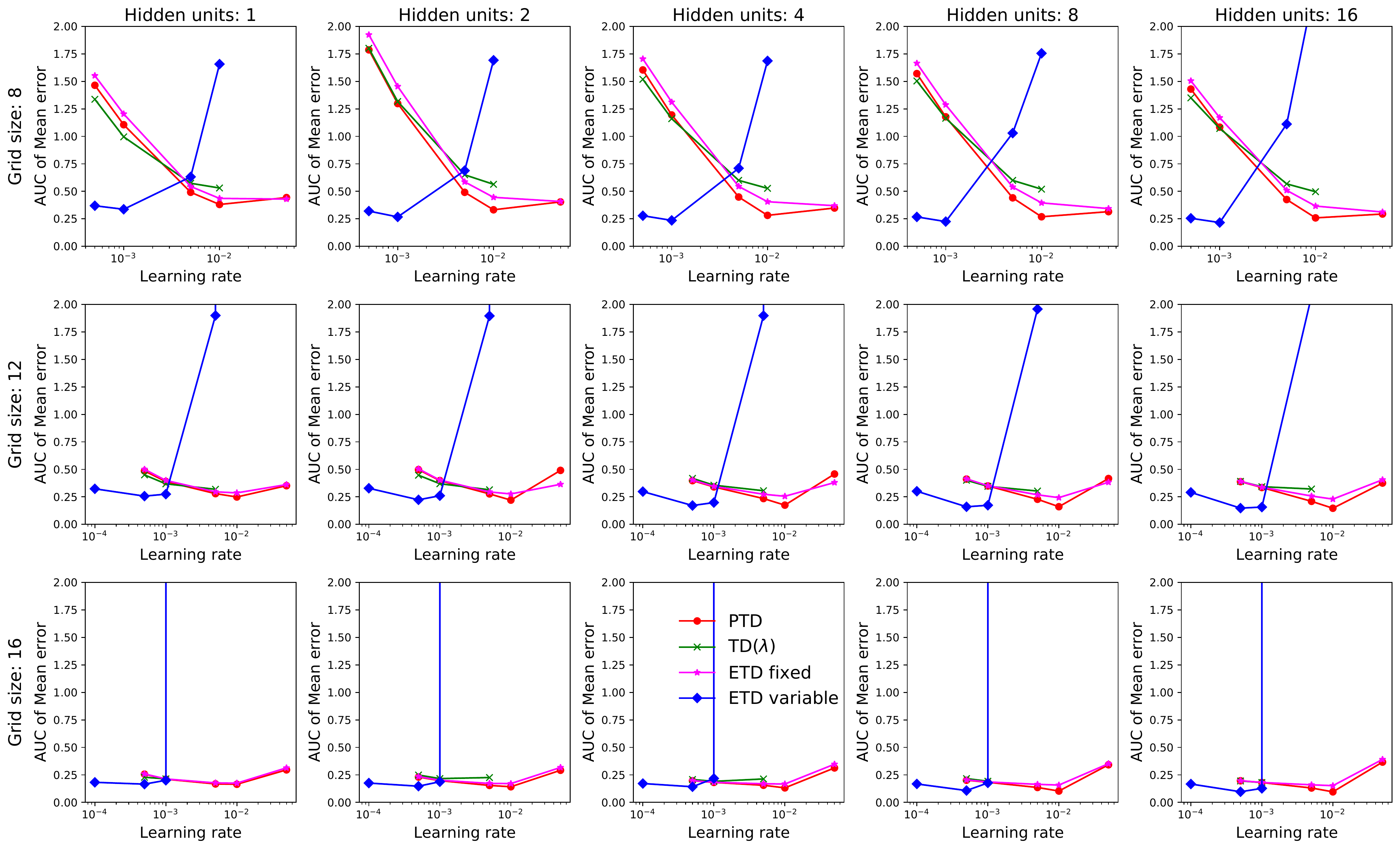}
    \caption{Task 2 forward view hyperparameter tuning curves. AUC of mean error is plotted against learning rates.}
    \label{app_fig:non_linear_fwd_grid2}
\end{figure}

\begin{figure}[H]
    \centering
    \includegraphics[width=\textwidth, height=10cm]{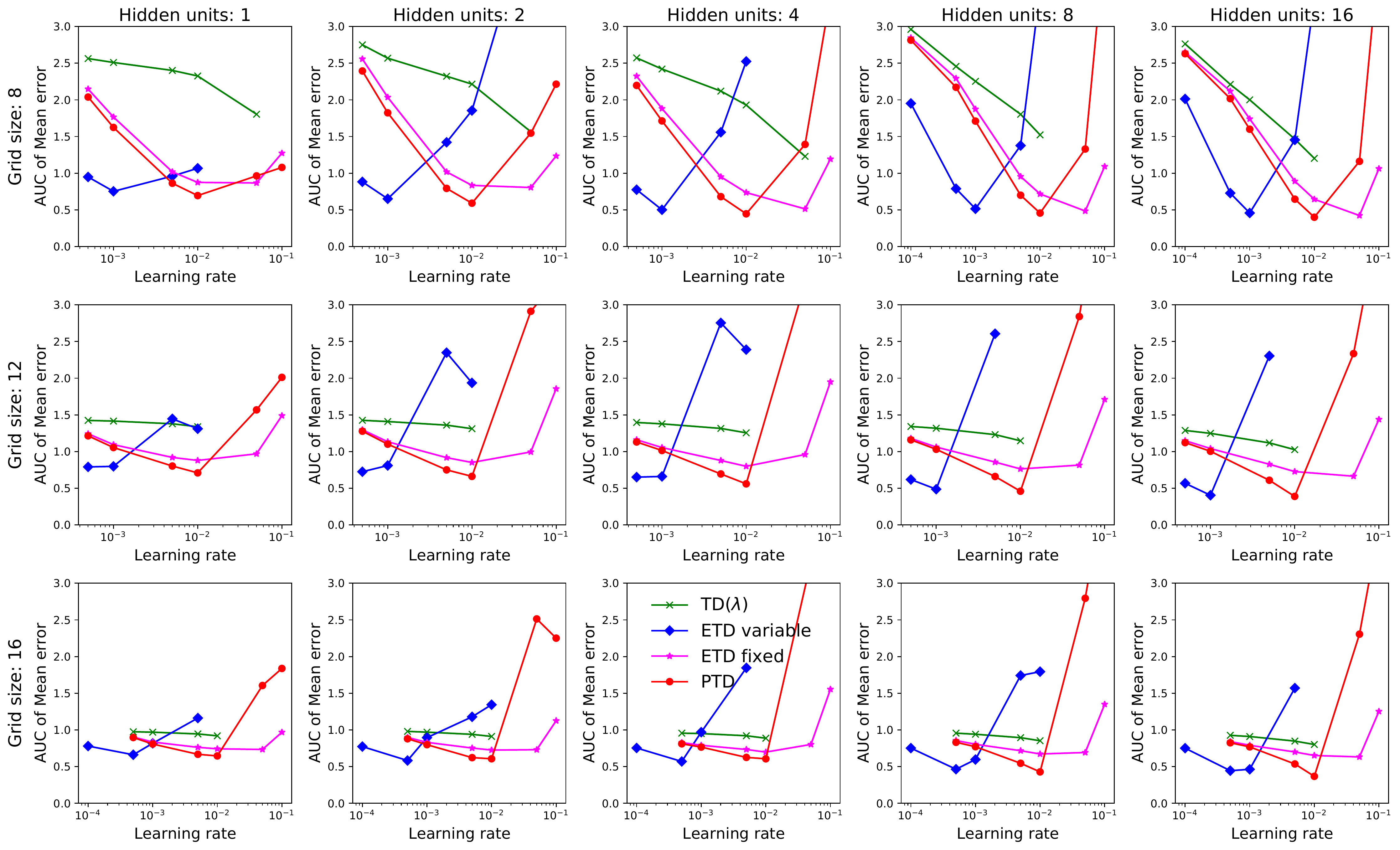}
    \caption{Task 1 backward view hyperparameter tuning curves. AUC of mean error is plotted against learning rates.}
    \label{app_fig:non_linear_bwd_grid1}
\end{figure}

\begin{figure}[H]
    \centering
    \includegraphics[width=\textwidth, height=10cm]{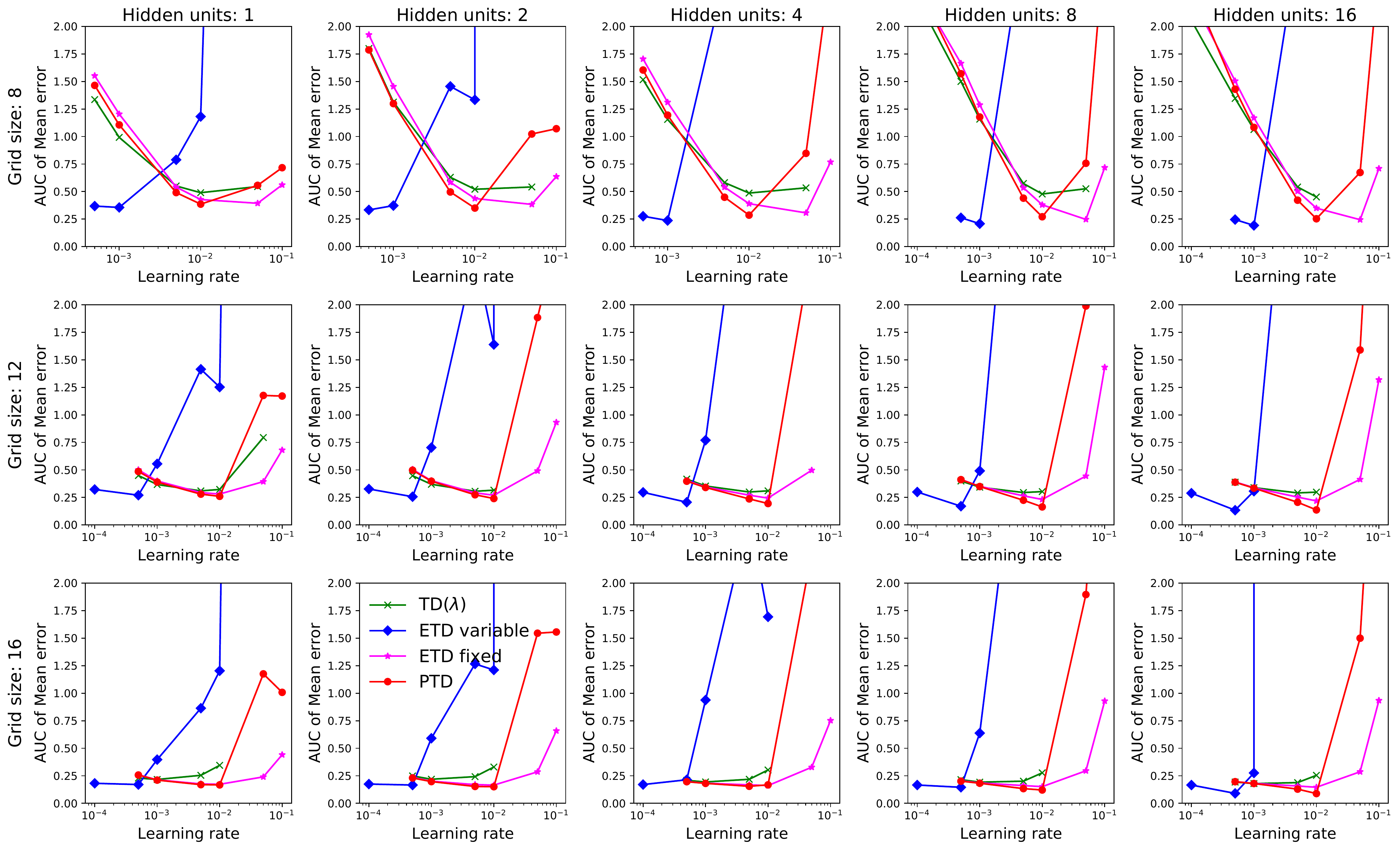}
    \caption{Task 2 backward view hyperparameter tuning curves. AUC of mean error is plotted against learning rates.}
    \label{app_fig:non_linear_bwd_grid2}
\end{figure}

\subsubsection{Control - Actor critic}
\label{app_sec:actor_critic}

\begin{algorithm}[ht]
\caption{Preferential TD: Actor-Critic}
\begin{algorithmic}[1]
    \label{PTD_AC}
    \STATE Input: $\gamma$,$\param$, $\fvec$
    \STATE Initialize: $\w_{act}=0, \w_{cri}=0$
    \STATE Output: $\w_{act}, \w_{cri}$
    \FOR{all episodes}
        \FOR{each step in an episode}
        \STATE Compute one-step TD error $\delta_t$
        \STATE Update $\w_{cri}$ using PTD traces
        \ENDFOR
    \STATE Compute forward view returns $G_t^{\param}$ for each state
    \STATE Update $\w_{act}$ using $\nabla_{\w_{act}} \log{\pol(a | s)} \param(s) (G_t^{\param} - \w_{cri}^T \fvec(s))$ for every state
    \ENDFOR
\end{algorithmic}
\label{algo_ptd_AC}
\end{algorithm}

\begin{figure}[ht]
    \centering
    \begin{subfigure}{0.3\textwidth}
        \centering
        \includegraphics[width=0.9\textwidth, height=0.73\textwidth]{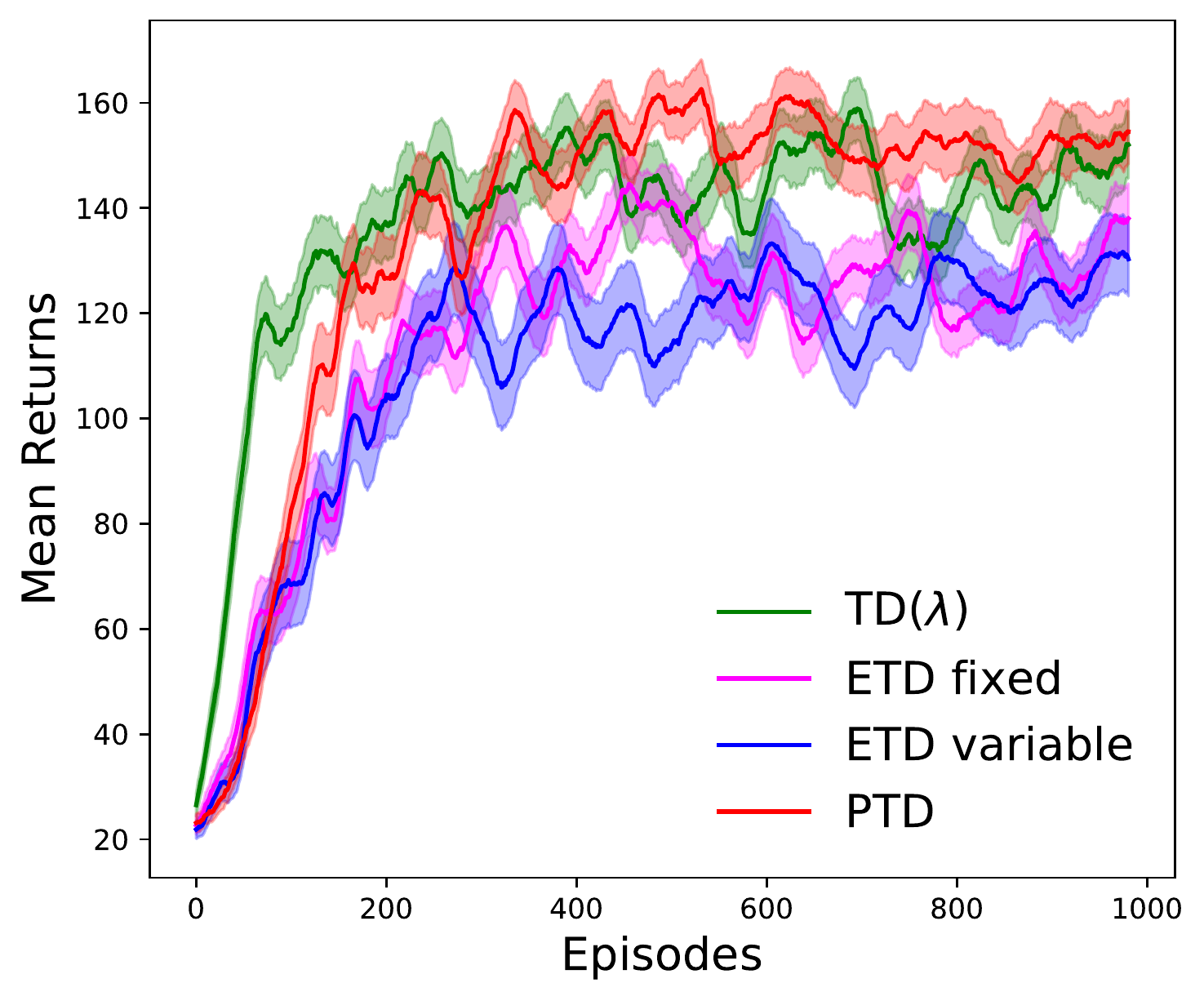}
        \caption{Rewards plot for condition 1}
        \label{app_fig:cartpole_c1_rew}
    \end{subfigure}
    \hfill
    \begin{subfigure}{0.3\textwidth}
        \centering
        \includegraphics[width=0.9\textwidth, height=0.73\textwidth]{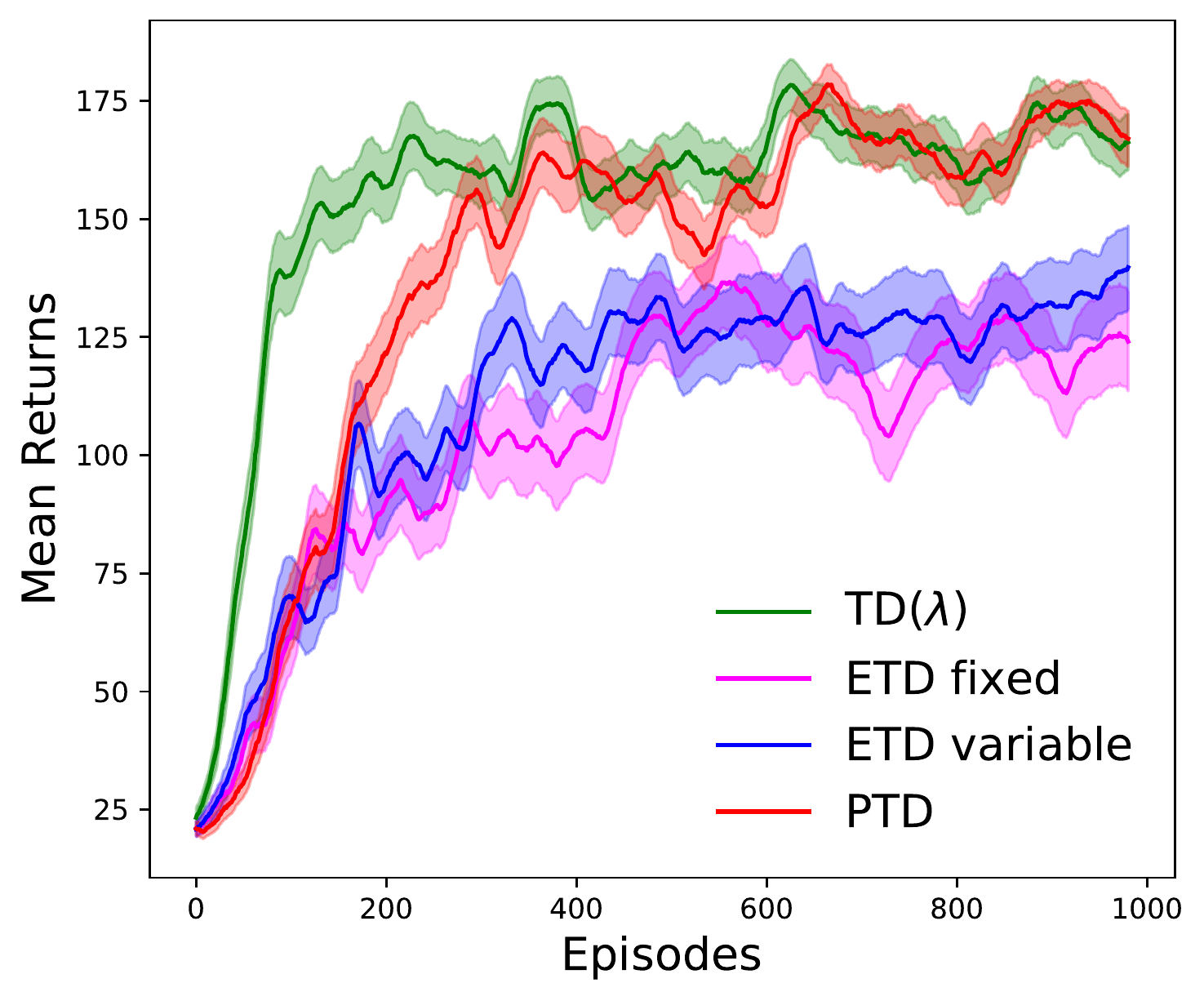}
        \caption{Rewards plot for condition 2}
        \label{app_fig:cartpole_c2_rew}
    \end{subfigure}
    \hfill
    \begin{subfigure}{0.3\textwidth}
        \centering
        \includegraphics[width=0.9\textwidth, height=0.73\textwidth]{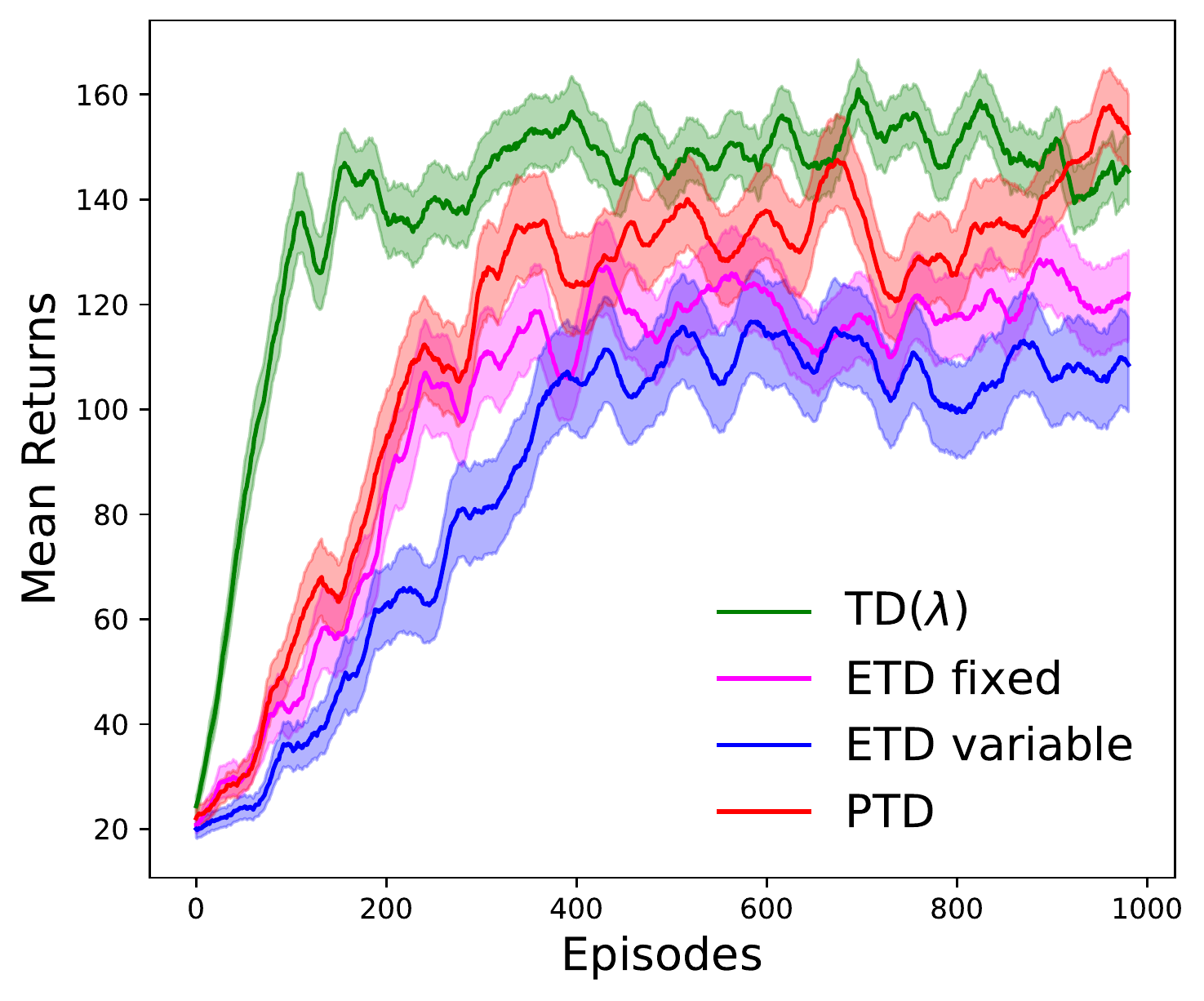}
        \caption{Rewards plot for condition 3}
        \label{app_fig:cartpole_c3_rew}
    \end{subfigure}
    \vfill
    \begin{subfigure}{0.3\textwidth}
        \centering
        \includegraphics[width=0.9\textwidth, height=0.73\textwidth]{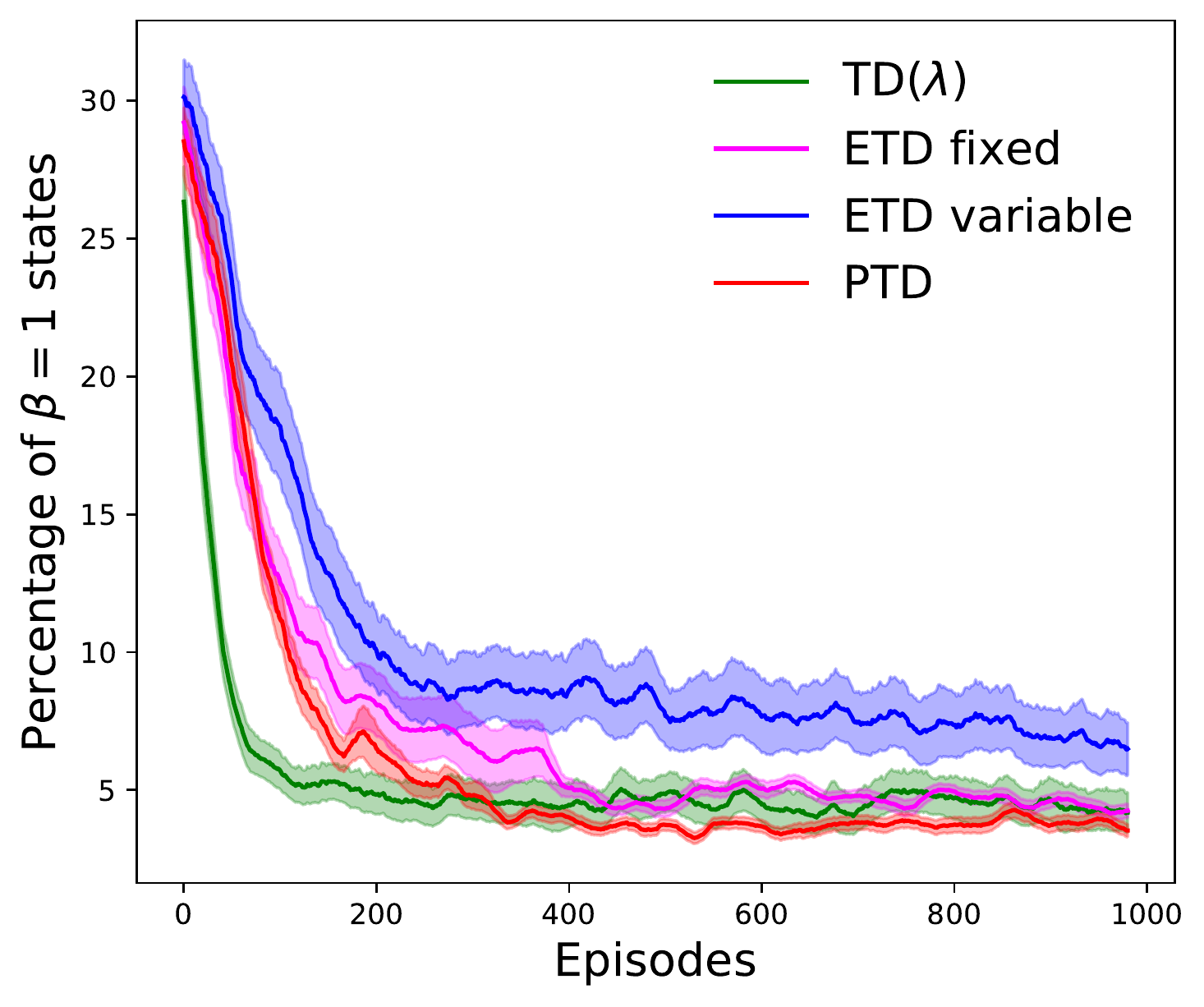}
        \caption{Beta plot for condition 1}
        \label{app_fig:cartpole_c1_beta}
    \end{subfigure}
    \hfill
    \begin{subfigure}{0.3\textwidth}
        \centering
        \includegraphics[width=0.9\textwidth, height=0.73\textwidth]{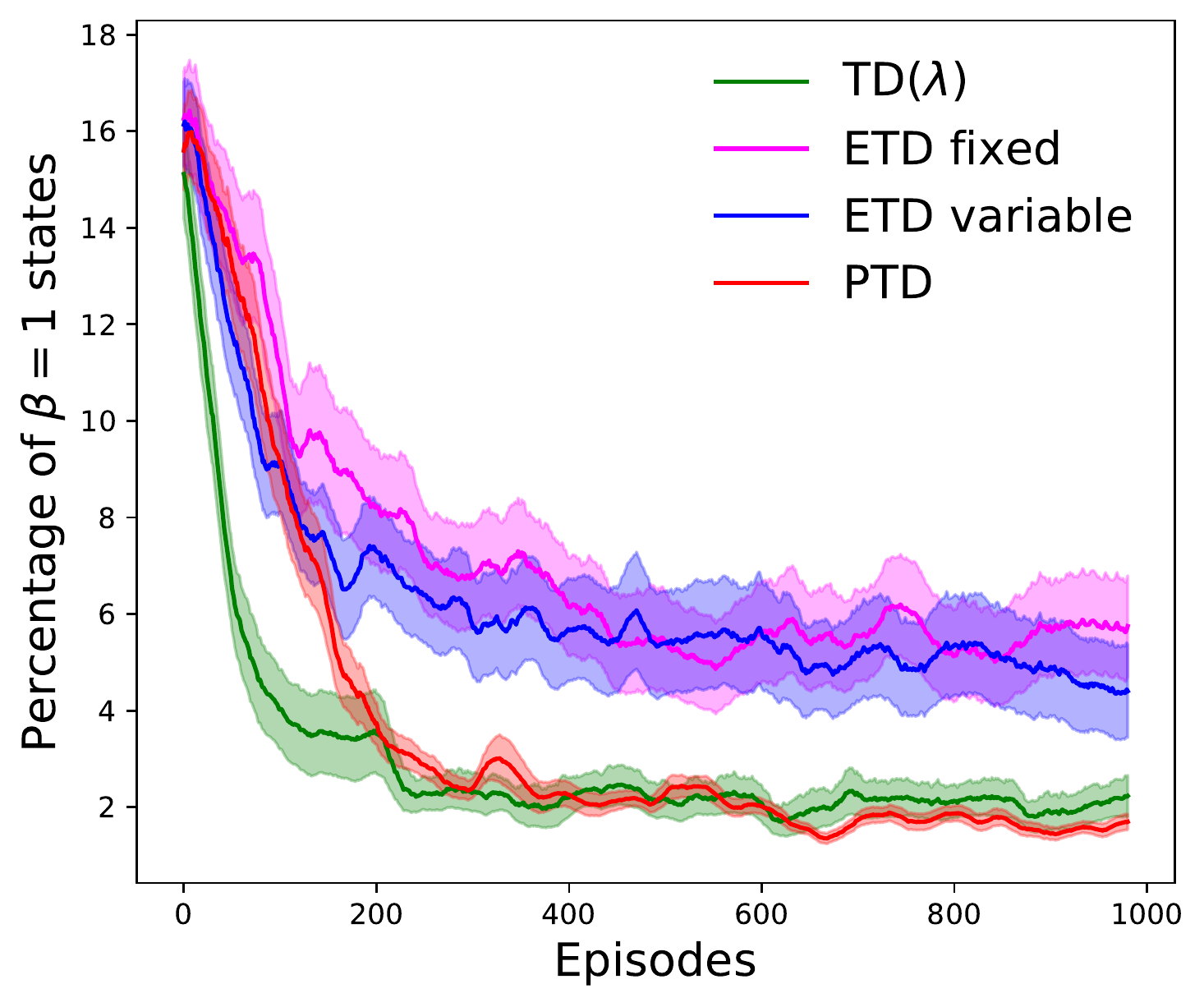}
        \caption{Beta plot for condition 2}
        \label{app_fig:cartpole_c2_beta}
    \end{subfigure}
    \hfill
    \begin{subfigure}{0.3\textwidth}
        \centering
        \includegraphics[width=0.9\textwidth, height=0.73\textwidth]{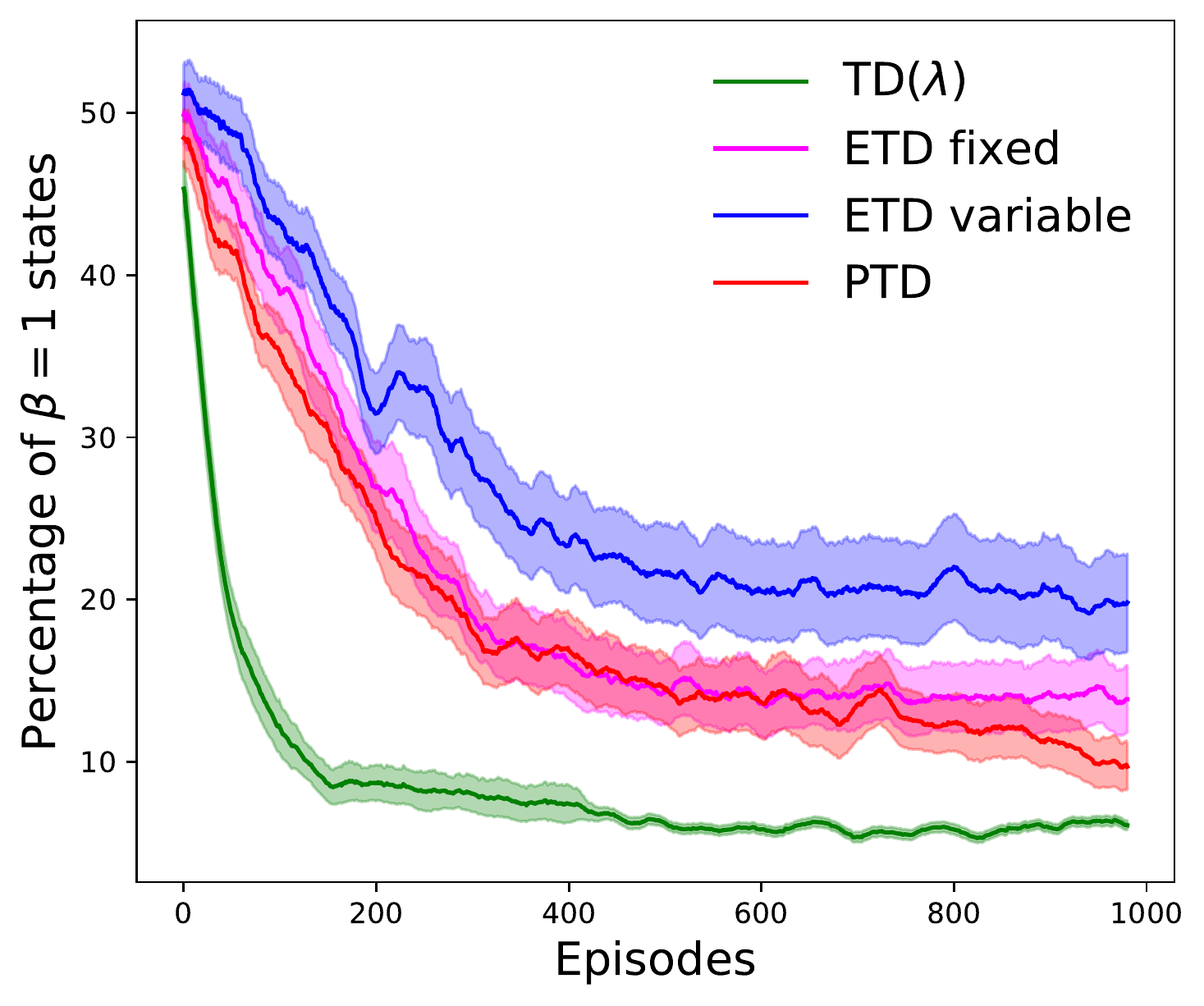}
        \caption{Beta plot for condition 3}
        \label{app_fig:cartpole_c3_beta}
    \end{subfigure}
    \caption{(\textit{Top}) Return is plotted against episodes. (\textit{Bottom}) Percentage of $\param=1$ (or $\lambda=0$) states plotted against episodes. Different columns correspond to different thresholds used to set $\param=1$ (or $\lambda=0$). The plots are smoothed using a moving average window of 20 episodes. The shaded region represents a confidence interval of 50\% on 25 seeds.}
    \label{app_fig:True_AC_cartpole}
\end{figure}
\vspace{-10pt}

\textbf{Task description:} We consider the cartpole task from the openAI gym \citep{brockman2016openai}. The goal of the task is to learn a policy to balance the pole using the algorithm described in \ref{algo_ptd_AC}. We provide this experiment to demonstrate that Preferential TD works (1) when the environment is fully observable, and (2) in control setting.

\textbf{Setup:} We consider actor-critic algorithm for learning the optimal policy. The actor is a single layered neural network with 4 hidden units. The critic is a linear function approximator. The actor is updated at the end of each episode since it is a non-linear function approximator. We use forward view returns to compute the error for updating the actor. The critic is updated at each timestep using eligibility traces. We consider a non-linear approximator for actor as we were not able solve the task using a linear actor on the raw input. We consider the same 4 algorithms for experimentation. For each algorithm we picked the learning rate for the actor from \{0.1, 5e-2, 1e-2, 5e-3, 1e-3\} and for the critic from \{1e-3, 5e-4, 1e-4, 5e-5, 1e-5, 5e-6, 1e-6\}. The best learning rate was decided based on the average returns achieved per episode across 25 seeds and $500$ episodes. The interest for both ETD-variable and ETD-fixed was set to $0.5$. $\param=1$ (or $\lambda=0$) for a state if the difference between the current cart position (or cart velocity) and the cart position (or cart velocity) of the previous $\param=1$ (or $\lambda=0$) state is greater than a threshold. A similar condition was also checked for pole angle with a different threshold. The thresholds for all 3 conditions are provided in table \ref{app_table:cartpole_thresholds}. We experimented with three combinations of threshold values to set $\param$ value to 1. All the states in between two $\param=1$ ($\lambda=0$) states were set to a constant value of $0.1$ ($\lambda=0.9$). We also experimented with $\{0, 0.5\}$ as intermediate values but their performance was relatively poor. All the results are averaged across 25 different seeds. We set the discount factor to $0.99$.

\begin{table}[H]
\caption{Thresholds to determine $\param=1$ state.}
\vspace{10pt}
\centering
\begin{tabular}{ | c | c | c | c | } 
\hline
Condition & Cart position & Cart velocity & Pole angle \\ 
\hline
1 & 0.5 & 0.5 & 0.05 \\ 
2 & 1.0 & 1.0 & 0.1 \\ 
3 & 0.3 & 0.3 & 0.03 \\ 
\hline
\end{tabular}
\label{app_table:cartpole_thresholds}
\end{table}

\textbf{Observations:} The environment is fully observable and therefore we expect to observe the best performance when critic and actor are updated for all the states. TD($\lambda$) updates the values and policy for all the states and therefore it achieves the best performance as seen in Figure \ref{app_fig:True_AC_cartpole}. ETD-fixed and ETD-variable achieves a similar performance. The performances are substantially poorer compared to the other methods. We suspect that it is because of high variance in the updates resulting from trajectory dependent emphasis. This observation is also validated from the beta plots (Figure \ref{app_fig:True_AC_cartpole}), where we observe significantly higher $\param=1$ states. The performances drop further for condition 3, where the percentage of $\param=1$ states are much higher compared to the other conditions. Besides, hyperparameter tuning was found to be challenging as these algorithms were sensitive to them. Preferential TD's performance was as good as TD($\lambda$). However, PTD updates and bootstraps from a small percentage (roughly 5\%) of states. PTD learning is not sharp at the beginning when the estimates are poor. But it catches up quickly and the overall performance is on par with TD($\lambda$). We also observed a much lower variance across seeds in PTD compared to the other algorithms. This experiment validates our hypothesis that a similar performance can be achieved by bootstrapping and updating from a small set of states as opposed to all. Also, this experiment provides preliminary evidence for PTD's usage in fully observable setting and in settings beyond policy evaluation. Although further analysis, both theoretical and empirical, is required to make meaningful conclusions.

\end{document}